\newcommand{\dd}{\mathrm{d}}
\newcommand{\eqdef}{\triangleq}
\newcommand{\ignore}[1]{}
\newcommand{\inner}[2]{ \ensuremath { \left\langle #1, #2 \right\rangle  } }
\newcommand{\integers}{\mathbb{Z}}
\newcommand{\mat}[1]{\mathbf{#1}}
\newcommand{\paren}[1]{\left(#1\right)}
\newcommand{\reals}{\mathbb{R}}
\newcommand{\set}[1]{\left\{#1\right\}}
\newcommand{\shape}{\mathrm{shape}}
\newcommand{\tee}{\mathsf{\tiny T}}
\renewcommand{\vec}[1]{\mathbf{#1}}
\newtheorem{definition}{Definition}
\newtheorem{lemma}{Lemma}
\newtheorem{proposition}{Proposition}
\newtheorem{theorem}{Theorem}
\newtheorem*{lemma*}{Lemma}
\newtheorem*{theorem*}{Theorem}
\renewcommand{\a}{\vec{a}}
\renewcommand{\b}{\vec{b}}
\newcommand{\lep}[1]{\underline{#1}}
\newcommand{\iab}{I_{\mathrm{AutoBound}}}
\newcommand{\iia}{I_{\mathrm{IntervalArithmetic}}}
\newcommand{\intervals}{\mathbb{IR}}
\newcommand{\rep}[1]{\overline{#1}}
\newcommand{\width}{\mathrm{width}}
\newcommand{\zeros}{\mathbf{0}}
\newcommand{\eye}{\mathbb{I}}
\newcommand{\extable}{\Sigma}
\newcommand{\intervalpolynomials}{\mathbb{IP}}
\newcommand{\op}{\mathsf{op}}
\newcommand{\xop}{\mathsf{Op}}
\newcommand{\pg}{\mathcal{T}}
\newcommand{\primitives}{\mathbb{O}}
\newcommand{\varset}{\mathcal{V}}
\newcommand{\eqlist}{\mathcal{E}}
\newcommand{\mA}{\mat{A}}
\newcommand{\mB}{\mat{B}}
\newcommand{\mC}{\mat{C}}
\newcommand{\mU}{\mat{U}}
\newcommand{\mV}{\mat{V}}
\newcommand{\mW}{\mat{W}}
\newcommand{\mX}{\mat{X}}
\newcommand{\mY}{\mat{Y}}
\newcommand{\mZ}{\mat{Z}}
\newcommand{\sA}{\mathcal{A}}
\newcommand{\sB}{\mathcal{B}}
\newcommand{\sP}{\mathcal{P}}
\newcommand{\sQ}{\mathcal{Q}}
\newcommand{\sS}{\mathcal{S}}
\newcommand{\sU}{\mathcal{U}}
\newcommand{\sV}{\mathcal{V}}
\newcommand{\sW}{\mathcal{W}}
\newcommand{\sx}{X}
\newcommand{\sX}{\mathcal{X}}
\newcommand{\sY}{\mathcal{Y}}
\newcommand{\sZ}{\mathcal{Z}}
\newcommand{\va}{\vec{a}}
\newcommand{\vb}{\vec{b}}
\newcommand{\vx}{\vec{x}}
\newcommand{\vy}{\vec{y}}
\newenvironment{varalgorithm}[1]
  {\algorithm}
  {\endalgorithm}
\newenvironment{customlemma}[1]
  {\innercustomlemma}
  {\endinnercustomlemma}
\newenvironment{customproposition}[1]
  {\innercustomproposition}
  {\endinnercustomproposition}
\newenvironment{customthm}[1]
  {\innercustomthm}
  {\endinnercustomthm}
\newcommand{\algrule}[1][.2pt]{\par\vskip.2\baselineskip\hrule height #1\par\vskip.5\baselineskip}
\title{
Automatically Bounding the Taylor Remainder Series: \\
Tighter Bounds and New Applications}
\author{Matthew Streeter and Joshua V. Dillon\\ 
\vspace{-.2cm} \\
\small{{\tt \href{mailto:mstreeter@google.com}{mstreeter@google.com}, \href{mailto:jvdillon@google.com}{jvdillon@google.com}}}}
\date{}
\begin{document}

\maketitle

\begin{abstract}
Taylor polynomials play a central role in optimization and machine learning, in part because they can be easily computed using automatic differentiation.  In many places where Taylor polynomials are used, it would be advantageous to also have a bound on the remainder series, but techniques for generating such bounds automatically are not as mature as automatic differentiation, and their potential has been less fully explored.

In this work, we present a new algorithm for automatically bounding the Taylor remainder series.  In the special case of a scalar function $f: \reals \to \reals$, our algorithm takes as input a reference point $x_0$, trust region $[a, b]$, and integer $k \ge 1$, and returns an interval $I$ such that $f(x) - \sum_{i=0}^{k-1} \frac {1} {i!} f^{(i)}(x_0)  (x - x_0)^i \in I (x - x_0)^k$ for all $x \in [a, b]$.  As in automatic differentiation, the function $f$ is provided to the algorithm in symbolic form, and must be composed of known atomic functions.

At a high level, our algorithm has two steps:
\begin{enumerate}
  \item For a variety of commonly-used functions (e.g., $\exp$, $\log$, $\mathrm{relu}$, $\mathrm{softplus}$), we use recently-developed theory \cite{streeter2023sharp} to derive \emph{sharp} polynomial upper and lower bounds on the Taylor remainder series.
    \item We recursively combine the bounds for the atomic functions using an interval arithmetic variant of Taylor-mode automatic differentiation.   
\end{enumerate}
Our algorithm can make efficient use of machine learning hardware accelerators, %
and we provide an open source implementation in JAX.\footnote{\url{http://github.com/google/autobound}}

We then turn our attention to applications.  Most notably, in a companion paper \cite{streeter2023universal} we use our new machinery to create the first \emph{universal} majorization-minimization optimization algorithms: algorithms that iteratively minimize an arbitrary loss using a majorizer that is derived \emph{automatically}, rather than by hand.  We also show that our automatically-derived bounds can be used for verified global optimization and numerical integration, and to prove sharper versions of Jensen's inequality.
\end{abstract}

\tableofcontents

\chapter{Introduction} \label{sec:intro}

Taylor polynomials are among the most widely used tools in science and engineering.  They play a central role in numerical optimization, where first and second-order Taylor polynomials are used to iteratively minimize a scalar-valued function.  Because Taylor polynomials can be computed using automatic differentiation, numerical optimizers can be easily applied to very complex functions, such as the training losses used to fit modern machine learning models.

However, a Taylor polynomial provides only a local approximation of a function's behavior, with no guarantees on the accuracy of this approximation at points far from the point $x_0$ at which derivatives are computed.
In applications, this lack of error information is often compensated for by the addition of knobs, whose values must be tuned experimentally.  For example, in gradient-based optimization, trial and error is often required to find a value of the learning rate hyperparameter that is large enough to yield sufficiently fast progress but small enough to avoid divergence.  Though automated methods for bounding the Taylor remainder series exist, they are less mature than automatic differentiation, and have been less widely used.

\begin{figure}[h]
\begin{center}
\includegraphics[width=0.4\linewidth]{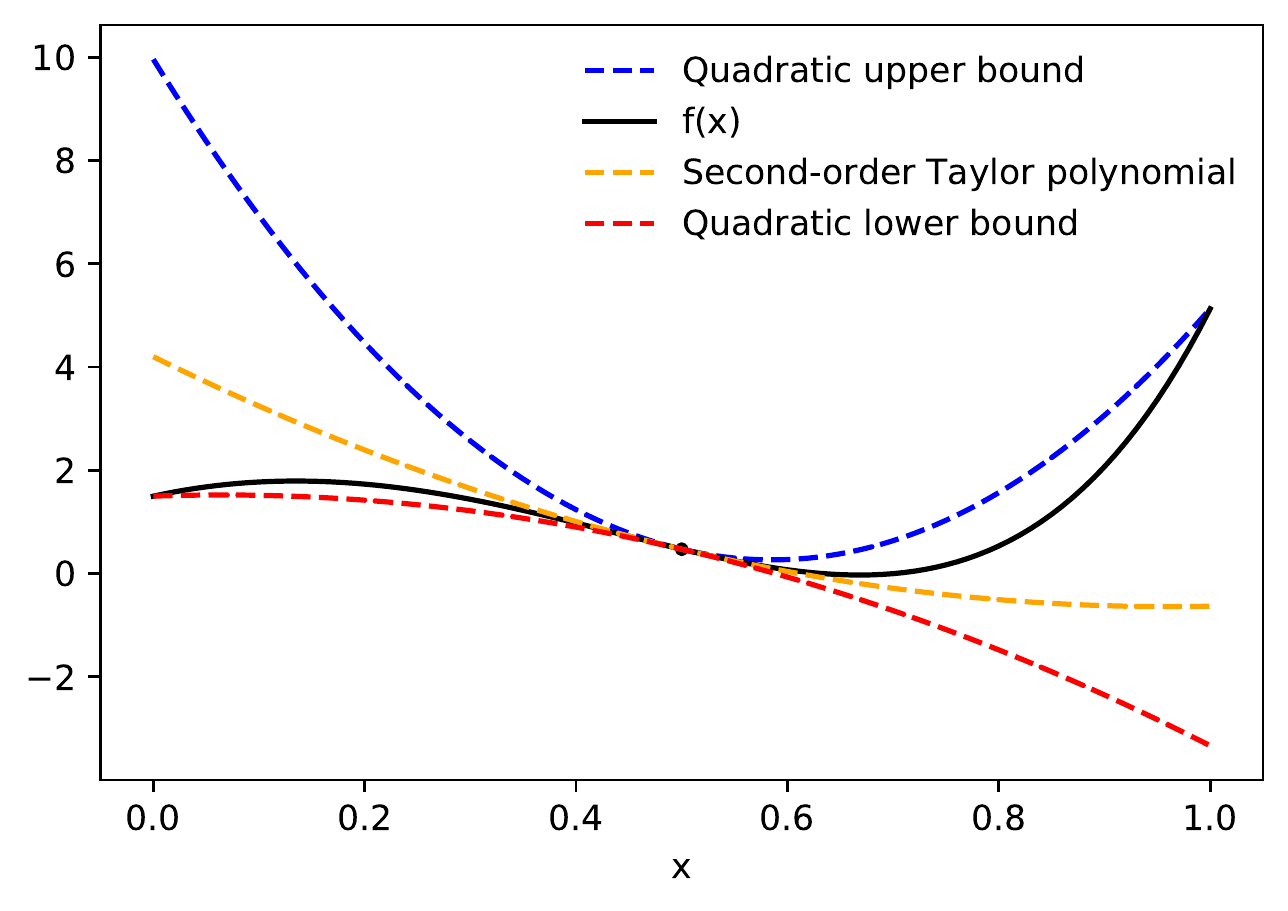}
\caption{Automatically-derived quadratic upper and lower bounds for the function $f(x) = \frac 3 2 \exp(3 x) - 25 x^2$, centered at $x_0 = \frac 1 2$, and valid over the interval $[0, 1]$.}
\label{fig:example_enclosure}
\end{center}
\end{figure}

In this work, we present algorithms for automatically bounding the Taylor remainder series.
Applied to a univariate function $f: \reals \to \reals$, our algorithms will take as input a trust region $[a, b]$, a reference point $x_0 \in [a, b]$, and a polynomial degree $k \ge 1$, and will output an interval $I$ such that
\begin{equation} \label{eq:our_bound}
  f(x) \in \underbrace{ \paren {\sum_{i=0}^{k-1}  \frac {1} {i!} f^{(i)}(x_0) (x - x_0)^i } }_{\text{Degree $k-1$ Taylor polynomial}} + \underbrace{I (x - x_0)^k}_{\text{Remainder bound}} \quad \forall x \in [a, b]
\end{equation}
where $I$ depends on $x_0$, $k$, and $[a, b]$, and is derived from the symbolic expression for $f$.
Note that the right hand side of \eqref{eq:our_bound} is an interval, defined according to the semantics of interval arithmetic,\footnote{The product of an interval $I = [\lep{I}, \rep{I}]$, and a scalar $\alpha$ is defined as $I \alpha \eqdef \set{z \alpha: z \in I} = [ \min \set { \lep{I} \alpha, \rep{I} \alpha} , \max \set { \lep{I} \alpha, \rep{I} \alpha} ]$.} and therefore gives both upper and lower bounds on $f(x)$.
Figure~\ref{fig:example_enclosure} depicts the upper and lower bounds our algorithm produces for the function $f(x) = \frac 3 2 \exp(3 x) - 25 x^2$ at $x_0 = \frac 1 2$, over the trust region $[0, 1]$, with $k = 2$ (which yields quadratic bounds).

We will also be able to compute bounds for vector-variate functions, and the form of these bounds will generalize \eqref{eq:our_bound} in a natural way.

Among other applications, bounds of this form will allow us to automatically derive majorization-minimization (MM) optimization algorithms.  In particular, if we set $k=2$, the upper bound in \eqref{eq:our_bound} is a quadratic majorizer, which can be minimized to obtain a point $x_1$ such that $f(x_1) \le f(x_0)$.  This process can be repeated to further reduce the loss, until a fixed point is reached.

We pay special attention to making the interval $I$ as tight as possible, as this leads to better performance in applications (e.g., faster reduction of the loss in MM optimization).  To encourage others to experiment with our bounds and to find new applications, we provide an open source implementation in JAX.

\section{Outline}

We first present our new approach to automatically bounding the Taylor remainder series.
\begin{itemize}
    \item In Chapter~\ref{chap:arithmetic}, we present the \ref{alg:autobound} algorithm, which recursively combines sharp Taylor polynomial enclosures for atomic univariate functions and elementary binary operations, in order to obtain Taylor polynomial enclosures for arbitrary compositions of these functions.
    \item  Chapter~\ref{chap:bprop} presents the \ref{alg:autoboundprop} algorithm, which generalizes the results of Chapter~\ref{chap:arithmetic} to multivariate functions.  Because the Taylor series for a multivariate function cannot be expressed in standard matrix calculus notation, we must define an appropriate alternative, and this chapter is more notation-heavy than the previous one.  We take care to ensure that \ref{alg:autoboundprop} can make efficient use of modern automatic differentiation frameworks, and discuss our open source implementation in JAX.
\end{itemize}

In Chapter~\ref{chap:applications}, we demonstrate four applications of \ref{alg:autoboundprop}: automatically deriving majorization-minimization optimizers, globally optimizing a loss function via branch and bound, computing upper and lower bounds on the value of an integral, and automatically proving sharper versions of Jensen's inequality.

Chapter~\ref{chap:related_work} discusses related work, and Chapter~\ref{chap:conclusions} summarizes our work and presents conclusions.

\section{An End-to-End Example}

Though presenting the \ref{alg:autoboundprop} algorithm in full generality will require a fair amount of notation, much of the intuition behind the algorithm can be conveyed through a simple example.

Toward this end, suppose we want to come up with quadratic upper and lower bounds on a univariate function $f: \reals \to \reals$, defined by
\begin{equation}
  f(x) = \exp(x^2).
\end{equation}
Further suppose that we require the bounds to be tight at the point $x_0 = 0.2$, and only require these bounds to be valid for $x \in [-0.5, 0.5]$ (the \emph{trust region}).

As in automatic differentiation, we begin by writing the value of $f(x)$ as a sequence of equations, where each equation gives the value of an intermediate variable as an elementary function of one or more intermediate variables:
\begin{align}
  & v_0 = x \nonumber \\
  & v_1 = v_0^2 \nonumber \\
  & v_2 = \exp(v_1).
\end{align}
We then compute the values of the intermediate variables at $x = x_0 = 0.2$:
\begin{align}
  & v_0^{(0)} = 0.2 \nonumber \\
  & v_1^{(0)}= 0.04 \nonumber \\
  & v_2^{(0)} = 1.0408. \label{eq:vals}
\end{align}
Here and throughout the example, numeric values are shown with 5 significant digits.

Next, given our knowledge that $x \in [-0.5, 0.5]$, we compute intervals that enclose the possible values of all the intermediate variables.  Using the rules of interval arithmetic, we obtain
\begin{align}
  & v_0 \in \left [-.5, .5 \right ] \nonumber \\
  & v_1  \in \left [0, .25 \right ] \nonumber \\
  & v_2 \in \left [1, 1.2841 \right ]. \label {eq:trust_regions}
\end{align}

We are now in a position to compute quadratic upper and lower bounds on each intermediate variable, as a function of $x - x_0$.  For $v_0$, we have the trivial equality
\begin{equation} \label {eq:v0}
  v_0 = x_0 + 1 \cdot (x - x_0) = .2 + 1 \cdot \paren { x - .2 }.
\end{equation}
Note that this equality provides (tight) upper and lower bounds on $v_0$, and that these bounds are affine (and hence trivially quadratic) in terms of $x - x_0$.

To obtain quadratic upper and lower bounds for $v_1 = v_0^2$, we square the polynomial on the right hand side of \eqref{eq:v0}, to obtain
\begin{equation} \label{eq:v1}
  v_1 = 0.04 + 0.4 \cdot \paren { x - .2 } + 1 \cdot \paren { x - .2 }^2.
\end{equation}
We now consider $v_2 = \exp(v_1)$.  To bound $v_2$ by quadratics, we require quadratic bounds on the $\exp$ function.  In a companion paper \cite{streeter2023sharp} we developed theory that lets us produce sharp polynomial bounds of arbitrary degree for $\exp$ and other functions.  Applying these theory allows us to show that for $v_1^{(0)} = 0.04$ (calculated in \eqref{eq:vals}) and $v_1 \in [0, .25]$ (calculated in \eqref{eq:trust_regions}),
\begin{equation}
  \exp(v_1)
   \in 1.0408 + 1.0408 \cdot \paren{v_1 - v_1^{(0)}} + \left [ 0.51353,  0.55883 \right ] \cdot \paren { v_1 - v_1^{(0)} }^2.
\end{equation}
Plugging in the expression for $v_1$ from \eqref{eq:v1}, plugging in $v_1^{(0)} = 0.04$, and expanding gives a quartic polynomial bound in terms of $x - x_0$:
\begin{align}
  \exp(v_1)
  & \in 1.0408 + 0.41632 \cdot (x - x_0) + \left [ 1.1230, 1.1302 \right ] (x - x_0)^2  \nonumber \\
  & \quad + \left [ 0.41083, 0.44706 \right ] (x - x_0)^3 + \left [ 0.51353, 0.55883 \right ] (x - x_0)^4.
\end{align}
To obtain a bound that is quadratic (rather than quartic) in terms of $x - x_0$, we again use knowledge of the fact that $x \in [-.5, .5]$ to infer that $x - x_0 \in [-.7, .3]$ (recall $x_0 = 0.2$), and therefore $(x - x_0)^3 \in [-.7, .3] (x - x_0)^2$, while $(x - x_0)^4 \in [0, .49] (x - x_0)^2$.  Making these substitutions, and collecting terms yields the quadratic bounds:
\begin{equation}
  \exp(v_1) \in 1.0408 + 0.41632 \cdot (x - x_0) + \left [0.81728 , 1.5382 \right ] (x - x_0)^2.
\end{equation}

Figure~\ref{fig:exp_x2_enclosure} plots the function $f(x) = \exp(x^2)$, together with the quadratic upper and lower bounds we have just derived.

\begin{figure}[h]
\begin{center}
\includegraphics[width=0.4\linewidth]{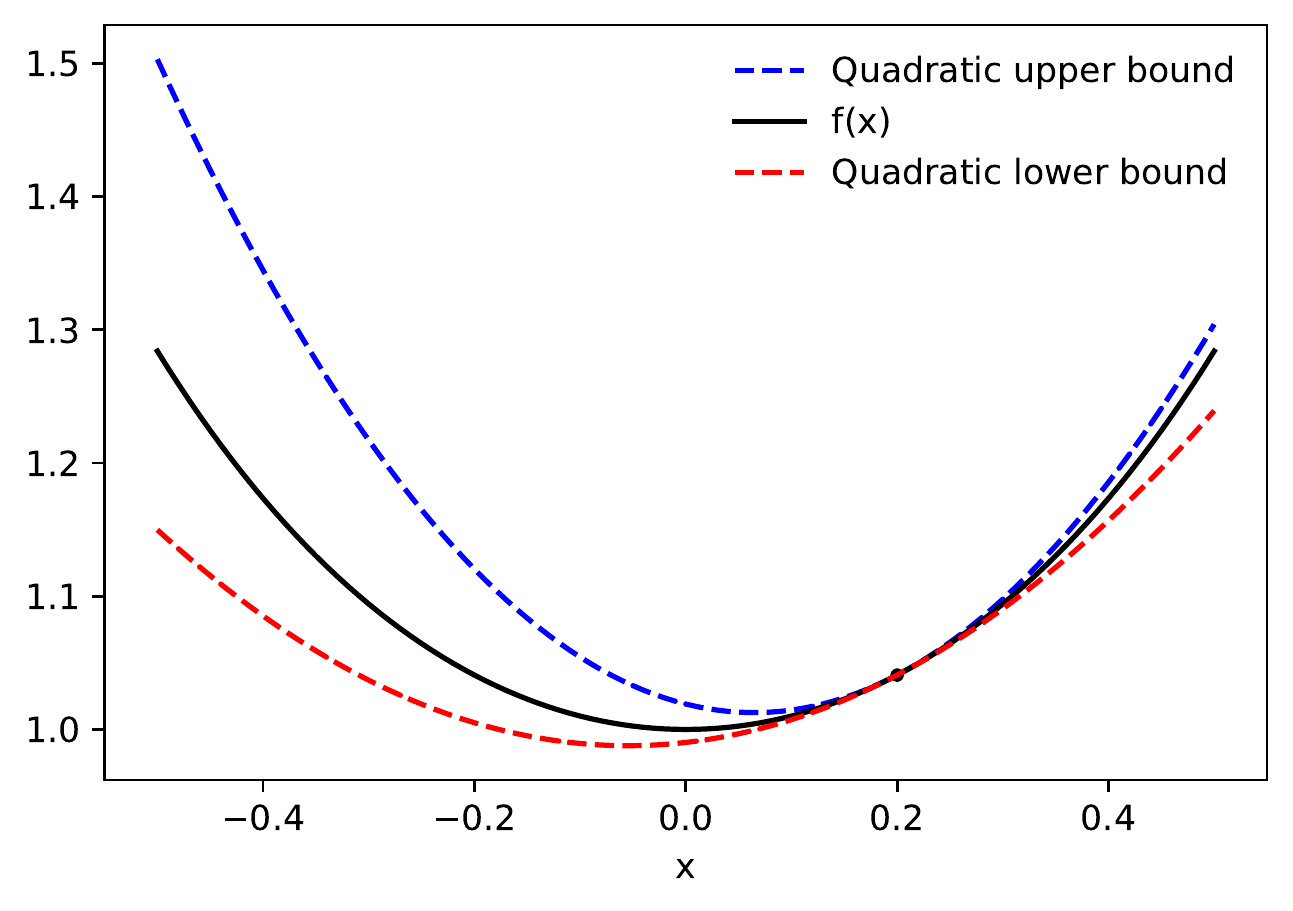}
\caption{Quadratic upper and lower bounds for the function $f(x) = \exp(x^2)$, centered at $x_0 = 0.2$, and valid over the interval $[-0.5, 0.5]$.}
\label{fig:exp_x2_enclosure}
\end{center}
\end{figure}

\chapter{Taylor Polynomial Enclosures for One-Dimensional Functions} \label{chap:arithmetic}

The goal of this work is to automatically derive polynomial upper and lower bounds of the form given in \eqref{eq:our_bound}, which we call \emph{Taylor polynomial enclosures}.  In this chapter, we will consider the problem of deriving Taylor polynomial enclosures for one-dimensional functions that can be written in terms of atomic univariate functions, plus the elementary binary operations of addition, subtraction, multiplication, division, and exponentiation.

To do so, we will make use of known \emph{sharp} Taylor polynomial enclosures for various elementary functions, which we derived in a companion paper \cite{streeter2023sharp}.
We would like to combine these known enclosures to obtain enclosures for more complex functions.  As an example, having already derived sharp Taylor polynomial enclosures for $\exp$, we would like to be able to derive a Taylor polynomial enclosure of the function
\begin{equation} \label{eq:example}
  f(x) = \frac { \exp(x) } {2 + x}.
\end{equation}
To do so, we will see that it suffices to come up with rules for performing various operations on \emph{interval polynomials}.  We develop such rules in \S\ref{sec:ab_algorithm}, after first reviewing relevant background knowledge.

\section{Background: Taylor Mode Automatic Differentiation} \label{sec:univariate_background}

The algorithm we will develop in this chapter can be thought of as a generalization of Taylor-mode automatic differentiation, and to understand our algorithm it may be helpful to first review this method.

Applied to a univariate function $f: \reals \to \reals$, Taylor mode automatic differentiation computes the coefficients of the degree $k$ Taylor polynomial of $f$ at some point $x_0 \in \reals$, which is equivalent to computing $f^{(i)}(x_0)$ for $i \in \set{0, 1, \ldots, k}$.

One can think of Taylor mode automatic differentiation as an evaluation of $f$ using overloaded operators whose input and output are Taylor polynomials, rather than scalars.  A key insight behind this approach is that the degree $k$ Taylor polynomial for a composite function $f(x) = g(h(x))$ at $x_0$ can be obtained by first computing the degree $k$ Taylor polynomial for $h$ at $x_0$, then plugging it into the degree $k$ Taylor polynomial for $g$ at $h(x_0)$, then expanding the result and dropping all terms of degree $> k$.  Thus, if $f = \sigma_1 \circ \sigma_2 \circ \ldots \circ \sigma_n$, where each $\sigma_i$ is an atomic function (whose Taylor polynomial can be computed in closed form), we can compute a Taylor polynomial for $f$ recursively, at each step plugging the Taylor polynomial for $\sigma_{i+1} \circ \sigma_{i+2} \circ \ldots \circ \sigma_n$ into the Taylor polynomial for $\sigma_i$.  Thus can be thought of as evaluating $f$ on the identity polynomial, using an extended version of each $\sigma_i$ that inputs and outputs Taylor polynomials rather than scalars.  This approach can be extended to functions composed of operations that take two or more arguments.

For a formal treatment of Taylor mode automatic differentiation and additional discussion, see \cite{bettencourt2019taylor,griewank2008evaluating}.

\section{Definitions and Notation} \label {sec:ab_definitions}

We denote the set of closed real intervals by $\intervals \eqdef \set { [a, b]: a, b \in \reals, a \le b }$.  Mathematical expressions containing intervals represent sets according to the usual rules of interval arithmetic \cite{moore1966interval}.  In particular, the product of an interval $I \in \intervals$ and scalar $\alpha \in \reals$ is defined as $I \alpha \eqdef  \set{z \alpha: z \in I}$, and $\alpha I$ is defined identically.

\begin{definition} [Interval polynomial]
A \emph{degree $k$ interval polynomial} is a function $F: \reals \to \intervals$, of the form:
\[
  F(z) = \sum_{i=0}^k F_{[i]} z^k
\]
where each coefficient $F_{[i]} \in \intervals \cup \reals$ is either an interval or a scalar.
\end{definition}

For any interval polynomial $F$, we use $F_{[i]}$ to denote its $i$th coefficient (starting at $i = 0$, so that $F(z) = \sum_{i=0}^k F_{[i]} z^i$, where $k$ is the degree of $F$).  We use $\intervalpolynomials(k)$ to denote the set of degree $k$ interval polynomials.

\begin{definition} [Interval polynomial enclosure] \label{def:ipif}
A degree $k$ interval polynomial $F \in \intervalpolynomials(k)$ is an \emph{enclosure} for a function $f: \reals \to \reals$ over an interval $Z \in \intervals$ if
\[
  f(z) \in F(z) \quad \forall z \in Z.
\]
\end{definition}
As an example, the interval polynomial $F(z) = z + [0, 1] z^2$ is a degree 2 interval polynomial enclosure of the function $f(z) = z + \sin(z) z^2$ over the interval $Z = [0, \pi]$. 

Our goal in this work is to derive a \emph{Taylor polynomial enclosure}, which can be thought of a particular kind of interval polynomial enclosure in which all but the last coefficient is a scalar.
\begin{definition} [Taylor polynomial enclosure] \label{def:tpe}
A degree $k$ \emph{Taylor polynomial enclosure} of a function $f: \reals \to \reals$ at a point $x_0 \in \reals$ over an interval $[a, b]$ is an interval polynomial of the form
\[
  F(x) = \underbrace{ \paren {\sum_{i=0}^{k-1}  \frac {1} {i!} f^{(i)}(x_0) (x - x_0)^i } }_{\text{Degree $k-1$ Taylor polynomial}} + \underbrace{I (x - x_0)^k}_{\text{Remainder bound}}
\]
where $I \in \intervals$ is an interval, and $f(x) \in F(x)$ for all $x \in [a, b]$.
\end{definition}
Observe that Taylor polynomial enclosures can be mapped to interval polynomial enclosures via the change of variable $z = x - x_0$.
Although we are mainly interested in deriving Taylor polynomial enclosures, we will find it convenient to work with interval polynomial enclosures when defining our recursive algorithm.

The key to our algorithm will be to define, for each atomic scalar-valued function of interest, an \emph{interval polynomial extension} that inputs and outputs interval polynomials rather than scalars.  The formal definition of an interval polynomial extension ends up being somewhat technical, but is immediately followed by a concrete example designed to make it more clear.

\begin{definition} [Interval polynomial extension] \label{def:ip_extension}
For integers $n > 0$ and $k \ge 0$, and an $n$-argument scalar-valued function $f: \reals^n \to \reals$, the function $F: \intervalpolynomials(k)^n \to \intervalpolynomials(k)$ is an \emph{interval polynomial extension} of $f$ over trust regions $Y_1, Y_2, \ldots, Y_n, Z \in \intervals$ iff.\ the following condition holds: For any $z \in Z$, any interval polynomials $P_1, P_2, \ldots, P_n \in \intervalpolynomials(k)$, and any $y_1, y_2, \ldots, y_n \in \reals$ with $y_i \in Y_i$ and $y_i \in P_i(z)$ for all $i$, we have
\[
  f(y_1, y_2, \ldots, y_n) \in F(P_1, P_2, \ldots, P_n)(z).
\]
\end{definition}

\noindent{\bf Example.}  Definition~\ref{def:ip_extension} is rather subtle, and in particular it may not be obvious why it should require trust regions for both the arguments of $f$ and the independent variable $z$.
To illustrate the definition, consider the function $f: \reals \to \reals$, defined by
\begin{equation}
  f(y) \eqdef \begin{cases}
  y^2 & \mbox {if } y \in [-1, 1] \\
  |y| & \mbox {otherwise.}
\end{cases}
\end{equation}
Let $F: \intervalpolynomials(2) \to \intervalpolynomials(2)$ be defined by
\begin{equation}
  F(A) = A_{[0]}^2 + 2 A_{[0]} A_{[1]} z +  \paren { A_{[1]}^2 + 2 A_{[1]} A_{[2]} [0, .5] + A_{[2]}^2 [0, .25] } z^2.
\end{equation}
Then, $F$ is a degree 2 interval polynomial extension of $f$ over the trust regions $Y = [-1, 1]$ and $Z = [0, .5]$.
This is true because, for any $z \in Z$, any degree 2 interval polynomial $A(z) \eqdef A_{[0]} + A_{[1]} z + A_{[2]} z^2$, and any $y \in Y$ with $y \in A(z)$,
\begin{align}
  f(y) & = y^2  & \mbox {because } y \in [-1, 1] \nonumber \\
  & \in ( A_{[0]} + A_{[1]} z + A_{[2]} z^2 )^2 & \mbox {because } y \in A(z) \nonumber \\
  & \subseteq A_{[0]}^2 + 2 A_{[0]} A_{[1]} z + z^2 \paren {A_{[1]}^2 + 2 A_{[1]} A_{[2]} z + A_{[2]}^2 z^2 } \nonumber \\
  & \subseteq  A_{[0]}^2 + 2 A_{[0]} A_{[1]} +  z^2 \paren {A_{[1]}^2 + 2 A_{[1]} A_{[2]} [0, .5] + A_{[2]}^2 [0, .25] } & \mbox {because } z \in [0, .5] \nonumber \\
  & = F(A).
\end{align}

Given an arbitrary function $f$ which is composed of atomic functions for which we have derived interval polynomial extensions, we will be able to recursively compute a Taylor polynomial enclosure of $f$ by evaluating $f$ on the identity polynomial using the extended functions.  The validity of this approach is captured in the following proposition, which follows immediately from the definitions given so far.

\begin{proposition} \label{prop:compose_enclosures}
Consider a function $f: \reals \to \reals$ of the form $f(z) = \op(a_1(z), a_2(z), \ldots, a_n(z))$, and an arbitrary trust region $Z \in \intervals$, and let the following conditions hold:
\begin{enumerate}
  \item For all $i$, we have $a_i(z) \in Y_i$ for all $z \in Z$.
  \item For all $i$, $P_i$ is an interval polynomial enclosure of $a_i$ over $Z$.
  \item $\xop$ is an interval polynomial extension of $\op$ over $Y_1, Y_2, \ldots, Y_n$ and $Z$.
\end{enumerate}
Then, the interval polynomial $F \eqdef \xop(P_1, P_2, \ldots P_n)$ is enclosure of $f$ over $Z$.
\end{proposition}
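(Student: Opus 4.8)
The plan is to prove this by a direct definition-chase, exploiting the fact that the three hypotheses were designed precisely to line up with the premises of the interval polynomial extension property (Definition~\ref{def:ip_extension}). Since the conclusion is the enclosure statement $f(z) \in F(z)$ for all $z \in Z$ (Definition~\ref{def:ipif}), I would begin by fixing an arbitrary $z \in Z$ and reducing the goal to a single membership check at that point.

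First I would introduce the scalars $y_i \eqdef a_i(z)$ for each $i \in \set{1, 2, \ldots, n}$, so that by construction $f(z) = \op(a_1(z), \ldots, a_n(z)) = \op(y_1, \ldots, y_n)$. The next step is to verify the two hypotheses that the extension property requires of these scalars. Condition~1 gives $y_i = a_i(z) \in Y_i$ immediately, since $z \in Z$. Condition~2 states that $P_i$ is an enclosure of $a_i$ over $Z$, which by Definition~\ref{def:ipif} unpacks to $a_i(z) \in P_i(z)$ for every $z \in Z$; hence $y_i \in P_i(z)$. Thus for our fixed $z$ the scalars $y_1, \ldots, y_n$ satisfy exactly the conditions $y_i \in Y_i$ and $y_i \in P_i(z)$ demanded by Definition~\ref{def:ip_extension}.

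With these premises in hand, I would invoke Condition~3: because $\xop$ is an interval polynomial extension of $\op$ over $Y_1, \ldots, Y_n$ and $Z$, Definition~\ref{def:ip_extension} yields $\op(y_1, \ldots, y_n) \in \xop(P_1, \ldots, P_n)(z) = F(z)$. Substituting $\op(y_1, \ldots, y_n) = f(z)$ gives $f(z) \in F(z)$. Since $z \in Z$ was arbitrary, this establishes that $F$ is an enclosure of $f$ over $Z$, as required.

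Honestly, there is no substantive obstacle here: the argument is a pure instantiation of Definition~\ref{def:ip_extension} with the witnesses $y_i = a_i(z)$, and the proposition was flagged in the text as following immediately from the definitions. The only point requiring care is bookkeeping---making sure the quantifier structure of the extension definition is respected, in particular that the single $z$ is shared across all three hypotheses, and that the interval polynomials $P_i$ appearing in the extension are the very same enclosures supplied by Condition~2. No interval-arithmetic computation or inductive argument is needed.
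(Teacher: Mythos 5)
Your proof is correct and is essentially identical to the paper's own argument: both fix an arbitrary $z \in Z$, set $y_i = a_i(z)$, use Conditions~1 and~2 to verify the premises $y_i \in Y_i$ and $y_i \in P_i(z)$, and then invoke Condition~3 via Definition~\ref{def:ip_extension} to conclude $f(z) \in F(z)$. No differences worth noting.
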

\begin{proof}
Consider some arbitrary $z \in Z$, and let $y_i = a_i(z)$.  By conditions 1 and 2, $y_i \in Y_i$ and $y_i \in P_i(z)$.  Thus, using condition 3 and Definition~\ref{def:ip_extension}:
\begin{equation}
  f(z) = \op(y_1, y_2, \ldots, y_n) \in \xop(P_1, P_2, \ldots P_n)(z) = F(z).
\end{equation}
Because $z \in Z$ was arbitrary, it follows that $F$ is an interval polynomial enclosure of $f$ over $Z$.
\end{proof}

\section{The AutoBound1D Algorithm} \label{sec:ab_algorithm}

We now present an new algorithm, \ref{alg:autobound}, for deriving a Taylor polynomial enclosure of an arbitrary univariate function $f: \reals \to \reals$ that can be written in terms of commonly-used atomic functions.  In Chapter~\ref{chap:bprop}, we generalize this algorithm to handle multivariate functions.

At a high level, \ref{alg:autobound} is similar to the algorithm defined in previous work on Taylor models \cite{makino1996remainder,makino1999efficient,makino2001higher,makino2003taylor},  in that both algorithms compose interval polynomials in a manner analogous to Taylor mode automatic differentiation.  However, there are several critical differences:

\begin{enumerate}
  \item \ref{alg:autobound} produces bounds of the form $R_{k-1}(x; f, x_0) \in I (x - x_0)^{k}$, in contrast to Taylor model bounds of the form $R_{k-1}(x; f, x_0) \in I$.  This makes it possible for the bounds derived by \ref{alg:autobound} to be used as the basis of a majorization-minimization algorithm (see \cite{streeter2023universal}), and has potential advantages in other applications as well (see Chapter~\ref{chap:applications}). %
  \item Where available, we use \emph{sharp} Taylor polynomial enclosures for the atomic functions (obtained using the theory developed in \cite{streeter2023sharp}), whereas previous work on Taylor models made use of looser enclosures based on derivative bounds.
  \item When composing two interval polynomials, we use a dedicated rule that produces tighter bounds than the simpler approach of repeatedly applying an interval polynomial multiplication rule.
\end{enumerate}

The remainder of this section is organized as follows.  In \S\ref{sec:interval_polynomial_extensions} we define interval polynomial extensions of various commonly used functions.  In \S\ref{sec:ab_pseudocode} we present pseudocode for \ref{alg:autobound}, prove its correctness, and give a worked example.

\subsection{Interval Polynomial Extensions of Atomic Functions} \label{sec:interval_polynomial_extensions}

The \ref{alg:autobound} algorithm will derive a Taylor polynomial enclosure of a function $f: \reals \to \reals$ by evaluating $f$ using \emph{interval polynomial extensions} of the atomic functions of which $f$ is composed.  As formalized in Definition~\ref{def:ip_extension}, these extended functions input and output interval polynomials of some fixed degree $k$, with the output interval polynomial satisfying a natural inclusion property that guarantees correctness of the algorithm.

In the following subsections, we define interval polynomial extensions of scalar arithmetic operations (such as additional and multiplication) and of arbitrary non-linear scalar functions (such as $\exp$ or $\log$).

To do so, we will use the following basic properties of interval arithmetic \cite{moore1966interval}, which we state without proof.  
For intervals $X, Y, Z \in \intervals$, and a scalar $\alpha \in \reals$, we have
\begin{equation} \label{eq:interval_sums_commutative}
  X + Y = Y + X
\end{equation}
\begin{equation} \label{eq:scalar_interval_product_distributive}
  \alpha(X + Y) = \alpha X + \alpha Y
\end{equation}
\begin{equation} \label{eq:interval_product_subdistributive}
  X (Y + Z) \subseteq X Y + X Z.
\end{equation}
Note that \eqref{eq:interval_product_subdistributive} does not hold with equality.  As a counterexample, if $X = [-3, 3]$, $Y = [1, 1]$, and $Z = [-1, -1]$, then $X(Y + Z) = [0, 0]$, while $X Y + X Z = [-3, 3] + [-3, 3] = [-6, 6]$.

In the subsections that follow, we state our extended functions in terms of two generic interval polynomials:
\begin{equation}
  A(z) = \sum_{i=1}^k A_{[i]} z^i, \quad \quad B(z) = \sum_{i=1}^k B_{[i]} z^i.
\end{equation}

\newcommand{\rangebound}{\ensuremath{\mathrm{RangeBound}}}

\subsubsection{Background: Bounding the Range of an Interval Polynomial}

Our interval polynomial extensions will be defined in terms of a function $\rangebound$, which bounds the range of an interval polynomial over a given trust region.  That is, for any interval polynomial $P$ and interval $Z \in \intervals$,
\begin{equation} \label{eq:range_bound}
	z \in Z \implies P(z) \subseteq \rangebound(P, Z).
\end{equation}
A simple way to bound the range of an interval polynomial is to evaluate the interval polynomial at $Z$ using interval arithmetic, defining
\begin{equation} \label{eq:default_range_bound}
	\rangebound(P, Z) \eqdef \sum_{i=0}^{\mathrm{degree}(P)} P_{[i]} Z^i.
\end{equation}
However, there are a variety of other possibilities which make different tradeoffs between tightness and computation.  See \cite{rokne1977bounds} for a numerical comparison of several different approaches, and see \cite{makino2005verified} for discussion of two approaches that have proven useful in global optimization algorithms.

\subsubsection{Addition, Subtraction, and Multiplication}

For interval polynomials $A(z) = \sum_{i=1}^k A_{[i]} z^i$ and $B(z) = \sum_{i=1}^k B_{[i]} z^i $, we have the following equalities:
\begin{equation} \label {eq:ipoly_sum}
  A(z) + B(z) = \sum_{i=0}^k (A_{[i]} + B_{[i]}) z^k %
\end{equation}
\begin{equation} \label {eq:ipoly_difference}
  A(z) - B(z) = \sum_{i=0}^k (A_{[i]} - B_{[i]}) z^k. %
\end{equation}
These equalities follow from the commutativity property \eqref{eq:interval_sums_commutative} and the distributivity property \eqref{eq:scalar_interval_product_distributive}.  Because the right hand sides are degree $k$ interval polynomials, these equalities immediately define interval polynomial extensions of the scalar addition and subtraction operations (over any trust regions).

For scalar multiplication, we first use the sub-distributivity property \eqref{eq:interval_product_subdistributive} to write
\begin{equation} \label{eq:ipoly_product}
  A(z) B(z) \subseteq \sum_{i=0}^k \sum_{j=0}^k A_{[i]} B_{[j]} z^{i+j}.  %
\end{equation}
To define an interval polynomial extension of scalar multiplication, we must enclose the degree $2 k$ interval polynomial on the right hand side of \eqref{eq:ipoly_product} by a degree $k$ polynomial.  To do so, we collect the terms of degree $\ge k$ in a single polynomial, then use a $\rangebound$ function satisfying \eqref{eq:range_bound} to reduce the degree of this polynomial to $k$.  Letting $S \eqdef \set{(i, j): i, j \in \set{0, 1, \ldots , k}}$, for $z \in Z$ we have:
\begin{align}
  A(z) B(z)
  & \subseteq \paren { \sum_{(i, j) \in S: i+j < k} A_{[i]} B_{[j]} z^{i+j} } + z^k \paren { \sum_{(i, j) \in S: i+j \ge k} A_{[i]} B_{[j]} z^{i+j-k}  } \nonumber \\
  & \subseteq \paren { \sum_{(i, j) \in S: i+j < k} A_{[i]} B_{[j]} z^{i+j} } + z^k \cdot \rangebound\paren { \sum_{(i, j) \in S: i+j \ge k} A_{[i]} B_{[j]} z^{i+j-k} , Z }. \label{eq:tipoly_product}
\end{align}
Equation \eqref{eq:tipoly_product} defines an interval polynomial extension of scalar multiplication for any trust regions $Y_1, Y_2$ and $Z$, with the result depending only on $Z$.

We note that equations \eqref{eq:ipoly_sum}, \eqref{eq:ipoly_difference}, and \eqref{eq:ipoly_product} appeared previously in a paper by Rokne \cite{rokne1975reducing}, along with a rule for interval polynomial division.

\subsubsection{Exponentiation with a Non-Negative Integer Exponent} \label{sec:ab_exponentiation}

We now consider computing $A(z)^p$, for an integer $p \ge 0$.  For $p \in \set{0, 1}$ this is trivial.  For $p \ge 2$, a natural approach is to simply apply the multiplication rule $p - 1$ times.  However, this approach produces an unnecessarily loose bound.  For example, consider squaring the degree 0 interval polynomial $A(z) = [-3, 3]$.  Using the multiplication rule gives
\begin{equation}
  A(z) A(z) = [-3, 3] \cdot [-3, 3] = [-9, 9].
\end{equation}
In contrast, using the exponentiation rule for interval arithmetic gives $A(z)^2 = [-3, 3]^2 = [0, 9]$.

To obtain a tighter bound than the one given by repeated multiplication, we use the exponentiation rule (rather that the product rule) for interval arithmetic whenever possible.  For example, in the case where $p = 2$ and $k = 2$, for $z \in Z$ we have
\begin{align}
  A(z)^2
  & = \paren{A_{[0]} + A_{[1]} z + A_{[2]} z^2}^2 \nonumber \\
  & \subseteq A_{[0]}^2 + 2 A_{[0]} A_{[1]} z + z^2 \paren{2 A_{[0]} A_{[2]} + A_{[1]}^2  + 2 A_{[1]} A_{[2]} z + A_{[2]}^2 z^2}  \nonumber \\
  & \subseteq A_{[0]}^2 + 2 A_{[0]} A_{[1]} z + z^2 \cdot \rangebound\paren{2 A_{[0]} A_{[2]} + A_{[1]}^2  + 2 A_{[1]} A_{[2]} z + A_{[2]}^2 z^2, Z}.
\end{align}
By computing the intervals $A_{[0]}^2$, $A_{[1]}^2$, and $A_{[2]}^2$ using the exponentiation rule (rather than the product) rule, we obtain a tighter result than would be obtained by computing $A(z)^2$ using \eqref{eq:tipoly_product}.  This approach generalizes naturally to other $k$ and $p$.

\subsubsection{Arbitrary Univariate Functions} \label{sec:nonlinear_ops}

We next define an interval polynomial extension of an arbitrary function $\sigma: \reals \to \reals$ for which a Taylor polynomial enclosure is known.  This immediately provides interval polynomial extensions of the all functions considered in \cite{streeter2023sharp}, including $\exp$, $\log$, $x \mapsto x^p$, and various neural network activation functions.

To define the extended function, we first consider the problem of composing two interval polynomials.  We immediately have
\begin{equation}
  A(B(z)) = \sum_{i=0}^k A_{[i]} ( B(z) )^i.
\end{equation}
Thus, for $z \in Z$, we could obtain an enclosure of $A(B(z))$ by applying the rule for non-negative integer exponentiation $k$ times, and summing the results. However, we can obtain a tighter enclosure by first rewriting $A(B(z))$ as a degree $k^2$ polynomial, then enclosing it by a degree $k$ polynomial using a single call to $\rangebound$, as in \eqref{eq:tipoly_product}.  As in Taylor-mode automatic differentiation, this can be implemented efficiently using Fa\`{a} di Bruno's formula \citep{bettencourt2019taylor}.  We denote the resulting enclosure by $A \circ_Z B$.

Applied to an input polynomial $P$, the extended function will compute a degree $k$ Taylor polynomial enclosure of $\sigma$, and then compose it with the polynomial $P - P_{[0]}$.  To define it formally, for any $y_0 \in \reals$ and $Y \in \intervals$, let $\pg_k(\sigma, y_0, Y)$ denote the known degree $k$ Taylor polynomial enclosure of $\sigma$ at $y_0$ over $Y$.  The degree $k$ interval polynomial extension of $\sigma$ over trust regions $Y_1$ and $Z$ is defined by:
\begin{equation} \label{eq:ipe_nonlinear}
  \Sigma(P) = \pg_k(\sigma, P_{[0]}, Y_1) \circ_Z (P - P_{[0]})
\end{equation}
where $P_{[0]}$ denotes the degree 0 term of $P$ (here assumed to be a scalar), and we use $A \circ_Z B$ to denote interval polynomial composition that is valid for $z \in Z$, as described in the previous paragraph.

\subsubsection{Exponentiation with a fractional or negative exponent}

For $p \in \reals$, where $p$ is not an integer or $p < 0$, we define $A(z)^p$ as the application of the nonlinear function $x \mapsto x^p$ to the interval polynomial $A(z)$ using \eqref{eq:ipe_nonlinear}.

\subsubsection{Division}

Having defined interval polynomial extensions of multiplication and exponentiation, we simply treat division as multiplication by the reciprocal:
\begin{equation} \label{eq:tipoly_division}
  \frac{A(z)}{B(z)} \eqdef A(z) B(z)^{-1}
\end{equation}
where $B(z)^{-1}$ is defined as the application of the nonlinear function $x \mapsto x^{-1}$ to $B(z)$ using \eqref{eq:ipe_nonlinear}.

Note that if it is possible for $B(z)$ to be 0 (i.e., $0 \in Y_2$) then \eqref{eq:ipe_nonlinear} will produce an interval polynomial with infinite coefficients, and the right hand side of \eqref{eq:tipoly_division} will have coefficients that are either infinite or indeterminate (i.e, NaN), as appropriate, depending on the coefficients of $A(z)$.

\subsubsection{Summary}

Table~\ref{tab:interval_polynomial_extensions} summarizes the interval polynomial extensions we have derived for addition, multiplication, and arbitrary univariate functions.  

\begin{table*}[h]
        \caption{Interval polynomial extensions of atomic functions.}
        \label{tab:interval_polynomial_extensions}
  \centering
        \begin{small}
        \begin{sc}
                \begin{tabular}{llll}
    \toprule
          Function & Degree $k$ interval polynomial extension over $Y_1, Y_2, \ldots$ and $Z$ \\
    \midrule
      $\sigma(a, b) = a + b$ & $\Sigma(A, B) \eqdef (A_{[0]} + B_{[0]}, A_{[1]} + B_{[1]}, \ldots, A_{[k]} + B_{[k]})$ \vspace{.4cm} \\ 
      $\sigma(a, b) = a b$ & \makecell[l]{$\Sigma(A, B) = Q$, where\\$\quad Q_{[j]} = \begin{cases}
      \sum_{l, m: l + m = j} A_{[l]} B_{[m]} & j < k\\
      \rangebound\paren{\sum_{l, m: l+m \ge k}  A_{[l]} B_{[m]} z^{l+m-k}, Z} & j = k.
      \end{cases}$} \vspace{.4cm} \\

      Univariate $\sigma$ & \makecell[l]{$\Sigma(A) = P \circ_Z (A - A_{[0]})$ (see \S\ref{sec:nonlinear_ops}), where $P$ is a degree $k$ Taylor\\ enclosure of $\sigma$ at $A_{[0]}$ over $Y_1$.} \\

\bottomrule
  \end{tabular}
  \end{sc}
  \end{small}
\end{table*}

For simplicity, we have omitted the extensions for subtraction, division, and exponentiation with a constant exponent, which can be defined in terms of the extensions given in the table.

\subsection{Pseudocode and Example} \label{sec:ab_pseudocode}

We now formally present the \ref{alg:autobound} algorithm.

The algorithm takes as input a function $f: \reals \to \reals$, represented as a \emph{symbolic expression}.  A symbolic expression is a sequence of equations, each of which gives the value of some intermediate variable as an atomic function of other intermediate variables.

We use $\primitives$ to denote the set of atomic univariate functions that may appear in a symbolic expression.
To keep our pseudocode as simple as possible, we will assume that the only binary operations that appear in a symbolic expression are $+$ and $\times$.  Functions containing subtraction or division can be converted to this form as a preprocessing step, using the relations $x - y = x + (-y)$ and $\frac x y = x y^{-1}$.  Functions containing exponentiation can also be converted to this form so long as each exponent is a constant, by letting $\primitives$ contain a univariate function $x \mapsto x^p$ for each distinct exponent $p$.

\begin{definition} [Symbolic expression]
A \emph{symbolic expression} is a pair $(\varset, \eqlist)$, where $\varset = (v_0, v_1, \allowbreak \ldots, v_n)$ is a tuple of intermediate variables, and $\eqlist = (e_1, e_2, \ldots, e_n)$ is a tuple of equations, where $e_i = (\sigma_i, L_i)$, $\sigma_i \in \primitives$ is an atomic function, and $L_i$ is a tuple of intermediate variables, of the same length as the number of arguments that $\sigma_i$ takes.
\end{definition}
In a symbolic expression, $v_0$ represents the input to the function, and $v_n$ represents the output. As an example, the function $f(x) = \sqrt{\exp(x)}$ can be represented by the symbolic expression $(\varset, \eqlist)$ where $\varset = (v_0, v_1, v_2)$ and $\eqlist = ( (\exp, (v_0) ), (\mathrm{sqrt}, (v_1)) )$, representing the equations:
\begin{align}
  v_0 & = x \nonumber \\
  v_1 & =  \exp(v_0) \nonumber \\
  v_2 & = \sqrt{v_1}.
\end{align}

We now describe \ref{alg:autobound}.
Given as input an integer $k$, a trust region $[a, b] \in \intervals$, a reference point $x_0 \in [a, b]$, and a symbolic expression $(\varset, \eqlist)$ with $\varset = (v_0, v_1, v_2, \ldots, v_n)$, \ref{alg:autobound} computes intervals $Y_0, Y_1, \ldots, Y_n$ such that
\begin{equation} \label{eq:Y_invariant}
  v_i (x) \in Y_i \quad \forall x \in [a, b]
\end{equation}
where $v_i(x)$ denotes the value of the intermediate variable $v_i$ as a function of the independent variable $x$.  The intervals $Y_i$ are obtained from the equations in $\eqlist$ using the rules of interval arithmetic.

In parallel, the algorithm computes degree $k$ interval polynomials $P_0, P_1, \ldots, P_n$ (represented as tuples of coefficients) such that
\begin{equation} \label{eq:P_invariant}
  v_i(x) \in P_i(x - x_0) \quad \forall x \in [a, b].
\end{equation}
The interval polynomials $P_i$ are obtained from the equations in $\eqlist$, using the interval polynomial extensions of the atomic functions (summarized in Table~\ref{tab:interval_polynomial_extensions}).  The algorithm returns $P_n$, which is a Taylor polynomial enclosure of the function $f: \reals \to \reals$ defined by the symbolic expression $(\varset, \eqlist)$, at $x_0$ over $[a, b]$.

Theorem~\ref{thm:autobound_1d} shows that \ref{alg:autobound} returns the coefficients of a Taylor polynomial enclosure of the function $f$ represented by the symbolic expression provided as input to the algorithm.

\newcommand{\thmautoboundoned}{
Assume that the table $\extable$ provided as a hyperparameter to \ref{alg:autobound} contains interval polynomial extensions of each primitive function in $\primitives$.   (That is, for any primitive function $\sigma: \reals^m \to \reals \in \primitives$, integer $k > 0$, and intervals $Y_1, Y_2, \ldots, Y_m, Z \in \intervals$, the function $F(z) = \extable[\sigma, k](z; Y_1, Y_2, \ldots, Y_m, Z)$ is an interval polynomial extension of $\sigma$ over $Y_1, Y_2, \ldots, Y_m$ and $Z$.)

Then, when given as input a symbolic expression $(\varset, \eqlist)$, an interval $[a, b] \in \intervals$, a scalar $x_0 \in [a, b]$, and a target degree $k > 0$, \ref{alg:autobound} returns a tuple containing the coefficients of a degree $k$ interval polynomial $P_n$ that satisfies:
\[
  f(x) \in P_n(x - x_0) \quad \forall x \in [a, b]
\]
where $f: \reals \to \reals$ is the function represented by the symbolic expression $(\varset, \eqlist)$.
Furthermore, if the first $k - 1$ coefficients of $P_n$ are scalars (which is the case when $\Sigma$ is set to its default value), then $P_n$ is a Taylor polynomial enclosure of $f$ at $x_0$ over $[a, b]$.
}
\newcommand{\thmautoboundonedproof}{

Let $Y_i$ and $P_i$, and be defined as in the pseudocode for \ref{alg:autobound}, and let $v_i(x)$ denote the value of intermediate variable $v_i$ as a function of $x$.
We will show by induction that, for $i = 0, 1, \ldots, n$, the following two invariants hold:
\begin{enumerate}
  \item $v_i(x) \in Y_i \quad \forall x \in [a, b]$.
  \item $P_i$ is an interval polynomial enclosure of $a_i(z) \eqdef v_i(x_0 + z)$ over $Z \eqdef [a - x_0, b - x_0]$.%
\end{enumerate}
For the base case $i = 0$, we have $v_0(x) = x$ and $Y_0 = (a, b)$, so invariant (1) holds trivially.  Because $P_0(z) = x_0 + z = v_0(x_0 + z) = a_0(z)$,  invariant (2) holds trivially as well.

Now consider some arbitrary $i > 0$, and assume that invariants (1) and (2) hold for all smaller $i$.  Letting $L_i = (v_{j_1}, v_{j_2}, \ldots, v_{j_m})$ (as in the pseudocode), we have:
\begin{align}
  Y_i
  & \subseteq Y_i^{(0)} \nonumber \\
  & = \sigma_i(Y_{j_1}, Y_{j_2}, \ldots, Y_{j_m}) \nonumber \\
  & \eqdef \set { \sigma_i(y_{j_1}, y_{j_2}, \ldots, y_{j_m}): y_{j_l} \in Y_{j_l}\ \forall l \in \set{1, 2, , \ldots, m} } \nonumber \\
  & \ni \sigma_i(v_{j_1}(x), v_{j_2}(x), \ldots, v_{j_m}(x)) \nonumber \\
  & = v_i(x)
\end{align}
where the fourth line uses the induction hypothesis.  Thus, invariant (1) holds.  To show that invariant (2) also holds, first observe that
\begin{equation}
  a_i(z) = \sigma_i(a_{j_1}(z), a_{j_2}(z), \ldots, a_{j_m}(z)).
\end{equation}
By the induction hypothesis, $P_{j_l}$ is an interval polynomial enclosure of $a_{j_l}$ over $Z$ for all $l  \in \set{1, 2, \ldots, m}$, and $a_{j_l}(z) \in Y_{j_l}$ for all $z \in Z$.  Furthermore, by the assumptions about $\Sigma$ made in the theorem statement, the code sets $P_i = F(P_{j_1}, P_{j_2}, \ldots, P_{j_m})$, where $F$ is an interval polynomial extension of $\sigma_i$ over $Y_{j_1}, Y_{j_2}, \ldots, Y_{j_m}$ and $Z$.  It follows by Proposition~\ref{prop:compose_enclosures} that invariant (2) holds.

We have thus shown that invariants (1) and (2) hold for all $i$.  Taking $i = n$, invariant (2) says that $P_n$ is an interval polynomial enclosure of $a_n(z) = v_n(x_0 + z) = f(x_0 + z)$ over $Z = [a - x_0, b - x_0]$.  Making the change of variable $x = z + x_0$ completes the first part of the proof.

The second claim (that $P_n$ is a Taylor polynomial enclosure if its first $k-1$ coefficients are scalars) follows from Lemma~\ref{lem:when_ip_is_tif}.
}
\begin{theorem} \label{thm:autobound_1d}
\thmautoboundoned
\end{theorem}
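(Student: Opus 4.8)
The plan is to prove the theorem by structural induction on the intermediate variables $v_0, v_1, \ldots, v_n$ of the symbolic expression, establishing at each index $i$ two simultaneous invariants: first, that the interval $Y_i$ computed by the algorithm encloses the range of $v_i(x)$ over the trust region, i.e.\ $v_i(x) \in Y_i$ for all $x \in [a, b]$; and second, that the interval polynomial $P_i$ is an interval polynomial enclosure (in the sense of Definition~\ref{def:ipif}) of the shifted function $a_i(z) \eqdef v_i(x_0 + z)$ over the shifted trust region $Z \eqdef [a - x_0, b - x_0]$. These two invariants must be carried together because the correctness of the polynomial enclosure $P_i$ depends, via the definition of an interval polynomial extension (Definition~\ref{def:ip_extension}), on having valid range bounds $Y_{j}$ for the arguments.

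For the base case $i = 0$, I would observe that $v_0(x) = x$ is the identity, so $Y_0 = [a, b]$ trivially satisfies the range invariant, and $P_0(z) = x_0 + z$ satisfies $P_0(z) = a_0(z)$ exactly, giving the enclosure invariant with equality. For the inductive step at an arbitrary $i > 0$, I would write the defining equation $e_i = (\sigma_i, L_i)$ with $L_i = (v_{j_1}, \ldots, v_{j_m})$, so that $v_i(x) = \sigma_i(v_{j_1}(x), \ldots, v_{j_m}(x))$ and correspondingly $a_i(z) = \sigma_i(a_{j_1}(z), \ldots, a_{j_m}(z))$. The range invariant (1) follows from monotonicity of interval arithmetic: since each $v_{j_l}(x) \in Y_{j_l}$ by the induction hypothesis, applying $\sigma_i$ keeps the value inside the interval-arithmetic image $\sigma_i(Y_{j_1}, \ldots, Y_{j_m})$, which contains $Y_i$. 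For the enclosure invariant (2), the key is that the three hypotheses of Proposition~\ref{prop:compose_enclosures} are exactly what the induction hypothesis and the theorem's assumption on $\extable$ supply: condition~1 is range invariant (1) for the arguments, condition~2 is enclosure invariant (2) for the arguments, and condition~3 is the hypothesis that $\extable[\sigma_i, k]$ is a genuine interval polynomial extension of $\sigma_i$ over the relevant trust regions. Proposition~\ref{prop:compose_enclosures} then immediately yields that $P_i = F(P_{j_1}, \ldots, P_{j_m})$ is an enclosure of $a_i$ over $Z$.

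Having closed the induction, I would specialize to $i = n$: invariant (2) states that $P_n$ encloses $a_n(z) = v_n(x_0 + z) = f(x_0 + z)$ over $Z = [a - x_0, b - x_0]$. Undoing the change of variable via $x = z + x_0$ (valid since $x \in [a, b] \iff z \in Z$) converts this into the desired claim $f(x) \in P_n(x - x_0)$ for all $x \in [a, b]$, completing the first part. For the second part, the additional claim that $P_n$ is a genuine \emph{Taylor} polynomial enclosure (Definition~\ref{def:tpe}) when its first $k-1$ coefficients are scalars, I would appeal to the structural characterization referenced in the proof as Lemma~\ref{lem:when_ip_is_tif}; this lemma presumably shows that an interval polynomial enclosure whose leading $k-1$ coefficients are scalars must have those coefficients coincide with the Taylor coefficients $\frac{1}{i!} f^{(i)}(x_0)$, so that $P_n$ takes exactly the form prescribed in Definition~\ref{def:tpe}.

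The bulk of this argument is essentially bookkeeping: the real content has already been isolated into Proposition~\ref{prop:compose_enclosures} and into the hypothesis that $\extable$ contains valid extensions, so the inductive step is almost purely a matter of verifying that the three numbered conditions line up with the three inputs available. The one place requiring genuine care—the main obstacle—is making sure the two invariants are threaded correctly: invariant (2) for $v_i$ genuinely requires invariant (1) for the \emph{arguments} $v_{j_l}$ (to instantiate condition~1 of the proposition), so the induction must carry both invariants jointly rather than proving them in sequence. I would also take care to state the base case precisely, noting that the enclosure invariant holds with equality there (which is what ensures the Taylor-polynomial structure propagates), and to confirm that the shift $z = x - x_0$ maps $[a,b]$ bijectively onto $Z$ so that the final change of variable is sound.
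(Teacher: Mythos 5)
Your proof follows the paper's own argument essentially line for line: the same joint induction carrying the two invariants (the range bound $v_i(x) \in Y_i$ and the interval-polynomial enclosure of $a_i(z) \eqdef v_i(x_0+z)$ over $Z = [a-x_0,\, b-x_0]$), the same trivial base case, the same appeal to Proposition~\ref{prop:compose_enclosures} in the inductive step, the same change of variable $x = z + x_0$ at the end, and the same invocation of Lemma~\ref{lem:when_ip_is_tif} for the claim that scalar leading coefficients make $P_n$ a genuine Taylor polynomial enclosure. This is correct and matches the paper's proof in both structure and substance.
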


The proof of Theorem~\ref{thm:autobound_1d} uses induction to show that invariants \eqref{eq:Y_invariant} and \eqref{eq:P_invariant} are maintained at each step of the algorithm.  Applying invariant \eqref{eq:P_invariant} with $i = n$ then proves the first claim in the theorem.  The second claim (that $P_n$ is a Taylor polynomial enclosure if its first $k-1$ coefficients are scalars) follows from an analysis of the behavior of $P_n$ as $x \rightarrow x_0$.  A formal proof is given in Appendix A.

\begin{varalgorithm}{AutoBound1D}
  \begin{algorithmic}
  \caption{(a simplified algorithm for one-dimensional functions).}
  \label{alg:autobound}
  \STATE {\bf Hyperparameters}:
  \begin{enumerate}
  \item A table $\Sigma$, such that for any function $\sigma: \reals^m \to \reals \in \primitives$ and integer $k > 0$, the function
  \[
    F(z) = \Sigma[\sigma, k](z; Y_1, Y_2, \ldots, Y_m, Z)
  \]
  is a degree $k$ interval polynomial extension of $\sigma$ over trust regions $Y_1, Y_2, \ldots, Y_m$ and $Z$.
  \item A function $\rangebound$, such that for any interval polynomial $P$ and interval $Z$, $z \in Z \implies P(z) \subseteq \rangebound(P, Z)$. 
  \end{enumerate}
  \STATE The default value of $\Sigma$ is given in Table~\ref{tab:interval_polynomial_extensions}, and the default value of $\rangebound$ is given by \eqref{eq:default_range_bound}.
  \algrule
  \STATE {\bf Input}: a symbolic expression $(\varset, \eqlist)$, scalar $x_0 \in \reals$, interval $[a, b] \in \intervals$, and target degree $k \in \integers_{> 0}$.
  \STATE {\bf Output}: a tuple of coefficients that define a degree $k$ Taylor polynomial enclosure of $f: \reals \to \reals$ at $x_0$ over $[a, b]$, where $f$ is the function represented by the symbolic expression $(\varset, \eqlist)$.
  \algrule
  \STATE Let $\varset = (v_0, v_1, \ldots, v_n)$, and let $\eqlist = ((\sigma_1, L_1), (\sigma_2, L_2), \ldots, (\sigma_n, L_n))$.
  \STATE Set $P_0 \leftarrow (x_0, 1)$, $Y_0 \leftarrow [a, b]$, and $Z \leftarrow [a - x_0, b - x_0]$.
  \FOR {$i$ from $1$ to $n$}
    \STATE Let $j_q$ be the index of the $q$th variable in $L_i$, and let $m$ be the length of $L_i$ (so  $L_i = (v_{j_1}, v_{j_2}, \ldots, v_{j_m})$).
    \STATE Set $P_i \leftarrow \Sigma[\sigma_i, k](P_{j_1}, P_{j_2}, \ldots, P_{j_{m_i}}; Y_{j_1}, Y_{j_2}, \ldots, Y_{j_{m_i}}, Z)$.
    \STATE Set $Y_i^{(0)} \leftarrow \sigma_i(Y_{j_1}, Y_{j_2}, \ldots, Y_{j_{m_i}})$.  \COMMENT{Apply $\sigma_i$ to interval arguments using interval arithmetic.}
    \STATE Set $Y_i^{(1)} \leftarrow \rangebound(P_i, Z)$.
    \STATE Set $Y_i \leftarrow Y_i^{(0)} \cap Y_i^{(1)}$.
  \ENDFOR
  \STATE Return $P_n$.
\end{algorithmic}
\end{varalgorithm}

\subsubsection{Example}

We now trace through a run of \ref{alg:autobound}, using it to compute a quadratic Taylor polynomial enclosure of the function $f(x) = \frac{\exp(x)}{x+2}$ at $x_0 = 0$ over $[0, 2]$.  Evidently, the value of $f$ can be computed using the sequence of equations:
\begin{align}
  v_0 &= x \nonumber \\
  v_1 & = 2 + v_0 \nonumber \\
  v_2 & = \exp(v_0) \nonumber \\
  v_3 & = v_1^{-1} \nonumber \\
  v_4 & = v_2 v_3
\end{align}
which corresponds to the symbolic expression with intermediate variables $(v_0, v_1, v_2, v_3, v_4)$ and equations $( (\mathrm{plus\_two}, (v_0)), (\exp, (v_0)), (\mathrm{reciprocal}, (v_1)), (\times, (v_1, v_3)) )$.

Given $x_0 =0$ and the trust region $[-1, 1]$ as input, \ref{alg:autobound} initializes
\begin{align}
  P_0 & = (0,1) \nonumber \\
  Y_0 &= [-1, 1] \nonumber \\
  Z &= [-1, 1].
\end{align}
The value of $P_0$ represents the coefficients of the trivial Taylor polynomial enclosure $x = x_0 + 1 \cdot (x - x_0)$, while the value of $Y_0$ reflects the assumption $x \in [-1, 1]$, and the value of $Z$ reflects the assumption $x - x_0 \in [-1, 1]$.

On iteration $i = 1$, the algorithm processes the equation $(\mathrm{plus\_two}, (v_0))$.  The interval polynomial extension of the $\mathrm{plus\_two}$ function simply adds 2 to the $0$th coefficient of the polynomial, yielding
\begin{align}
  P_1 & = (2 + (P_0)_{[0]}, (P_0)_{[1]}) = (2, 1) \nonumber \\
  Y_1^{(0)} &= 2 + Y_0 = [1, 3] \nonumber \\
  Y_1^{(1)} &= \rangebound(P_1, Z) = 2 + 1 \cdot Z = [1, 3] \nonumber \\
  Y_1 & = [1, 3] \cap [1, 3] = [1, 3].
\end{align}
The value of $Y_1$ reflects that fact that $x \in [-1, 1]$ implies $2 + x \in [1, 3]$, while the value of $P_1$ reflects the fact that $x_0 = 0$ implies $2 + x = 2 + 1 \cdot (x - x_0)$.

On iteration $i = 2$, the algorithm processes the equation $(\exp, (v_0))$.  Applied to the polynomial $P_0$, the interval polynomial extension of $\exp$ over trust regions $Y_0$ and $Z$ returns $Q \circ_Z P_0$, where $Q$ is the sharp quadratic Taylor polynomial enclosure of $\exp$ at $(P_0)_{[0]} = 0$ over $Y_0 = [-1, 1]$.  Using the theory developed in \cite{streeter2023sharp}, we have $Q = (1, 1, [\frac 1 e, e - 2])$.  Because $P_0 = (0, 1)$, $Q \circ_Z P_0 = Q$, and therefore,
\begin{align}
  P_2 & = \paren { 1, 1, \left [\frac 1 e, e - 2 \right ] } \nonumber \\
  Y_2^{(0)} &= \exp(Y_0) = \left [ \frac 1 e, e \right ] \nonumber \\
  Y_2^{(1)} &= \rangebound(P_2, Z)  %
  = [0, e] \nonumber \\
  Y_2 &= \left [ \frac 1 e, e \right ]. 
\end{align}

On iteration $i = 3$, the algorithm processes the equation $(\mathrm{reciprocal}, (v_1))$.  Proceeding as in the previous iteration, we obtain
\begin{align}
  P_3 & = \paren { \frac 1 2, - \frac 1 4, \left [\frac {1} {12}, \frac{1}{4} \right ] } \nonumber \\
  Y_3 &%
  = \paren { \frac 1 3, 1 }.
\end{align}

Finally, on iteration $i = 4$, the algorithm applies the interval polynomial extension of the product function to the polynomials $P_2$ and $P_3$ to obtain
\begin{equation}
  P_4 = \paren { \frac 1 2, \frac 1 4, \left [\frac {3} {4 e} - \frac{5} {12}, \frac {3 e} {4} - \frac {1} {4 e} - \frac 5 4 \right ] }.
\end{equation}

We conclude
\begin{align}
  f(x) & \in \frac 1 2 + \frac 1 4 (x - x_0) + \left [\frac {3} {4 e} - \frac{5} {12}, \frac {3 e} {4} - \frac {1} {4 e} - \frac 5 4 \right ] (x - x_0)^2 \nonumber \\
    & \approx \frac 1 2 + \frac 1 4 (x - x_0) + \left [-0.14076, 0.69674 \right ] (x - x_0)^2. \label{eq:autobound_enclosure}
\end{align}

In contrast, as shown in \cite{walters2009}, applying the baseline interval arithmetic approach to the same function $f$ at $x_0$ over $[-1, 1]$ produces the substantially looser Taylor polynomial enclosure:
\begin{equation}
  f(x) \in \frac 1 2 + \frac 1 4 (x - x_0) + [-2.64, 4.04] (x - x_0)^2. \label{eq:baseline_enclosure}
\end{equation}

Figure~\ref{fig:exp_x_over_x_plus_2_enclosure} plots the quadratic Taylor polynomial enclosures given by \eqref{eq:autobound_enclosure} and \eqref{eq:baseline_enclosure}.  Note that, although the enclosure returned by \ref{alg:autobound} improves significantly over the baseline, it is not sharp, as is generally the case when the input is a non-trivial composite function.

\begin{figure}[h]
\begin{center}
\includegraphics[width=0.4\linewidth]{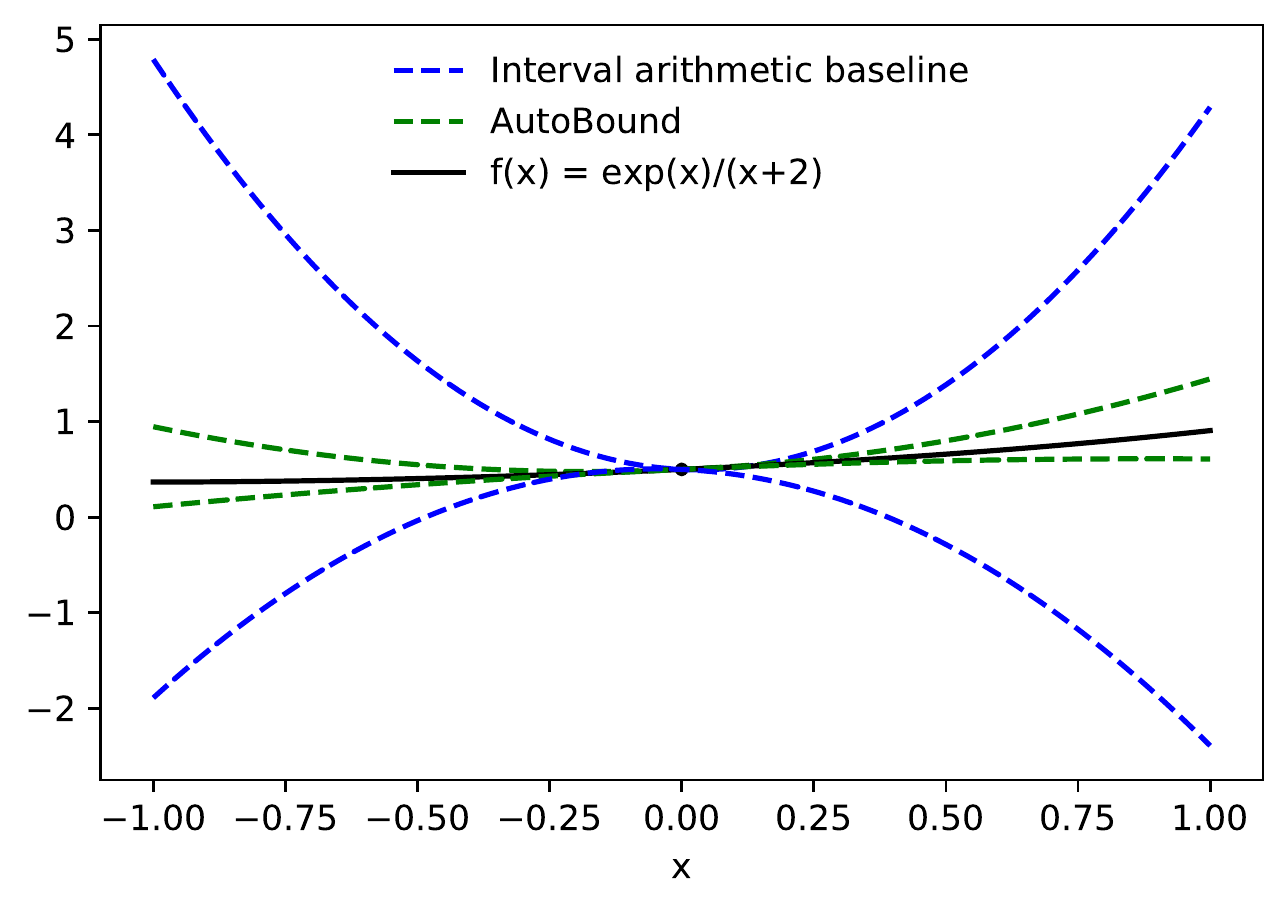}
\caption{Quadratic Taylor polynomial enclosures for $f(x) = \frac{\exp(x)}{2 + x}$ at $x_0 = 0$ over $[-1, 1]$, as derived by AutoBound and by the baseline interval arithmetic method \cite{walters2009}.}
\label{fig:exp_x_over_x_plus_2_enclosure}
\end{center}
\end{figure}

\subsection{Tightness of Automatically-Derived Bounds}

How tight are the Taylor polynomial enclosures returned by \ref{alg:autobound}?
We have not yet developed theory that answers this question in full generality.
However, in the special case where $f$ is a polynomial, the following theorem shows that \ref{alg:autobound} improves on a classical interval arithmetic baseline \cite{jaulin2001interval,hansen1979global,hansen2003global,moore1966interval} by a factor of at least $k+1$.

\newcommand{\thmabadvantage}{
Let $f(x) = \sum_{i=0}^n \alpha_i x^i$, and let $k$ be a positive integer such that for some $l > 0$, $\alpha_{k+l} \neq 0$.
Let $\ell$ be the smallest positive integer such that $\alpha^{k+\ell} \neq 0$.  Then,
\[
  \lim_{\epsilon \to 0} = \frac {\width(\iia(f, k, [-\epsilon, \epsilon]))} {\width(\iab(f, k, [-\epsilon, \epsilon], 0))} = {k + \ell \choose \ell}
\]
where $\iia(f, k, [a, b])$ is the interval obtained by evaluating $\frac {1} {k!} f^{(k)}$ over the interval $[a, b]$ according to the rules of interval arithmetic, and $\iab(f, k, [a, b], 0)$ is the degree $k$ coefficient of the polynomial returned by $\ref{alg:autobound}$ given the same arguments.
}
\newcommand{\thmabadvantageproof}{
By the rules of interval arithmetic,
\begin{align}
  & \iia(f, k, [-\epsilon, \epsilon]) \nonumber \\
  & \quad \quad = \sum_{i=0}^n \alpha_i \iia(x \mapsto x^i, k, [-\epsilon, \epsilon]) \nonumber \\
  & \quad \quad = \alpha_k + \sum_{i=k+1}^n \alpha_i \iia(x \mapsto x^i, k, [-\epsilon, \epsilon]) \nonumber \\
  & \quad \quad = \alpha_k + \sum_{j=1}^{n-k} \alpha_{k+j} \iia(x \mapsto x^{k+j}, k, [-\epsilon, \epsilon]) \nonumber \\
  & \quad \quad = \alpha_k + \sum_{j=1}^{n-k} \alpha_{k+j} {k+j \choose j} [-\epsilon, \epsilon]^j  & \mbox{by Lemma~\ref{lem:power_intervals}.} \label{eq:ia_poly_interval}
\end{align}
For any intervals $I, J \in \intervals$,
\begin{equation}
  \width(I+J) = \width(I) + \width(J).
\end{equation}
Thus, taking the width of both sides of \eqref{eq:ia_poly_interval}, we have
\begin{equation} \label{eq:ia_poly_width}
  \width(\iia(f, k, [-\epsilon, \epsilon])) = \sum_{j=1}^{n-k} \alpha_{k+j} {k+j \choose j} \width([-\epsilon, \epsilon]^j).
\end{equation}
By a similar argument,
\begin{equation} \label{eq:ab_poly_width}
  \width(\iab(f, k, [-\epsilon, \epsilon], 0)) = \sum_{j=1}^{n-k} \alpha_{k+j} \width([-\epsilon, \epsilon]^j).
\end{equation}
Dividing \eqref{eq:ia_poly_width} by \eqref{eq:ab_poly_width}, and taking the limit as $\epsilon \rightarrow 0$ completes the proof.
}
\begin{theorem} \label{thm:ab_advantage}
\thmabadvantage
\end{theorem}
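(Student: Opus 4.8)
The plan is to compute the two widths $\width(\iia(f,k,[-\epsilon,\epsilon]))$ and $\width(\iab(f,k,[-\epsilon,\epsilon],0))$ as explicit functions of $\epsilon$, divide them, and extract the limit by isolating the lowest-order power of $\epsilon$ that survives in each.

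First I would handle the interval-arithmetic baseline $\iia$. Since $\frac{1}{k!}f^{(k)}(x) = \sum_{j=0}^{n-k} \alpha_{k+j}\binom{k+j}{k} x^j$, evaluating this expression over $[-\epsilon,\epsilon]$ by interval arithmetic distributes across the sum, yielding $\iia(f,k,[-\epsilon,\epsilon]) = \alpha_k + \sum_{j=1}^{n-k}\alpha_{k+j}\binom{k+j}{j}[-\epsilon,\epsilon]^j$, where I use $\binom{k+j}{k}=\binom{k+j}{j}$ together with the lemma recording how interval arithmetic evaluates a monomial power over a symmetric interval. Using additivity of width, $\width(I+J)=\width(I)+\width(J)$, together with $\width(\alpha I)=|\alpha|\,\width(I)$ and the fact that the scalar term $\alpha_k$ contributes no width, I obtain $\width(\iia(f,k,[-\epsilon,\epsilon])) = \sum_{j=1}^{n-k}|\alpha_{k+j}|\binom{k+j}{j}\,\width([-\epsilon,\epsilon]^j)$.

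Next I would handle $\iab$. The essential structural point is that, run on a polynomial with the default table and default range bound, \ref{alg:autobound} reproduces the exact scalar Taylor coefficients $\alpha_0,\ldots,\alpha_{k-1}$ and collapses the degree-$\ge k$ part into a single interval coefficient equal to $\rangebound\!\left(\sum_{j=0}^{n-k}\alpha_{k+j}z^j,\,Z\right) = \sum_{j=0}^{n-k}\alpha_{k+j}[-\epsilon,\epsilon]^j$ by \eqref{eq:default_range_bound}. Consequently $\width(\iab(f,k,[-\epsilon,\epsilon],0)) = \sum_{j=1}^{n-k}|\alpha_{k+j}|\,\width([-\epsilon,\epsilon]^j)$, which is identical to the $\iia$ width term by term except that each summand now lacks the factor $\binom{k+j}{j}$. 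Intuitively, interval arithmetic on $\frac{1}{k!}f^{(k)}$ inflates each coefficient by the differentiation factor $\binom{k+j}{j}$, whereas AutoBound never differentiates and only range-bounds the residual tail, leaving the coefficient $\alpha_{k+j}$ untouched; this is precisely the source of the gap. Taking the ratio and letting $\epsilon\to 0$, each sum is of order $\epsilon^{\ell}$ and is dominated by its smallest surviving index $j=\ell$ (the least $j\ge 1$ with $\alpha_{k+j}\neq 0$); all terms with $j>\ell$ are $o(\epsilon^{\ell})$ and drop out, and the surviving $j=\ell$ contributions differ only by $\binom{k+\ell}{\ell}$, so the limit is $\binom{k+\ell}{\ell}$.

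The main obstacle I anticipate is the AutoBound side. Unlike the baseline, whose value follows directly from differentiating and evaluating, establishing that the degree-$k$ coefficient produced by the recursion has exactly the claimed form $\sum_{j}\alpha_{k+j}[-\epsilon,\epsilon]^j$ requires tracking how the interval-polynomial extensions of addition, multiplication, and integer powers compose when the input is a polynomial, and verifying that the sub-distributivity losses in \eqref{eq:interval_product_subdistributive} do not corrupt this form. For the limit itself, however, only the leading $j=\ell$ term is needed, so it would suffice to argue that the exact Taylor part of degree $<k$ stays scalar while the monomial $\alpha_{k+\ell}x^{k+\ell}$ folds down to $\alpha_{k+\ell}[-\epsilon,\epsilon]^{\ell}$ with no binomial inflation; this is where I would concentrate the rigorous part of the argument.
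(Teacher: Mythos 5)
Your proposal is correct and follows essentially the same route as the paper: both compute $\iia$ explicitly so that each monomial $\alpha_{k+j}x^{k+j}$ contributes $\alpha_{k+j}\binom{k+j}{j}[-\epsilon,\epsilon]^j$ (the paper isolates this per-monomial calculation in Lemma~\ref{lem:power_intervals}), both argue that AutoBound's degree-$k$ coefficient is the same sum without the binomial inflation because it only range-bounds the tail rather than differentiating, and both finish by additivity of $\width$ and extraction of the leading $j=\ell$ term in the limit. Your inclusion of the absolute values $|\alpha_{k+j}|$ in the width formulas is in fact slightly more careful than the paper's own write-up, which omits them.
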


\subsection{Limitations}

The most obvious limitation of \ref{alg:autobound} is that it applies only to univariate functions.  We devote Chapter~\ref{chap:bprop} to generalizing the algorithm to multivariate functions.

A more subtle limitation of the algorithm is that, although it can be applied to univariate functions that make use of bilinear operations (such as matrix multiplications or convolutions), each scalar product used to compute the output of the bilinear operation must have a dedicated equation in the symbolic expression, which is impractical  when using software frameworks such as TensorFlow \cite{abadi2016tensorflow}, PyTorch \cite{paszke2019pytorch}, or JAX \cite{bradbury2018jax}.  This limitation is removed as part of the generalization to multivariate functions in Chapter~\ref{chap:bprop}.

Finally, \ref{alg:autobound} cannot compute a Taylor polynomial enclosure of functions such as $f(x) = x^x$, where the independent variable appears in both a base and an exponent.  Fortunately, such functions do not typically arise in the applications of interest.  (In contrast, functions such as $f(x) = x^p$ for fixed $p \in \reals$ are supported, because $x \mapsto x^p$ can be treated as an atomic function.)

\chapter{Generalization to Multivariate Functions} \label{chap:bprop}

In this chapter, we generalize the \ref{alg:autobound} algorithm to handle multivariate functions.  To accomplish this, there are two distinct problems we must solve:
\begin{enumerate}
  \item We must make the algorithm work with multivariate interval polynomials, rather than scalar interval polynomials.
  \item In order to make efficient use of modern machine learning frameworks, we must enure that the output of our algorithm can be computed using a small number of widely-supported operations on tensors.
\end{enumerate}

Toward these ends, we introduce the notion of a \emph{tensor interval}: an interval whose end points are tensors rather than scalars.  When then define \emph{tensor interval polynomials}, which generalize interval polynomials by allowing the coefficients of the polynomial to be tensor intervals rather than scalar intervals.

With these concepts in place, elementwise operations on tensor interval polynomials, such as addition, subtraction, and (elementwise) multiplication, use simple rules analogous to the rules given in Chapter~\ref{chap:arithmetic} for scalar interval polynomials.
We also derive a rule for composing two tensor interval polynomials, which generalizes the rule for scalar interval polynomials.

For bilinear operations, such as matrix multiplications and convolutions, the situation is more subtle.  Because any bilinear operation can be written as a large number of scalar additions and multiplications, we could in principle handle bilinear operations using the existing addition and multiplication rules.  However, doing so would produce an impractically large (and inefficient) computation graph, with a vertex for each scalar multiplication (and potentially millions of vertices overall), which would prevent us from being able to make effective use of machine learning compilers (such as XLA) and automatic differentiation frameworks (e.g., TensorFlow, PyTorch, JAX).  To work around this, we derive rules for enclosing the result of this hypothetical computation in a (potentially wider) tensor interval that can be computed using a small number of standard operations on tensors.  These rules are analogous to, but different from, the rules for Interval Bound Propagation \cite{gowal2018effectiveness}, which provides an efficient way to enclose the results of interval arithmetic on tensors.

\section{Definitions and Notation} \label{sec:bprop_definitions}

\newcommand{\indices}{\mathrm{indices}}

As is standard in the machine learning literature, we use the term \emph{tensor} to denote a multidimensional array of real numbers (without requiring that the elements of the array define a physical tensor).  We use bold capital symbols such as $\mA$ to denote tensors.

We use $\shape(\mA)$ to denote the shape of tensor $\mA$, a tuple of non-negative integers.  Tuples can be multiplied by integers as in python, so for example if $\mA$ has shape (3, 5), then $2*\shape(\mA) = (3, 5, 3, 5)$.  We use $\indices(\mA)$ to denote the set of tuples of integers that index the elements of $\mA$.  For example, if $\shape(\mA) = (3, 5)$, then $\indices(\mA) = \set { (i, j) \in \integers^2: 0 \le i < 3, 0 \le j < 5 }$.

To define tensor polynomials, we must first define appropriate inner and outer products.  If $\mA$ is a tensor of rank $r$, and $\mB$ is a tensor of rank $q \le r$, the inner product $\inner{\mA}{\mB}$ is a tensor of rank $r - q$, whose elements are defined by
\begin{equation} \label{eq:inner}
  \inner{\mA}{\mB}_{i_1, i_2, \ldots, i_{r-q}} \eqdef \sum_{(j_1, j_2, \ldots, j_q) \in \mathrm{indices}(\mB)} \mA_{i_1, i_2, \ldots, i_{r-q}, j_1, j_2, \ldots, j_q}  \mB_{j_1, j_2, \ldots, j_q}.
\end{equation}
Observe that if $\a$ and $\b$ are vectors, $\inner{\a}{\b}$ is the usual dot product, while if $\mA$ is a matrix and $\b$ is a vector, then $\inner{\mA}{\b}$ is the usual matrix-vector product.

If $\mA$ is a tensor of rank $r$, and $\mB$ is a tensor of rank $s$, the outer product $\mA \otimes \mB$ is a tensor of rank $r + s$, whose elements are defined by
\begin{equation} \label{eq:outer}
  (\mA \otimes \mB)_{i_1, i_2, \ldots, i_r, j_1, j_2, \ldots j_s} = \mA_{i_1, i_2, \ldots, i_r} \mB_{j_1, j_2, \ldots, j_s}.
\end{equation}
For an integer $k \ge 0$, we use $\mA^{\otimes k}$ to denote a repeated outer product: for $k > 0$, $\mA^{\otimes k} \eqdef \mA \otimes \mA^{\otimes k-1}$, while $\mA^{\otimes 0} \eqdef 1$.  Observe that if $\mA$ has rank $r$, then $\mA^{\otimes k}$ has rank $k r$.

With this notation in hand, we can now define a \emph{tensor polynomial}.
\begin{definition} [Tensor polynomial]
A degree $k$ tensor polynomial is a function $\mA$ from tensors of some input shape $I$ to tensors of some output shape $O$, defined by coefficients $\mA_{[0]}, \mA_{[1]}, \ldots, \mA_{[k]}$, where $\mA_{[j]}$ has shape $O + j * I$.  For a tensor $\mZ$ of shape $I$, the value of $\mA(\mZ)$ is given by
\begin{equation} \label{eq:tensorpoly}
  \mA(\mZ) = \sum_{j=0}^k \inner {\mA_{[j]}} {\mZ^{\otimes j}}.
\end{equation}
\end{definition}

To define tensor \emph{interval} polynomials, we must generalize our inner and outer products to work with \emph{tensor intervals}, which we now define.

\begin{definition} [Tensor interval]
For tensors $\mA, \mB \in \reals^{n_1 \times n_2 \times \ldots \times n_k}$, the \emph{tensor interval} $[\mA, \mB]$ is the set of tensors $\set{\mX \in \reals^{n_1 \times n_2 \times \ldots \times n_k}: \mA \le \mX \le \mB}$, where the inequality is elementwise.
\end{definition}

We use calligraphic symbols such as $\sA$ to denote tensor intervals.  Following the conventions for scalar intervals, we use $\lep{\sA}$ and $\rep{\sA}$ to denote the left and right endpoints of a tensor interval $\sA$.  We use $r(\sA) \eqdef \frac {\rep{\sA} - \lep{\sA}} {2}$ to denote the radius of $\sA$, and $m(\sA) \eqdef \frac {\rep{\sA} + \lep{\sA}} {2}$ to denote the midpoint.  

Indexing a tensor interval gives a scalar  interval, for example if $\sA$ has rank $r$, then $\sA_{i_1, i_2, \ldots, i_r}$ denotes the scalar interval $[\lep{\sA}_{i_1, i_2, \ldots, i_r}, \rep{\sA}_{i_1, i_2, \ldots, i_r}]$.
With this indexing convention in place, the inner product definition \eqref{eq:inner} immediately generalizes to the case where one or both arguments are tensor intervals (rather than tensors).  The outer product definition \eqref{eq:outer} generalizes similarly.  We can now define a tensor interval polynomial in terms of these generalized inner and outer products.

\begin{definition} [Tensor interval polynomial]
A \emph{tensor interval polynomial} is a function $\sA$ from tensors of some input shape $I$ to tensors of some output shape $O$, defined by coefficients $\sA_{[0]}, \sA_{[1]}, \ldots, \sA_{[k]}$, where $\sA_{[j]}$ is a tensor interval of shape $O + j * I$.  For a tensor $\mZ$ of shape $I$, the value of $\sA(\mZ)$ is given by
\begin{equation} \label{eq:tensorpoly}
  \sA(\mZ) = \sum_{j=0}^k \inner {\sA_{[j]}} {\mZ^{\otimes j}}.
\end{equation}
\end{definition}

In the derivations that follow, we will also need to consider a generalized outer product that treats the first $n$ indices of $\mA$ and $\mB$ as ``batch'' indices.  If $\mA$ is a tensor of rank $r$, and $\mB$ is a tensor of rank $s$, where the first $n$ elements of $\shape(\mA)$ match the first $n$ elements of $\shape(\mB)$, then $\mA \otimes_n \mB$ denotes a tensor of rank $r + s - n$, whose elements are defined by
\begin{equation} \label{eq:generalized_outer}
  (\mA \otimes_n \mB)_{ b_1, b_2, \ldots, b_n, i_1, i_2, \ldots, i_{r-n}, j_1, j_2, \ldots, j_{s-n} } = \mA_{ b_1, b_2, \ldots, b_n, i_1, i_2, \ldots, i_{r-n} } \mB_{ b_1, b_2, \ldots, b_n, j_1, j_2, \ldots, j_{s-n}}.
\end{equation}
Observe that setting $n = 0$ recovers \eqref{eq:outer}, so that $\mA \otimes_0 \mB = \mA \otimes \mB$.
Like equation \eqref{eq:outer}, equation \eqref{eq:generalized_outer} extends naturally to the case where one or both arguments are tensor intervals.

\subsection{Operational Semantics for Tensor Intervals} \label{sec:semantics}

Tensor intervals have the same operational semantics as scalar intervals.  Formally, for an $n$-argument operation $\op$, we define
\begin{equation} \label{eq:tensor_interval_semantics}
  \op( \sA_1, \sA_2, \ldots, \sA_n) \eqdef \set { \op(\mA_1, \mA_2, \ldots, \mA_n): \mA_i \in \sA_i \ \forall i \in \set{1, 2, \ldots, n} }.
\end{equation}
The right hand side of \eqref{eq:tensor_interval_semantics} need not be expressible as a tensor interval, though in many cases it will be.  Under this definition,
the rules for adding, subtracting, multiplying, and exponentiating tensor intervals mirror the corresponding rules for interval arithmetic.  In particular,
\begin{equation}
  \sA + \sB \eqdef \set { \mA + \mB: \mA \in \sA, \mB \in \sB } = [ \lep{\sA} + \lep{\sB}, \rep{\sA} + \rep{\sB} ].
\end{equation}
Similarly,
\begin{equation}
  \sA \odot \sB = [ \min \set {  \lep{\sA} \odot \lep{\sB}, \lep{\sA} \odot \rep{\sB}, \rep{\sA} \odot \lep{\sB}, \rep{\sA} \odot \rep{\sB}  }, \max \set {  \lep{\sA} \odot \lep{\sB}, \lep{\sA} \odot \rep{\sB}, \rep{\sA} \odot \lep{\sB}, \rep{\sA} \odot \rep{\sB}  } ]
\end{equation}
where the $\min$ and $\max$ are elementwise.  The rule for computing the elementwise power $\sA^k$ is similarly an elementwise application of the usual interval arithmetic rule.

Note that the inner product of two tensor intervals was already defined by \eqref{eq:inner} in terms of scalar interval additional and multiplication, and thus does not have these semantics by definition.  However, we will see that the tensor interval inner product operation \emph{extends} the tensor inner product operation, in a sense to be made precise in the next section.

\subsection{Extended Functions}

The key to our algorithm will be to define extended versions of various atomic functions, which operate on tensor interval polynomials rather than tensors.  The first step is to define \emph{tensor interval extensions} of these functions.

\begin{definition} [Tensor interval extension]
Let $\op$ be a function that takes $n$ tensor arguments.  A \emph{tensor interval extension} of $\op$ is a function $\xop$ that takes $n$ tensor interval arguments of the same shapes as the corresponding arguments of $\op$, such that for any tensor intervals $\sA_1, \sA_2, \ldots, \sA_n$,
\[
   \op(\sA_1, \sA_2, \ldots, \sA_n) \subseteq \xop( \sA_1, \sA_2, \ldots, \sA_n)
\]
where the set on the left hand side is defined by \eqref{eq:tensor_interval_semantics}.  If this condition holds with equality, $\xop$ is said to be an \emph{exact} tensor interval extension of $\op$.
\end{definition}

The tensor interval inner product defined by \eqref{eq:inner} is a tensor interval extension of the corresponding tensor inner product.  Furthermore, when the second argument is a tensor (rather than a tensor interval), this extension is exact, as shown in the following proposition.  A formal proof is given in Appendix A.

\newcommand{\propinnerexact}{
The tensor interval inner product defined by \eqref{eq:inner} extends the tensor inner product defined by \eqref{eq:inner}.  That is, for any tensor intervals $\sA$ and $\sB$ of appropriate shapes,
\[
  \set{\inner{\mA}{\mB}: \mA \in \sA, \mB \in \sB} \subseteq \inner{\sA}{\sB}.
\]
Furthermore, when the second argument is a tensor (or a singleton tensor interval) this extension is exact: for any tensor interval $\sA$ and tensor $\mB$ of appropriate shapes,
\[
  \set{\inner{\mA}{\mB}: \mA \in \sA} = \inner{\sA}{\mB}.
\]
}
\newcommand{\propinnerexactproof}{
The first equation follows immediately from the properties of interval addition and multiplication.

We now prove the second equation.
By definition, if $\sA$ has rank $r$ and $\mB$ has rank $q \le r$,
\begin{equation}
  \inner{\sA}{\mB}_{i_1, i_2, \ldots, i_{r-q}}
  \eqdef \sum_{(j_1, j_2, \ldots, j_q) \in \mathrm{indices}(\mB)} \sA_{i_1, i_2, \ldots, i_{r-q}, j_1, j_2, \ldots, j_q}  \mB_{j_1, j_2, \ldots, j_q}
\end{equation}
Viewing the right hand side as a dot product of appropriately-defined vectors, it follows from Lemma 1 that
\[
  \inner{\sA}{\mB}_{i_1, i_2, \ldots, i_{r-q}} = \set { \inner{\mX}{\mB}: \mX \in \sA_{i_1, i_2, \ldots, i_{r-q}}}.
\]
Because the choice of $\mX \in \sA_{i_1, i_2, \ldots, i_{r-q}}$ can be made independently for each element of $\inner{\sA}{\mB}$, it follows that $\inner{\sA}{\mB} = \set{\inner{\mA}{\mB}: \mA \in \sA}$.
}
\begin{proposition} \label{prop:inner_exact}
\propinnerexact
\end{proposition}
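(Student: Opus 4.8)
The plan is to prove the two claims separately, relying throughout on the fundamental inclusion property of interval arithmetic: if $a \in A$ and $b \in B$ for scalar intervals $A, B$, then $a + b \in A + B$ and $ab \in AB$. For the subset claim I would argue elementwise. Fix any output index tuple $(i_1, \ldots, i_{r-q})$ and any concrete tensors $\mA \in \sA$, $\mB \in \sB$. The corresponding element of $\inner{\mA}{\mB}$ is a finite sum of scalar products $\mA_{i_1,\ldots,i_{r-q},j_1,\ldots,j_q}\,\mB_{j_1,\ldots,j_q}$, taken over all index tuples $(j_1,\ldots,j_q)$. By the inclusion property, each such product lies in the interval product $\sA_{i_1,\ldots,i_{r-q},j_1,\ldots,j_q}\,\sB_{j_1,\ldots,j_q}$, and hence their sum lies in the interval sum of these products, which is by definition the element $\inner{\sA}{\sB}_{i_1,\ldots,i_{r-q}}$. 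Since the output index and the choice of $\mA, \mB$ were arbitrary, this establishes $\set{\inner{\mA}{\mB}: \mA \in \sA, \mB \in \sB} \subseteq \inner{\sA}{\sB}$.

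For the exactness claim, where the second argument is a fixed tensor $\mB$, the containment $\subseteq$ already follows from the first part, so the real work is the reverse inclusion. I would again fix an output index $(i_1, \ldots, i_{r-q})$ and view the corresponding element as the dot product of the interval vector obtained by flattening the slice $\sA_{i_1,\ldots,i_{r-q}}$ against the fixed vector obtained by flattening $\mB$. The key step, which I would isolate as a one-dimensional lemma, is that the dot product of an interval vector with a \emph{fixed} vector is exact: $\set{\inner{\mX}{\mB}: \mX \in \sA_{i_1,\ldots,i_{r-q}}} = \inner{\sA_{i_1,\ldots,i_{r-q}}}{\mB}$. This holds because multiplication of an interval by a fixed scalar $\mB_j$ is exact (the affine map $x \mapsto x\,\mB_j$ carries the closed interval onto the product interval), and the Minkowski sum of the resulting scalar intervals equals their interval sum, every point of which is attained by choosing the slice entries independently. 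Finally, because distinct output indices index disjoint entries of $\sA$, these entrywise choices can be made simultaneously, so per-element exactness lifts to the full tensor identity $\inner{\sA}{\mB} = \set{\inner{\mA}{\mB}: \mA \in \sA}$.

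The main obstacle is the reverse inclusion in the exactness claim: one must verify that \emph{every} value in the interval sum is genuinely achieved by some $\mA \in \sA$. This is precisely where the dependency problem of interval arithmetic — the failure of subdistributivity \eqref{eq:interval_product_subdistributive} to hold with equality — could bite. The crux is the observation that, in a single output element, each entry of $\sA$ occurs in exactly one summand, so no variable is repeated; and across output elements the entries used are disjoint. This independence is exactly what upgrades the subset relation to equality. It also clarifies why fixing $\mB$ to a point is essential: if $\sB$ were a genuine interval, the product entries would reuse the shared variables $\mB_{j_1,\ldots,j_q}$ across different output elements, reintroducing the dependency problem, and exactness would fail.
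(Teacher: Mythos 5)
Your proof is correct and follows essentially the same route as the paper's: the subset claim from the basic inclusion properties of interval addition and multiplication, and the exactness claim by viewing each output element of $\inner{\sA}{\mB}$ as the dot product of an interval vector with a fixed vector, then lifting per-element exactness to the whole tensor via the independence of the disjoint slices of $\sA$. The only difference is that you prove the one-dimensional dot-product exactness inline (exactness of multiplication by a fixed scalar plus Minkowski sums), whereas the paper isolates it as Proposition~\ref{prop:dot_product_exact} and cites Moore's result on interval expressions in which each interval occurs exactly once.
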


We are now ready to define tensor interval \emph{polynomial} extensions.  Our definition will directly generalize Definition~\ref{def:ip_extension}, which applied to scalar interval polynomials.  See the discussion surrounding Definition~\ref{def:ip_extension} for an example that illustrates the definition, and the role that the different trust regions play.

\begin{definition} [Tensor interval polynomial extension] \label{def:tip_extension}
Let $\op$ be a function that takes $n$ tensor arguments.  A function $\xop$, which takes $n$ degree $k$ tensor interval polynomial arguments, is said to be a \emph{tensor interval polynomial extension} of $\op$ over trust regions $\sY_1, \sY_2, \ldots, \sY_n$ and $\sZ$ (each of which is a tensor interval) iff.\ the following condition holds: for any tensor $\mZ \in \sZ$, any tensor interval polynomials $\sP_1, \sP_2, \ldots, \sP_n$, and any tensors $\mY_1, \mY_2, \ldots, \mY_n$, with $\mY_i \in \sY_i$ and $\mY_i \in \sP_i(\mZ)$ for all $i$, we have
\[
  \op(\mY_1, \mY_2, \ldots, \mY_n) \in \xop(\sP_1, \sP_2, \ldots, \sP_n)(\mZ).
\]
\end{definition}

\section{The AutoBound Algorithm}

We are now ready to present the \ref{alg:autoboundprop} algorithm, which derives a Taylor polynomial enclosure of a multivariate function $f$ composed of known atomic functions.  This section is organized as follows:
\begin{itemize}
  \item In \S\ref{sec:tensor_interval_extensions}, we define tensor interval extensions of various atomic functions.
  \item In \S\ref{sec:tip_extensions}, we define tensor interval \emph{polynomial} extensions of these same functions.
  \item In \S\ref{sec:autoboundprop_code} we give pseudocode for the \ref{alg:autoboundprop} algorithm and provide a proof of correctness.
\end{itemize}

\subsection{Tensor Interval Extensions of Atomic Functions} \label{sec:tensor_interval_extensions}

We begin by deriving tensor interval extensions of various atomic functions.  For arithmetic operations (such as addition and multiplication) and elementwise unary functions (such as $\exp$ and $\log$) these rules mirror the usual rules for interval arithmetic.  For bilinear operations, we extend and generalize previous results on Interval Bound Propagation \cite{gowal2018effectiveness}.

\subsubsection{Binary Arithmetic Operations}

The rules for elementwise addition, multiplication, and exponentiation given in \S\ref{sec:semantics} immediately define exact tensor interval extensions of the corresponding operations.  For example, the function $F(\sA, \sB) = \sA + \sB = [\lep{\sA} + \lep{\sB}, \rep{\sA} + \rep{\sB} ]$ is an exact tensor interval extension of the function $f(\mA, \mB) = \mA + \mB$.

\subsubsection{Elementwise Functions}

A function $f$ is said to be \emph{elementwise} if it applies some univariate function $\sigma: \reals \to \reals$ to each element of a tensor independently, and returns a tensor of the same shape.  If $f$ returns a tensor of rank $r$, a tensor interval extension of $f$ is then given by
\begin{equation}
  F(\sX)_{i_1, i_2, \ldots, i_r} \eqdef \left [ \inf_{x \in \sX_{i_1, i_2, \ldots, i_r}} \set {\sigma(x)}, \sup_{x \in \sX_{i_1, i_2, \ldots, i_r}} \set {\sigma(x)} \right ].
\end{equation}
If $\sigma$ is monotonically increasing, this definition simplifies to $F(\sX) \eqdef [f(\lep{\sX}), f(\rep{\sX})]$.  Similarly, if $f$ is monotonically decreasing, it simplifies to $F(\sX) \eqdef [f(\rep{\sX}), f(\lep{\sX})]$.

\subsubsection{Bilinear Operations} \label{sec:tensor_interval_bilinear}

\newcommand{\bilinear}{\mathsf{bilinear}}
\newcommand{\bi}{\mathsf{b}}
\newcommand{\xbi}{\mathsf{B}}

We now define tensor interval extensions for arbitrary bilinear operations (e.g., matrix multiplication, convolution).  These tensor interval extensions will not be exact, and instead will make various tradeoffs between tightness and computation.

Because any bilinear operation can be written as a sequence of scalar multiplications and additions, we can in principle define a tensor interval extension using the rules for scalar interval arithmetic.  To define this extension formally, let $\bi$ be an arbitrary bilinear function.  It can be shown that, because $\bi$ is bilinear, there exists a tensor $\mW$ such that
\begin{equation}
  \bi(\mX, \mY) = \inner{\inner{\mW}{\mX}}{\mY}.
\end{equation}
Using Proposition~\ref{prop:inner_exact}, it follows that for any tensor intervals $\sX$ and $\sY$ with $\mX \in \sX$ and $\mY \in \sY$,
\begin{equation} \label{eq:naive_extension}
  \bi(\mX, \mY) \in \inner{\inner{\mW}{\sX}}{\sY} \eqdef \xbi(\sX, \sY).
\end{equation}
Thus, $\xbi$ is a tensor interval extension of $\bi$.%

Although $\xbi$ is a perfectly reasonable tensor extension of $\bi$, and $\xbi(\sX, \sY)$ can in principle be computed efficiently, $\xbi(\sX, \sY)$ is not efficiently computable in machine learning frameworks such as TensorFlow, which do not have first-class support for intervals.  To make efficient use of such frameworks, we would like to define a tensor interval extension of $\bi$ that can be computed in terms of a small number of calls to $\bi$ itself.

In the special case where either $\sA$ or $\sB$ is a singleton tensor interval (i.e., a tensor interval whose left and right endpoints are identical) previous work on Interval Bound Propagation \cite{gowal2018effectiveness} provides a tensor interval extension that requires just two calls to $\bi$.  The following proposition generalizes this result to define a tensor interval extension that requires four calls to $\bi$ in general, but only requires two calls in the special case where either $\sA$ or $\sB$ is a singleton.

\newcommand{\propibp}{
Let $\bi(\mX, \mY) = \inner{\inner{\mW}{\mX}}{\mY}$ be a bilinear operation, where $\mW \ge \zeros$ (elementwise).  Then, for any tensor intervals $\sA$ and $\sB$, and any tensors $\mA \in \sA$, $\mB \in \sB$,
\begin{align*}
  \bi(\mA, \mB) \in \bi \paren { m(\sA), m(\sB) } + [-1, 1] \paren { \bi \paren { r(\sA), |m(\sB)| } + \bi \paren { |m(\sA)|, r(\sB) } + \bi \paren { r(\sA), r(\sB) } } 
\end{align*}
where the functions $m$ and $r$ were defined in \S\ref{sec:bprop_definitions}, and return the midpoint and radius of a tensor interval, respectively.

If $\lep{\sA} = \rep{\sA}$, this can be simplified to:
\begin{align*}
  \bi(\lep{\sA}, \mB) \in \bi \paren { m(\sA), m(\sB) } + [-1, 1] \bi \paren { |m(\sA)|, r(\sB) } \quad \forall \mB \in \sB
\end{align*}
while if $\lep{\sB} = \rep{\sB}$, it can be simplified to
\begin{align*}
  \bi(\mA, \lep{\sB}) \in \bi \paren { m(\sA), m(\sB) } + [-1, 1] \bi \paren { r(\sA), |m(\sB)| } \quad \forall \mA \in \sA.
\end{align*}
}
\newcommand{\propibpproof}{
Consider two arbitrary tensors $\mA \in \sA$ and $\mB \in \sB$.  Because $\bi$ is bilinear,
\begin{align}
  \bi(\mA, \mB)
  & = \bi(\mA, m(\sB) + \mB - m(\sB)) \nonumber \\
  & = \bi(\mA, m(\sB)) + \bi(\mA, \mB - m(\sB)) \nonumber \\
  & = \bi(m(\sA), m(\sB)) + \bi(\mA - m(\sA), m(\sB)) + \bi(m(\sA), \mB - m(\sB)) + \bi(m(\sA) - m(\sB), \mB - m(\sB)) \label{eq:bilinear_dist}.
\end{align}
Because $\mW \ge 0$ (elementwise),
\begin{equation} \label {eq:ibp_upper_a}
  \bi(\mA - m(\sA), m(\sB)) \le \bi(|\mA - m(\sA)|, |m(\sB)|) \le \bi(r(\sA), |m(\sB)|)
\end{equation}
and similarly,
\begin{equation} \label {eq:ibp_upper_b}
  \bi(m(\sA), \mB - m(\sB)) \le \bi( |m(\sA)|, r(\sB) ).
\end{equation}
Combining \eqref{eq:bilinear_dist}, \eqref{eq:ibp_upper_a} and \eqref{eq:ibp_upper_b}, we have
\begin{equation}
    \bi(\mA, \mB) \le\bi(m(\sA), m(\sB)) + \bi(r(\sA), |m(\sB)|) +  \bi(|m(\sA)|, r(\sB)) + \bi(r(\sA), r(\sB)). \label{eq:ibp_upper}
\end{equation}
We can similarly prove the lower bound
\begin{equation}
  \bi(\mA, \mB) \ge \bi(m(\sA), m(\sB)) - \bi(r(\sA), |m(\sB)|) - \bi(|m(\sA)|, r(\sB)) - \bi(r(\sA), r(\sB). \label{eq:ibp_lower}
\end{equation}
Combining \eqref{eq:ibp_upper} and \eqref{eq:ibp_lower} proves the first inequality stated in the proposition.  The simplification in the special case $\lep{\sA} = \rep{\sA}$ follows from noting that this implies $r(\sA) = \zeros$, and therefore $\bi(r(\sA), |m(\sB)|) = \zeros$.  The simplification in the special case $\lep{\sB} = \rep{\sB}$ follows similarly.
}
\begin{proposition} \label{prop:ibp}
\propibp
\end{proposition}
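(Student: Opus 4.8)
The plan is to exploit bilinearity to decompose $\bi(\mA, \mB)$ around the midpoints $m(\sA)$ and $m(\sB)$, and then bound the resulting deviation terms using the hypothesis $\mW \ge \zeros$. First I would write $\mA = m(\sA) + (\mA - m(\sA))$ and $\mB = m(\sB) + (\mB - m(\sB))$, and expand $\bi(\mA, \mB)$ by bilinearity into four pieces: the central term $\bi(m(\sA), m(\sB))$, two mixed terms $\bi(\mA - m(\sA), m(\sB))$ and $\bi(m(\sA), \mB - m(\sB))$, and the second-order term $\bi(\mA - m(\sA), \mB - m(\sB))$. Since $\mA \in \sA$ and $\mB \in \sB$, the deviations obey the elementwise bounds $|\mA - m(\sA)| \le r(\sA)$ and $|\mB - m(\sB)| \le r(\sB)$, which follow directly from the definitions of $m$ and $r$ in \S\ref{sec:bprop_definitions}.

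The key step is to bound each of the three deviation terms, both above and below, using the sign assumption on $\mW$. Writing $\bi(\mX, \mY) = \inner{\inner{\mW}{\mX}}{\mY}$ as a sum of terms of the form $\mW_{\cdots}\, \mX_{\cdots}\, \mY_{\cdots}$ with nonnegative weights, I would observe that replacing $\mX$ and $\mY$ by their elementwise absolute values can only increase each summand, giving $\bi(\mX, \mY) \le \bi(|\mX|, |\mY|)$ elementwise; the sign-flipped version gives $\bi(\mX, \mY) \ge -\bi(|\mX|, |\mY|)$. Combined with the monotonicity of $\bi$ in each nonnegative argument — again a consequence of $\mW \ge \zeros$ — the deviation bounds yield
\[
  |\bi(\mA - m(\sA), m(\sB))| \le \bi(r(\sA), |m(\sB)|),
\]
and analogously $|\bi(m(\sA), \mB - m(\sB))| \le \bi(|m(\sA)|, r(\sB))$ and $|\bi(\mA - m(\sA), \mB - m(\sB))| \le \bi(r(\sA), r(\sB))$.

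Summing the central term with these three bounds gives the elementwise upper bound on $\bi(\mA, \mB)$, and the matching lower bound follows from the sign-flipped inequalities. Reinterpreting the two elementwise inequalities as a single tensor-interval inclusion then produces the stated $[-1, 1]$ form. For the special cases, I would note that $\lep{\sA} = \rep{\sA}$ forces $r(\sA) = \zeros$, so the two terms containing $r(\sA)$ vanish (since $\bi$ is bilinear and hence $\bi(\zeros, \cdot) = \zeros$), leaving only $\bi(|m(\sA)|, r(\sB))$; the case $\lep{\sB} = \rep{\sB}$ is symmetric. I do not expect a serious obstacle; the one point requiring care is justifying the elementwise inequality $\bi(\mX, \mY) \le \bi(|\mX|, |\mY|)$ and the monotonicity of $\bi$ purely from $\mW \ge \zeros$ and the index-sum definition of the nested inner product, rather than from the structure of any particular bilinear operation such as matrix multiplication.
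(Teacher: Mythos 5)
Your proposal is correct and follows essentially the same route as the paper's proof: the same midpoint-radius decomposition of $\bi(\mA,\mB)$ into four bilinear terms, the same use of $\mW \ge \zeros$ to bound each deviation term via $\bi(\mX,\mY) \le \bi(|\mX|,|\mY|)$ and monotonicity, and the same observation that $r(\sA) = \zeros$ kills the relevant terms in the singleton case. Your treatment of the special cases is in fact slightly more explicit than the paper's, which mentions only one of the two vanishing terms.
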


We also provide an alternative tensor interval extension that produces a tighter interval at the cost of additional computation.  This result is based on the following lemma.

\newcommand{\lemlinearmultrule}{
For intervals $[\lep{x}, \rep{x}]$ and $[\lep{y}, \rep{y}]$,
\[
  [\lep{x}, \rep{x}] \cdot [\lep{y}, \rep{y}] \subseteq [\lep{x}^+ \lep{y}^+ + \rep{x}^+ \lep{y}^- + \lep{x}^- \rep{y}^+ + \rep{x}^- \rep{y}^-  , \rep{x}^+ \rep{y}^+ + \lep{x}^+ \rep{y}^- + \rep{x}^- \lep{y}^+ + \lep{x}^- \lep{y}^-]
\]
where for $z \in \reals$, we define $z^+ \eqdef \max \set {z, 0}$ and $z^- \eqdef \min \set {z, 0}$.
}
\newcommand{\lemlinearmultruleproof}{
First note that for any $z \in \reals$, $z = z^+ + z^-$.  Therefore,
\begin{align*}
  [\lep{x}, \rep{x}] \cdot [\lep{y}, \rep{y}]
  & = ([\lep{x}^+, \rep{x}^+] + [\lep{x}^-, \rep{x}^-]) \cdot ([\lep{y}^+, \rep{y}^+] + [\lep{y}^-, \rep{y}^-]) \\
  & \subseteq [\lep{x}^+, \rep{x}^+] \cdot [\lep{y}^+, \rep{y}^+] + [\lep{x}^+, \rep{x}^+] \cdot [\lep{y}^-, \rep{y}^-] + [\lep{x}^-, \rep{x}^-] \cdot [\lep{y}^+, \rep{y}^+] + [\lep{x}^-, \rep{x}^-] \cdot [\lep{y}^-, \rep{y}^-] \\
  & = [\lep{x}^+ \lep{y}^+, \rep{x}^+ \rep{y}^+] + [\rep{x}^+ \lep{y}^-, \lep{x}^+ \rep{y}^-] + [\lep{x}^- \rep{y}^+, \rep{x}^- \lep{y}^+ ] + [\rep{x}^- \rep{y}^-, \lep{x}^- \lep{y}^-] \\
  & = [\lep{x}^+ \lep{y}^+ + \rep{x}^+ \lep{y}^- + \lep{x}^- \rep{y}^+ + \rep{x}^- \rep{y}^-  , \rep{x}^+ \rep{y}^+ + \lep{x}^+ \rep{y}^- + \rep{x}^- \lep{y}^+ + \lep{x}^- \lep{y}^-  ].
\end{align*}
}
\begin{lemma} \label{lem:linear_mult_rule}
\lemlinearmultrule
\end{lemma}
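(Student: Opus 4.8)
The plan is to decompose each interval along the split $z = z^+ + z^-$, apply sub-distributivity to reduce to a sum of four sign-definite products, and then add the resulting intervals endpoint by endpoint.

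First I would record the elementary identity $z = z^+ + z^-$, valid for every $z \in \reals$, together with the fact that both maps $z \mapsto z^+$ and $z \mapsto z^-$ are monotonically nondecreasing. The latter guarantees $\lep{x}^+ \le \rep{x}^+$ and $\lep{x}^- \le \rep{x}^-$, so that $[\lep{x}^+, \rep{x}^+]$ and $[\lep{x}^-, \rep{x}^-]$ are genuine intervals and $[\lep{x}, \rep{x}] = [\lep{x}^+, \rep{x}^+] + [\lep{x}^-, \rep{x}^-]$; the same holds for $y$. Expanding the product $([\lep{x}^+,\rep{x}^+]+[\lep{x}^-,\rep{x}^-]) \cdot ([\lep{y}^+,\rep{y}^+]+[\lep{y}^-,\rep{y}^-])$ via the sub-distributivity property \eqref{eq:interval_product_subdistributive} (applied twice) then yields a superset given by the sum of the four cross products.

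The point of the split is that each factor is now sign-definite: $[\lep{x}^+,\rep{x}^+]$ and $[\lep{y}^+,\rep{y}^+]$ have nonnegative endpoints, while $[\lep{x}^-,\rep{x}^-]$ and $[\lep{y}^-,\rep{y}^-]$ have nonpositive endpoints. For a product of two intervals whose endpoints all have a known sign, the interval product reduces to an explicit pairing of endpoints, so each of the four terms can be written exactly: e.g.\ the nonnegative-times-nonnegative term is $[\lep{x}^+\lep{y}^+,\, \rep{x}^+\rep{y}^+]$, the nonnegative-times-nonpositive term is $[\rep{x}^+\lep{y}^-,\, \lep{x}^+\rep{y}^-]$, and so on. Finally, since interval addition simply adds left endpoints and right endpoints (using \eqref{eq:interval_sums_commutative}), summing the four sign-definite intervals collects the left endpoints into $\lep{x}^+\lep{y}^+ + \rep{x}^+\lep{y}^- + \lep{x}^-\rep{y}^+ + \rep{x}^-\rep{y}^-$ and the right endpoints into $\rep{x}^+\rep{y}^+ + \lep{x}^+\rep{y}^- + \rep{x}^-\lep{y}^+ + \lep{x}^-\lep{y}^-$, which is exactly the claimed enclosure.

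The main obstacle — really the only place requiring care — is the sign bookkeeping for the two mixed-sign products: for a nonnegative interval times a nonpositive interval one must correctly identify which endpoint pairing gives the minimum (largest-magnitude negative) and which gives the maximum (smallest-magnitude negative) product, and dually for nonpositive-times-nonpositive. Everything else is routine, and I would note that the use of $\subseteq$ rather than equality in the sub-distributivity step is precisely what makes the conclusion an enclosure rather than an identity.
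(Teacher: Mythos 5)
Your proof is correct and follows essentially the same route as the paper's: decompose each interval via $z = z^+ + z^-$, apply sub-distributivity \eqref{eq:interval_product_subdistributive} to reduce to the four sign-definite cross products, evaluate each product exactly by the appropriate endpoint pairing, and sum the resulting intervals endpoint by endpoint. The only cosmetic difference is that you make explicit the monotonicity check that $[\lep{x}^+,\rep{x}^+]$ and $[\lep{x}^-,\rep{x}^-]$ are genuine intervals, which the paper leaves implicit.
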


To understand the uses and limitations of Lemma~\ref{lem:linear_mult_rule}, it is useful to compare it to the product rule for interval arithmetic, namely
\begin{equation} \label{eq:product_rule}
  [\lep{x}, \rep{x}] \cdot [\lep{y}, \rep{y}] = [ \min \set { \lep{x} \lep{y}, \lep{x} \rep{y}, \rep{x} \lep{y}, \rep{x} \rep{y} }, \max \set { \lep{x} \lep{y}, \lep{x} \rep{y}, \rep{x} \lep{y}, \rep{x} \rep{y} } ].
\end{equation}
It can be shown that if $0 \notin [\lep{x}, \rep{x}]$, or if $0 \notin [\lep{y}, \rep{y}]$, then the interval given by Lemma~\ref{lem:linear_mult_rule} coincides with the interval given by the product rule.  But if $0 \in [\lep{x}, \rep{x}]$ and $0 \in [\lep{y}, \rep{y}]$, the interval given by Lemma~\ref{lem:linear_mult_rule} can be looser.  For example, if $[\lep{x}, \rep{x}] = [-2, 3]$ and $[\lep{y}, \rep{y}] = [-5, 7]$, then the product rule gives $[\lep{x}, \rep{x}] \cdot [\lep{y}, \rep{y}] = [-15, 21]$, but Lemma~\ref{lem:linear_mult_rule} gives $[\lep{x}, \rep{x}] \cdot [\lep{y}, \rep{y}] \subseteq [ 0 + 3 \cdot -5 + -2 \cdot 7 + 0, 3 \cdot 7 + 0 + 0 + -2 \cdot -5 ] = [-29, 31]$.  

The virtue of Lemma~\ref{lem:linear_mult_rule} is that, unlike the product rule \eqref{eq:product_rule}, it gives an interval that is \emph{linear} as a function of $\lep{x}^-$, $\rep{x}^+$, $\lep{y}^-$, and $\rep{y}^+$.  This linearity allows us to prove the following theorem, which defines an alternative tensor interval extension of an arbitrary bilinear operation.

\newcommand{\thmapplybilineartensorinterval}{
Let $\bi$ be a bilinear function.  For tensor intervals $\sA, \sB$, and tensors $\mA \in \sA$, $\mB \in \sB$,
\begin{align*}
  \bi(\mA, \mB)
  & \in \left [\bi(\lep{\sA}^+, \lep{\sB}^+) + \bi(\rep{\sA}^+, \lep{\sB}^-) + \bi(\lep{\sA}^-, \rep{\sB}^+) + \bi(\rep{\sA}^-, \rep{\sB}^-) , \right . \\
  & \quad \quad \left . \bi(\rep{\sA}^+, \rep{\sB}^+) + \bi(\lep{\sA}^+, \rep{\sB}^-) + \bi(\rep{\sA}^-, \lep{\sB}^+) + \bi(\lep{\sA}^-, \lep{\sB}^-) \right ]
\end{align*}
where for any tensor $\mZ$ we define $\mZ^+ \eqdef \max \set {\mZ, 0}$ and $\mZ^- \eqdef \min \set {\mZ, 0}$ (and the minimum and maximum are elementwise).
}
\newcommand{\thmapplybilineartensorintervalproof}{
Without loss of generality, consider a scalar-valued and vector-variate bilinear operation $\bi: \reals^n \times \reals^m \to \reals$.  Because $\bi$ is bilinear, there exists a matrix $\mW \in \reals^{n \times m}$ such that, for any $\va \in \reals^n$ and $\vb \in \reals^m$,
\[
  \bi(\va, \vb) = \sum_{i=1}^n \sum_{j=1}^m \va_i \mW_{ij} \vb_j.
\]
It then follows from the properties of interval arithmetic that, for tensor intervals $\sA \in (\reals^n)^2$ and $\sB \in (\reals^m)^2$,
\[
  \bi(\sA, \sB) \subseteq \sum_{i=1}^n \sum_{j=1}^m \sA_i \mW_{ij} \sB_j.
\]
Applying Lemma~\ref{lem:linear_mult_rule} to each term on the right hand side, then rewriting the result in terms of applications of $\bi$ to the end points of $\sA$ and $\sB$, completes the proof.
}
\begin{theorem} \label{thm:apply_bilinear_tensor_interval}
\thmapplybilineartensorinterval
\end{theorem}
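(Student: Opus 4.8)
The plan is to reduce the general tensor-valued bilinear operation to the scalar-valued, vector-variate case, apply the interval product bound from Lemma~\ref{lem:linear_mult_rule} entrywise, and then reassemble the resulting sums into applications of $\bi$ evaluated at the positive and negative parts of the endpoints. The key observation enabling the reduction is that a bilinear function is completely determined by its action on coordinates: there exists a weight tensor $\mW$ with $\bi(\mX, \mY) = \inner{\inner{\mW}{\mX}}{\mY}$ (this is exactly the representation invoked in \eqref{eq:naive_extension}). Since each output component of $\bi$ is a scalar-valued bilinear form in the inputs, and the bound we wish to prove is stated componentwise (the endpoints are tensors of the same shape as $\bi(\mA,\mB)$), it suffices to establish the bound for a single scalar-valued, vector-variate $\bi: \reals^n \times \reals^m \to \reals$.

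First I would write $\bi(\va, \vb) = \sum_{i=1}^n \sum_{j=1}^m \va_i \mW_{ij} \vb_j$ for the scalar-valued case. Lifting this to tensor intervals via the operational semantics \eqref{eq:tensor_interval_semantics} and the subdistributivity property \eqref{eq:interval_product_subdistributive}, I get the enclosure
\begin{equation}
  \bi(\sA, \sB) \subseteq \sum_{i=1}^n \sum_{j=1}^m \sA_i \mW_{ij} \sB_j,
\end{equation}
where each $\sA_i \mW_{ij} \sB_j$ is a scalar interval product (the constant $\mW_{ij}$ being absorbed into one of the intervals). The next step is to apply Lemma~\ref{lem:linear_mult_rule} to each scalar product $\sA_i \sB_j$, which expresses the enclosing interval's left and right endpoints as sums of eight bilinear-in-the-signed-parts terms such as $\lep{x}^+ \lep{y}^+$ and $\rep{x}^+ \lep{y}^-$.

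The crucial and most delicate step is the reassembly. Summing the Lemma~\ref{lem:linear_mult_rule} endpoints over $i$ and $j$, I would group the four terms contributing to the lower endpoint by their sign-pattern: the $\lep{\sA}^+ \lep{\sB}^+$ contributions collect into $\sum_{i,j} \lep{\sA}_i^+ \mW_{ij} \lep{\sB}_j^+ = \bi(\lep{\sA}^+, \lep{\sB}^+)$, and analogously for the other three. Here is where the \emph{linearity} emphasized in the paragraph preceding the theorem is essential: because each Lemma~\ref{lem:linear_mult_rule} endpoint is a fixed sum of products of the signed parts (rather than a min/max over sign combinations as in the product rule \eqref{eq:product_rule}), the double sum commutes with this grouping, and each group is exactly a bilinear form in the signed endpoint tensors, i.e.\ an application of $\bi$. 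The main obstacle will be bookkeeping: verifying that the eight cross-terms from Lemma~\ref{lem:linear_mult_rule} map to precisely the four lower-endpoint and four upper-endpoint $\bi$-evaluations claimed in the theorem, with the correct pairing of positive/negative parts of $\lep{\sA}, \rep{\sA}$ against those of $\lep{\sB}, \rep{\sB}$, and confirming that the sign of $\mW_{ij}$ does not disrupt the grouping (it does not, since $\mW$ enters linearly and is fixed). Once this correspondence is checked, the componentwise bounds assemble into the stated tensor-interval enclosure, completing the proof.
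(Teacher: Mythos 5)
Your proposal retraces the paper's own proof essentially line for line: the same reduction to a scalar-valued, vector-variate $\bi$, the same representation $\bi(\va,\vb) = \sum_{i=1}^n \sum_{j=1}^m \va_i \mW_{ij} \vb_j$, the same enclosure $\bi(\sA,\sB) \subseteq \sum_{i,j} \sA_i \mW_{ij} \sB_j$, followed by termwise application of Lemma~\ref{lem:linear_mult_rule} and regrouping of the endpoint sums into evaluations of $\bi$ at the signed parts of the endpoints. In approach there is nothing to distinguish the two.

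There is, however, a genuine gap, and it sits exactly at the point you flagged and then waved away: the sign of $\mW_{ij}$ \emph{does} disrupt the grouping. Lemma~\ref{lem:linear_mult_rule} gives $\sA_i \sB_j \subseteq [L_{ij}, U_{ij}]$, and multiplying by $\mW_{ij}$ yields $[\mW_{ij} L_{ij}, \mW_{ij} U_{ij}]$ only when $\mW_{ij} \ge 0$; for $\mW_{ij} < 0$ the endpoints swap. The lower endpoint of the summed enclosure is therefore
\[
  \sum_{i,j:\, \mW_{ij} \ge 0} \mW_{ij} L_{ij} \;+ \sum_{i,j:\, \mW_{ij} < 0} \mW_{ij} U_{ij},
\]
which coincides with the claimed quantity $\sum_{i,j} \mW_{ij} L_{ij} = \bi(\lep{\sA}^+, \lep{\sB}^+) + \bi(\rep{\sA}^+, \lep{\sB}^-) + \bi(\lep{\sA}^-, \rep{\sB}^+) + \bi(\rep{\sA}^-, \rep{\sB}^-)$ only if every negative-weight term has $L_{ij} = U_{ij}$. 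Indeed the theorem's conclusion is false for mixed-sign $\mW$: take $\bi(\va, b) = (\va_1 - \va_2)\, b$, so $\mW = (1, -1)^\transpose$, with $\sA = [\lep{\sA}, \rep{\sA}]$, $\lep{\sA} = (0,0)$, $\rep{\sA} = (1,1)$, and $\sB = [1,2]$. All negative parts vanish and $\lep{\sA}^+ = \lep{\sA}^- = \rep{\sA}^- = (0,0)$, so every term in the claimed enclosure is either $\bi$ applied to a zero argument or $\bi(\rep{\sA}^+, \cdot) = \bi((1,1), \cdot) = 0$; the claimed enclosure is the single point $\set{0}$, yet $\bi((1,0), 2) = 2$ with $(1,0) \in \sA$ and $2 \in \sB$. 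The repair is to add the hypothesis $\mW \ge \zeros$ (elementwise), exactly as Proposition~\ref{prop:ibp} does; under that hypothesis your regrouping goes through verbatim, and it covers the motivating cases (matrix multiplication, convolution), where $\mW$ is a $0/1$ tensor. In fairness, the paper's own one-line proof (``rewriting the result in terms of applications of $\bi$ to the end points'') glosses over the identical point, and its theorem statement omits the nonnegativity hypothesis, so you have faithfully reproduced the paper's argument, gap included --- but your parenthetical justification that linearity in $\mW$ makes the signs harmless is not correct, and without nonnegativity no correct justification exists.
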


\subsubsection{Summary}

The following table summarizes the tensor interval extensions of atomic functions that we have derived in this section.  The first three rows are simply element-wise versions of the corresponding rules from interval arithmetic.  The remaining rows give the tensor interval extensions of bilinear operations presented in equation \eqref{eq:naive_extension}, Proposition~\ref{prop:ibp}, and Theorem~\ref{thm:apply_bilinear_tensor_interval}.

\begin{table*}[h]
        \caption{Tensor interval extensions of atomic functions.}
        \label{tab:tensor_interval_extensions}
  \centering
        \begin{small}
        \begin{sc}
                \begin{tabular}{llll}
    \toprule
          Function & Tensor interval extension(s) \\
    \midrule
      $f(\mX, \mY) = \mX + \mY$ & $F(\sX, \sY) = [\lep{\sX} + \lep{\sY}, \rep{\sX} + \rep{\sY} ]$ \vspace{.2cm} \\ 
      $f(\mX, \mY) = \mX \odot \mY$ & $\makecell[l]{F(\sX, \sY) = [ \min \set {  \lep{\sA} \odot \lep{\sB}, \lep{\sA} \odot \rep{\sB}, \rep{\sA} \odot \lep{\sB}, \rep{\sA} \odot \rep{\sB}  },\\ \quad \quad \quad \quad \quad \max \set {  \lep{\sA} \odot \lep{\sB}, \lep{\sA} \odot \rep{\sB}, \rep{\sA} \odot \lep{\sB}, \rep{\sA} \odot \rep{\sB}  } ]}$ \vspace{.2cm} \\
      \makecell[l]{Any elementwise $f$\\($f(\mX)_{i_1, i_2, \ldots, i_r} = \sigma(\mX_{i_1, i_2, \ldots, i_r})$)} & $F(\sX)_{i_1, i_2, \ldots, i_r} \eqdef \left [ \inf_{x \in \sX_{i_1, i_2, \ldots, i_r}} \set {\sigma(x)}, \sup_{x \in \sX_{i_1, i_2, \ldots, i_r}} \set {\sigma(x)} \right ]$ \vspace{.2cm} \\

    \multirow{2}{*}{\makecell[l]{\\ Any bilinear $f$\\($f(\mX, \mY) = \inner{\inner{\mW}{\mX}}{\mY}$)} }& $F(\sX, \sY) = \inner{\sX}{\inner{\mW}{\sY}}$ \\

  & $\makecell[l]{F(\sX, \sY) = f \paren { m(\sX), m(\sY) } \\ \quad \quad \quad \quad \quad + [-1, 1] \paren { f \paren { r(\sX), |m(\sY)| } + f \paren { |m(\sX)|, r(\sY) } + f \paren { r(\sX), r(\sY) } }}$ 
  \\  %

      & $\makecell[l]{F(\sX, \sY) = \left [f(\lep{\sX}^+, \lep{\sY}^+) + f(\rep{\sX}^+, \lep{\sY}^-) + f(\lep{\sX}^-, \rep{\sY}^+) + f(\rep{\sX}^-, \rep{\sY}^-) , \right . \\
  \quad \quad \quad \quad \quad \left . f(\rep{\sX}^+, \rep{\sY}^+) + f(\lep{\sX}^+, \rep{\sY}^-) + f(\rep{\sX}^-, \lep{\sY}^+) + f(\lep{\sX}^-, \lep{\sY}^-) \right ]}$ \\
\bottomrule
  \end{tabular}
  \end{sc}
  \end{small}
\end{table*}

\subsection{Tensor Interval Polynomial Extensions of Atomic Functions} \label {sec:tip_extensions}

Having developed tensor interval extensions of various atomic functions in the previous section, we are now ready to develop tensor interval \emph{polynomial} extensions (see Definition~\ref{def:tip_extension}) of these same functions.  These extended functions will form the basis of the \ref{alg:autoboundprop} algorithm we present in the next section.

\subsubsection{Bounding the Range of a Tensor Interval Polynomial}

As in \S\ref{sec:interval_polynomial_extensions}, our extended functions will be defined in terms of a function $\rangebound$, which bounds the range of an arbitrary polynomial over a trust region.  Given a tensor interval polynomial $\sP$ and a tensor interval $\sZ$ as arguments, the $\rangebound$ function returns a tensor interval that satisfies
\begin{equation} \label{eq:range_bound_tip}
	\mZ \in \sZ \implies \sP(\mZ) \subseteq \rangebound(\sP, \sZ).
\end{equation}
As in \S\ref{sec:interval_polynomial_extensions}, one option for the $\rangebound$ function is to simply evaluate $\sP$ at $\sZ$ using interval arithmetic, defining
\begin{equation} \label{eq:tip_default_range_bound}
  \rangebound(\sP, \sZ) \eqdef \sum_{i=0}^{\mathrm{degree}(\sP)} \inner{\sP_{[i]}} {\sZ^{\otimes i} }.
\end{equation}
However, as in \S\ref{sec:interval_polynomial_extensions}, other approaches are possible that allow for different tradeoffs between tightness and computation.

\subsubsection{Addition} \label {sec:tip_addition}

We first consider the problem of adding two degree $k$ tensor interval polynomials, say $\sA(\mZ) = \sum_{i=0}^k \inner {\sA_{[i]}} { \mZ^{\otimes i} }$ and $\sB(\mZ) = \sum_{i=0}^k \inner {\sB_{[i]}} { \mZ^{\otimes i} }$, where $\sA(\mZ)$ and $\sB(\mZ)$ have the same shape.
As defined in \S\ref{sec:semantics}, tensor interval addition is associative and commutative, and therefore 
\begin{equation}
  \sA(\mZ) + \sB(\mZ) = \sum_{i=0}^k \inner {\sA_i} { \mZ^{\otimes i} } + \inner {\sB_i} { \mZ^{\otimes i} }.
\end{equation}
Furthermore, it can be shown that for any tensor intervals $\sU$, $\sV$, and tensor $\mW$ of appropriate shapes, the inner product defined by \eqref{eq:inner} satisfies\footnote{In contrast, for tensor intervals $\sU, \sV, \sW$, we have only sub-distributivity: $\inner{\sU}{\sW} + \inner{\sV}{\sW} \subseteq \inner{\sU + \sV}{\sW}$.}
\begin{align}
  \inner{\sU}{\mW} + \inner{\sV}{\mW} = \inner{\sU + \sV}{\mW}.
\end{align}
Therefore,
\begin{equation}
  \sA(\mZ) + \sB(\mZ) = \sum_{i=0}^k \inner {\sA_i + \sB_i} { \mZ^{\otimes i} }.
\end{equation}
This equation defines an exact tensor interval polynomial extension of addition (over any trust regions).

\subsubsection{Multiplication} \label {sec:tip_multiplication}

Recall from \eqref{eq:interval_product_subdistributive} that interval multiplication is sub-distributive.  Likewise, elementwise multiplication of tensor intervals is sub-distributive: for tensor intervals $\sU$, $\sV$, $\sW$,
\begin{equation}
  \sU \odot (\sV + \sW) \subseteq \sU \odot \sV + \sU \odot \sW.
\end{equation}
Therefore, for interval polynomials $\sA(\mZ) = \sum_{i=0}^k \inner {\sA_{[i]}} { \mZ^{\otimes i} }$ and $\sB(\mZ) = \sum_{i=0}^k \inner {\sB_{[i]}} { \mZ^{\otimes i} }$, which return tensors of the same shape,
\begin{align}
  \sA(\mZ) \odot \sB(\mZ)
  & = \paren { \sum_{i=0}^k \inner{\sA_{[i]}} {\mZ^{\otimes i}} } \odot \paren { \sum_{j=1}^k \inner{ \sB_{[j]} }{ \mZ^{\otimes j} } } \nonumber \\
  & \subseteq \sum_{i=0}^k \sum_{j=0}^k \inner{ \sA_{[i]} }{ \mZ^{\otimes i} } \odot \inner{ \sB_{[j]} }{ \mZ^{\otimes j} }. \label{eq:expand_product}
\end{align}

To rewrite \eqref{eq:expand_product} as an interval polynomial, we will use the following proposition, whose proof is given in Appendix A.

\newcommand{\prophadamard}{
For a tensor $\mZ$, non-negative integers $p$ and $q$, length $s$ non-negative integer tuple $S$, tensor $\mU$ of shape $S + p*\shape(\mZ)$, and tensor $\sV$ of shape $S + q*\shape(\mZ)$,
\[
  \inner{\mU}{\mZ^{\otimes p}} \odot \inner{\mV}{\mZ^{\otimes q}} = \inner{\mU \otimes_s \mV}{\mZ^{\otimes (p+q)}}.
\]
Furthermore, for tensor intervals $\sU$ and $\sV$, of the same shape as $\mU$ and $\mV$ respectively,
\[
  \inner{\sU}{\mZ^{\otimes p}} \odot \inner{\sV}{\mZ^{\otimes q}} \subseteq \inner{\sU \otimes_s \sV}{\mZ^{\otimes (p+q)}}.
\]
}
\newcommand{\prophadamardproof}{
Assume without loss of generality that $\mZ$ is a vector, and that $s = 1$, so that both $\inner{\mU}{\mZ^{\otimes p}}$ and $\inner{\mB}{\mZ^{\otimes q}}$ are vectors.  Then, for any valid index $i$,
\begin{align}
  \paren { \inner{\mU}{\mZ^{\otimes p}} \odot \inner{\mV}{\mZ^{\otimes q}} }_i
  & = \paren { \sum_{j_1, j_2, \ldots, j_p} \mU_{i, j_1,j_2, \ldots j_p} \prod_{l=1}^p \mZ_{j_l} }
      \paren { \sum_{k_1, k_2, \ldots, k_q} \mV_{i, k_1,k_2, \ldots k_q} \prod_{m=1}^q \mZ_{k_m} } \nonumber \\
  & = \sum_{j_1, j_2, \ldots, j_p, k_1, k_2, \ldots, k_q} \mU_{i, j_1,j_2, \ldots j_p} \mV_{i, k_1,k_2, \ldots k_q} \prod_{l=1}^p \prod_{m=1}^q \mZ_{j_l} \mZ_{k_m} \nonumber \\
  & = \inner{\mU \otimes_s \mV}{\mZ^{\otimes (p+q)}}_i \label{eq:first_part}
\end{align}
where the last equality follows from the definitions of inner and outer products (\eqref{eq:inner} and \eqref{eq:generalized_outer}).

For the second part of the theorem, we have:
\begin{align}
  \inner{\sU}{\mZ^{\otimes p}} \odot \inner{\sV}{\mZ^{\otimes q}}
  &= \set { \inner{\mU}{\mZ^{\otimes p}} \odot \inner{\mV}{\mZ^{\otimes q}}: \mU \in \sU, \mV \in \sV } & \mbox{by Proposition~\ref{prop:inner_exact}} \nonumber \\
  &= \set {\inner{\mU \otimes_s \mV}{\mZ^{\otimes (p+q)}} : \mU \in \sU, \mV \in \sV} & \mbox{by \eqref{eq:first_part}} \nonumber  \\
  & \subseteq \inner{\sU \otimes_s \sV}{\mZ^{\otimes (p+q)}}
\end{align}
where the last line follows because the tensor interval inner and outer product operations are tensor interval extensions of the corresponding operations on tensors.
}
\begin{proposition} \label{prop:hadamard}
\prophadamard
\end{proposition}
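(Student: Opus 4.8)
The plan is to prove the two statements in sequence, establishing the equality for plain tensors first and then bootstrapping to the interval inclusion. By the definitions of the inner and outer products, both sides are computed independently for each of the $s$ ``batch'' indices, and each such index is carried along unchanged by $\inner{\cdot}{\cdot}$, $\odot$, and $\otimes_s$ alike: the contractions in the inner products only touch the $\shape(\mZ)$-axes, the Hadamard product acts elementwise over the batch axes, and $\otimes_s$ treats the first $s$ axes as batch. Consequently I would argue without loss of generality that $\mZ$ is a vector and $s = 1$ (flattening the batch axes into a single index and the $\shape(\mZ)$-axes into a single axis), so that $\inner{\mU}{\mZ^{\otimes p}}$ and $\inner{\mV}{\mZ^{\otimes q}}$ are both vectors indexed by a single batch index $i$.

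With this reduction in hand, the first identity is a direct index computation. Fixing $i$, I would expand $\inner{\mU}{\mZ^{\otimes p}}_i = \sum_{j_1, \ldots, j_p} \mU_{i, j_1, \ldots, j_p} \prod_{l=1}^p \mZ_{j_l}$ and the analogous expression for $\inner{\mV}{\mZ^{\otimes q}}_i$, multiply the two scalars, and merge the two sums into a single sum over the combined tuple $(j_1, \ldots, j_p, k_1, \ldots, k_q)$. The definition \eqref{eq:generalized_outer} gives $(\mU \otimes_s \mV)_{i, j_1, \ldots, j_p, k_1, \ldots, k_q} = \mU_{i, j_1, \ldots, j_p} \mV_{i, k_1, \ldots, k_q}$, while $(\mZ^{\otimes(p+q)})_{j_1, \ldots, j_p, k_1, \ldots, k_q} = \prod_l \mZ_{j_l} \prod_m \mZ_{k_m}$, so contracting these two via \eqref{eq:inner} recovers exactly the merged double sum. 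Matching the expressions termwise establishes the equality.

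For the interval version, I would invoke Proposition~\ref{prop:inner_exact}: since $\mZ$, and hence $\mZ^{\otimes p}$, is a genuine tensor rather than a tensor interval, the inner product is \emph{exact}, so $\inner{\sU}{\mZ^{\otimes p}} = \set{\inner{\mU}{\mZ^{\otimes p}} : \mU \in \sU}$ and likewise for $\sV$. The elementwise product of these two sets is then $\set{\inner{\mU}{\mZ^{\otimes p}} \odot \inner{\mV}{\mZ^{\otimes q}} : \mU \in \sU, \mV \in \sV}$, which by the first identity equals $\set{\inner{\mU \otimes_s \mV}{\mZ^{\otimes(p+q)}} : \mU \in \sU, \mV \in \sV}$. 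Finally, because the tensor interval inner and outer products are extensions of their tensor counterparts, this last set is contained in $\inner{\sU \otimes_s \sV}{\mZ^{\otimes(p+q)}}$, giving the claimed inclusion. The inclusion need not be an equality, since $\sU \otimes_s \sV$ over-approximates the set of products $\mU \otimes_s \mV$: the two factors become coupled once bundled into a single interval tensor.

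The main obstacle is purely the index bookkeeping in the second paragraph. One must verify that the axis ordering produced by $\otimes_s$ (batch index first, then the $p$ axes inherited from $\mU$, then the $q$ axes inherited from $\mV$) lines up with the axis ordering that the contraction $\inner{\cdot}{\mZ^{\otimes(p+q)}}$ expects, so that the product $\prod_l \mZ_{j_l} \prod_m \mZ_{k_m}$ is matched against the correct entry of $\mU \otimes_s \mV$. Once the reduction to $s = 1$ and vector $\mZ$ is in place there is no analytic difficulty; every step is a mechanical application of the defining summation formulas \eqref{eq:inner} and \eqref{eq:generalized_outer}, together with the exactness result of Proposition~\ref{prop:inner_exact}.
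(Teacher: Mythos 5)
Your proposal is correct and follows essentially the same route as the paper's own proof: the same without-loss-of-generality reduction to a vector $\mZ$ with $s=1$, the same index-merging computation for the tensor identity, and the same three-step argument for the interval case (exactness via Proposition~\ref{prop:inner_exact}, then the tensor identity, then the extension property of the interval inner and outer products). The only addition is your closing remark on why the inclusion can be strict, which the paper leaves implicit.
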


Applying Proposition~\ref{prop:hadamard} to each term of \eqref{eq:expand_product}, and letting $s$ denote the rank of $\sA(\mZ)$ (which by assumption equals the rank of $\sB(\mZ)$), we have
\begin{equation} \label{eq:degree_2k}
  \sA(\mZ) \odot \sB(\mZ)
  \subseteq \sum_{i=0}^k \sum_{j=0}^k \inner{ \sA_{[i]} \otimes_s \sB_{[j]} }{ \mZ^{\otimes (i + j)} }.
\end{equation}

The right hand side of \eqref{eq:degree_2k} is a degree $2 k$ tensor interval polynomial, and therefore does not define a degree $k$ tensor interval extension of elementwise multiplication.  To obtain the desired degree $k$ polynomial, we will use the following theorem.

\newcommand{\thmdegreereduction}{
For a tensor $\mZ$, non-negative integers $p$ and $q$, length $s$ non-negative integer tuple $S$, tensor $\mA$ of shape $S + p*\shape(\mZ)$, tensor $\mB$ of shape $S + q*\shape(\mZ)$, and non-negative integer $k \le p + q$,
\[
  \inner { \mA } { \mZ^{\otimes p} } \odot \inner { \mB } { \mZ^{\otimes q} } = \inner { \inner { \mA \otimes_s \mB } { \mZ^{\otimes (p + q - k)} } } { \mZ^{\otimes k} }.
\]
}
\newcommand{\thmdegreereductionproof}{
First observe that the outer product defined in \eqref{eq:outer} is associative, and therefore
\begin{equation} \label{eq:outer_associative}
\mZ^{\otimes(p + q)} = \mZ^{\otimes k} \otimes \mZ^{\otimes (p + q - k)}.
\end{equation}
The theorem then follows from Propositions \ref{prop:hadamard} and \ref{prop:inner_outer}:
\begin{align}
  \inner { \mA } { \mZ^{\otimes p} } \odot \inner { \mB } { \mZ^{\otimes q} }
  & = \inner { \mA \otimes_s \mB } { \mZ^{\otimes(p + q)} } & \mbox{by Proposition~\ref{prop:hadamard}} \nonumber \\
  & = \inner { \mA \otimes_s \mB } { \mZ^{\otimes k} \otimes \mZ^{\otimes (p + q - k)} }  & \mbox{by \eqref{eq:outer_associative}} \nonumber \\  
  & = \inner { \inner { \mA \otimes_s \mB } { \mZ^{\otimes (p + q - k)} } } { \mZ^{\otimes k} } & \mbox{by Proposition~\ref{prop:inner_outer}.}  \label {eq:thm4_first_part}
\end{align}
}
\begin{theorem} \label{thm:degree_reduction}
\thmdegreereduction
\end{theorem}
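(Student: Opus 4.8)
The plan is to recognize that essentially all of the work has already been done by Proposition~\ref{prop:hadamard}, so that the theorem is just a \emph{re-association} of a single contraction. The left hand side, $\inner{\mA}{\mZ^{\otimes p}} \odot \inner{\mB}{\mZ^{\otimes q}}$, is an elementwise product of two inner products, and Proposition~\ref{prop:hadamard} collapses exactly this form into a single inner product against $\mZ^{\otimes(p+q)}$ using the batched outer product. So the first step is simply to invoke that proposition to write $\inner{\mA}{\mZ^{\otimes p}} \odot \inner{\mB}{\mZ^{\otimes q}} = \inner{\mA \otimes_s \mB}{\mZ^{\otimes(p+q)}}$. The remaining task is purely to split this degree-$(p+q)$ contraction into the nested double contraction appearing on the right hand side.

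For the splitting step I would first use associativity of the outer product (immediate from the definition \eqref{eq:outer}) to factor the power as $\mZ^{\otimes(p+q)} = \mZ^{\otimes k} \otimes \mZ^{\otimes(p+q-k)}$, which is legal precisely because $k \le p+q$. Then I would apply the inner--outer product identity supplied by Proposition~\ref{prop:inner_outer}, namely that for a tensor $\mC$ whose trailing indices match those of $\mU \otimes \mV$ one has $\inner{\mC}{\mU \otimes \mV} = \inner{\inner{\mC}{\mV}}{\mU}$, i.e.\ contracting against an outer product can be carried out by first contracting against the \emph{trailing} factor $\mV$ and then against the leading factor $\mU$. Taking $\mC = \mA \otimes_s \mB$, $\mU = \mZ^{\otimes k}$, and $\mV = \mZ^{\otimes(p+q-k)}$ then produces exactly $\inner{\inner{\mA \otimes_s \mB}{\mZ^{\otimes(p+q-k)}}}{\mZ^{\otimes k}}$, which is the claimed right hand side. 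If the identity of Proposition~\ref{prop:inner_outer} is not in the exact form I need, I would instead verify it directly from the definition \eqref{eq:inner} by writing $\inner{\mC}{\mU \otimes \mV}_i = \sum_{u,v} \mC_{i,u,v}\,\mU_u \mV_v = \sum_u \bigl(\sum_v \mC_{i,u,v}\mV_v\bigr)\mU_u$, which is manifestly the nested form.

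The one place demanding care, and the main obstacle, is index bookkeeping around the ordering convention of the inner product. Since $\inner{\cdot}{\cdot}$ contracts the \emph{trailing} indices of its first argument, I must make sure the leading $s$ batch indices indexed by $S$ are carried through untouched while exactly the trailing $(p+q)$ copies of $\shape(\mZ)$ participate in each contraction; and I must split the power in the order $\mZ^{\otimes k} \otimes \mZ^{\otimes(p+q-k)}$ (not the reverse) so that the inner contraction against $\mZ^{\otimes(p+q-k)}$ leaves a tensor of shape $S + k*\shape(\mZ)$, which is then the correct shape to contract against $\mZ^{\otimes k}$ in the outer step. Everything else is a direct chaining of the two propositions and the associativity of $\otimes$, so once the ordering convention is pinned down the proof is a short three-line calculation.
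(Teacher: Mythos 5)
Your proposal is correct and follows essentially the same route as the paper's own proof: invoke Proposition~\ref{prop:hadamard} to collapse the elementwise product into $\inner{\mA \otimes_s \mB}{\mZ^{\otimes(p+q)}}$, split the power via associativity of $\otimes$, and re-associate with Proposition~\ref{prop:inner_outer}. Your attention to the ordering of the split ($\mZ^{\otimes k} \otimes \mZ^{\otimes(p+q-k)}$, contracting the trailing factor first) matches the paper's application of that proposition exactly.
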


Using Theorem~\ref{thm:degree_reduction} to rewrite \eqref{eq:degree_2k}, and letting $\integers_{0:k} \eqdef \set{0, 1, \ldots, k}$, we have for $\mZ \in \sZ$:
\begin{align}
  \sA(\mZ) \odot \sB(\mZ)
  & \subseteq \paren { \sum_{\substack{i, j \in \integers_{0:k}:\\i+j < k}} \inner { \sA_{[i]} \otimes_s \sB_{[j]} }  { \mZ^{\otimes i+j } }  }  + \sum_{\substack{i, j \in \integers_{0:k}:\\i+j \ge k}} \inner{ \inner { \sA_{[i]} \otimes_s \sB_{[j]} } { \mZ^{\otimes(i + j - k)}}   } { \mZ^{\otimes (i+j) } } \nonumber \\
  & \subseteq \paren { \sum_{\substack{i, j \in \integers_{0:k}:\\i+j < k}} \inner { \sA_{[i]} \otimes_s \sB_{[j]} }  { \mZ^{\otimes i+j } }  }  + \nonumber \\
  & \quad \quad \inner{ \rangebound\paren{\sum_{\substack{i, j \in \integers_{0:k}:\\i+j \ge k}} \inner { \sA_{[i]} \otimes_s \sB_{[j]} }  { \mZ^{\otimes(i + j - k)}}, \sZ }   } { \mZ^{\otimes (i+j) } }.
\end{align}
This equation defines an (inexact) tensor interval polynomial extension of elementwise multiplication over trust regions $(\sY_1, \sY_2)$ and $\sZ$ (where the trust regions $\sY_1$ and $\sY_2$ play no role).

\subsubsection{Exponentiation with a Non-Negative Integer Exponent} \label {sec:tip_exponentiation}

To compute $\sA(\mZ)^p$, for integer $p \ge 0$, we could simply apply the multiplication rule repeatedly.  However, we can obtain a tighter bound by expanding the polynomial, collecting terms, and using the power rule from interval arithmetic, as in \S\ref{sec:ab_exponentiation}.

\subsubsection{Elementwise Functions} \label {sec:tip_elementwise}

The rule for applying an elementwise function $\sigma$ to a tensor interval polynomial $\sA(\mZ) = \sum_{i=0}^k \inner { \sA_{[i]} } { \mZ^{\otimes i} }$ is conceptually simple: we compute a Taylor polynomial enclosure (itself a tensor interval polynomial) for $\sigma$, then compose it with $\sA(\mZ)$.

To formalize the rule, suppose we wish to define a degree $k$ tensor interval polynomial extension of $\sigma$ over trust regions $\sY$ and $\sZ$.  The first step is to compute a degree $k$ \emph{elementwise} tensor interval polynomial $\sS^\sY$, satisfying
\begin{equation} \label{eq:sigma_tif}
  \sigma(\mY) \in \sum_{i=0}^k \sS^\sY_{[i]} \odot \mY^i \quad \forall \mY \in \sY.
\end{equation}
For many elementwise functions $\sigma$ of interest (including $\exp$, $\log$, and various neural network activation functions), the theory we developed in \cite{streeter2023sharp} can be used to compute the tightest possible choice for the coefficients of $\sS^\sY$.%

It follows immediately from \eqref{eq:sigma_tif} (and the operational semantics defined in \S\ref{sec:semantics}) that
\begin{equation}
  \sigma(\sA(\mZ)) \subseteq \sum_{i=0}^k \sS^\sY_{[i]} \odot \sA(\mZ)^i.
\end{equation}
We could thus define a tensor interval polynomial extension of $\sigma$ over $\sY$ and $\sZ$ using the tensor interval polynomial extensions of elementwise multiplication and exponentiation defined in the previous section.  However, as in \S\ref{sec:interval_polynomial_extensions}, we can define a tighter extension which uses a single call to the $\rangebound$ function.  We denote this extension by $\sS^\sY \circ_{\sZ} \sA$.

\subsubsection{Bilinear Operations} \label {sec:tip_bilinear}

\newcommand{\pbi}{\mathsf{batched}}
\newcommand{\xpbi}{\mathsf{Batched}}

In \S\ref{sec:tensor_interval_bilinear}, we gave rules for applying an arbitrary bilinear operation to two tensor intervals.  Building on these results, the following theorem provides a rule for applying an arbitrary bilinear operation to a tensor interval polynomial.  For simplicity, we state the theorem for the case of a scalar-valued, vector-variate bilinear operation, however it extends readily to multi-valued operations with arguments of arbitrary shape.

\newcommand{\thmbilineartip}{
Let $\bi: \reals^n \times \reals^m \to \reals$ be a scalar-valued, vector-variate bilinear operation, and let $\mW \in \reals^{n \times m}$ be its transformation matrix, so that $\bi(\vx, \vy) = \vx^\tee \mW \vy$.  For any tensor $\mU$ whose first dimension has length $n$, and any tensor $\mV$ whose first dimension has length $m$, define
\[
  \pbi(\mU, \mV) \eqdef \sum_{p=1}^n \sum_{q=1}^m \mW_{pq} (\mU_{p} \otimes \mV_{q}).
\]
Let $\xpbi$ be a tensor interval extension of $\pbi$.

For any degree $k$ tensor interval polynomials $\sA$ and $\sB$, let the degree $k$ interval polynomial $\sQ(\sA,\sB)$ be defined by:
\[
  \sQ(\sA,\sB)_{[i]} \eqdef \begin{cases}
        \sum_{\substack{l, m \in \integers_{0:k}:l+m = i}} \xpbi\paren{\sA_{[l]}, \sB_{[m]}} & i < k  \\
        \rangebound\paren{\sum_{\substack{l, m \in \integers_{0:k}:\\l+m \ge k}} \inner { \xpbi(\sA_{[l]}, \sB_{[m]}) }  { \mZ^{\otimes(l + m - k)}}, \sZ } & i = k
      \end{cases}
\]
where $\integers_{0:k} \eqdef \set{0, 1, \ldots, k}$.

Then, for any degree $k$ tensor interval polynomials $\sA$ and $\sB$, and any tensor interval $\sZ$,
\[
  \set { \bi(\mX, \mY): \mX \in \sA(\mZ), \mY \in \sB(\mZ) } \subseteq \sum_{i=0}^k \inner {\sQ(\sA,\sB)_{[i]} } {\mZ^{\otimes i}} \quad \forall \mZ \in \sZ.
\]

Accordingly, for any tensor intervals $\sY_1$, $\sY_2$, and $\sZ$, $\sQ$ is a degree $k$ tensor interval polynomial extension of $\bi$ over trust regions $(\sY_1, \sY_2)$ and $\sZ$ (where $\sY_1$ and $\sY_2$ play no role).
}
\newcommand{\thmbilineartipproof}{
Let $\sA$ and $\sB$ be tensor interval polynomials of degree $k$.
Consider some fixed $\mZ \in \sZ$, and fixed tensors $\mX \in \sA(\mZ)$ and $\mY \in \sB(\mZ)$.  By Lemma~\ref{lem:tip_choice}, there exist tensors $\mA_0, \mA_1, \ldots, \mA_k$, with $\mA_l \in \sA_{[l]}$ for all $l$, such that
\begin{equation}
  \mX = \sum_{l=0}^k \inner { \mA_l } { \mZ^{\otimes l} }.
\end{equation}
 Similarly, there exist tensors $\mB_0, \mB_1, \ldots, \mB_k$, with $\mB_m \in \sB_{[m]}$ for all $m$, such that
 \begin{equation}
   \mY = \sum_{m=0}^k \inner { \mB_m } { \mZ^{\otimes m} }.
\end{equation}
Thus, because $\bi$ is bilinear,
\begin{equation} \label{eq:expand_bilinear}
  \bi( \mX, \mY) = \sum_{l=0}^k \sum_{m=0}^k \bi( \inner { \mA_l } { \mZ^{\otimes l} } , \inner { \mB_m } { \mZ^{\otimes m} } ).
\end{equation}
To complete the proof, we will show that for all $l$ and $m$,
\begin{equation} \label{eq:sts}
  \bi( \inner { \mA_l } { \mZ^{\otimes l} } , \inner { \mB_m } { \mZ^{\otimes m} } ) \in \inner { \inner { \xpbi(\sA_l, \sB_m) } { \mZ^{\otimes \max \set { 0, l + m - k } } } } {\mZ^{\otimes \min \set {k, l+m }} }.
\end{equation}
This suffices to prove the theorem because, combining \eqref{eq:expand_bilinear} and \eqref{eq:sts}, we have:
\begin{align}
  \bi( \mX, \mY) & \subseteq \sum_{l=0}^k \sum_{m=0}^k \inner { \inner { \xpbi(\sA_l, \sB_m) } { \mZ^{\otimes \max \set { 0, l + m - k } } } } {\mZ^{\otimes \min \set {k, l+m }} } \nonumber \\
  & \subseteq \sQ(\mZ)
\end{align}
where the second line follows from the definition of $\sQ$ (and the assumed behavior of the $\rangebound$ function).

To see that \eqref{eq:sts} holds, observe that for any tensor intervals $\sU$ and $\sV$, and tensors $\mU \in \sU$, $\mV \in \sV$,
\begin{align}
  \bi(\inner{\mU}{\mZ^{\otimes l}}, \inner{\mV}{ \mZ^{\otimes m}})
  & = \sum_{p,q} \mW_{pq} \inner{\mU}{\mZ^{\otimes l}}_p \inner{\mV}{ \mZ^{\otimes m}}_q \nonumber \\
  & = \sum_{p,q} \mW_{pq} \inner{\mU_{p}}{\mZ^{\otimes l}} \inner{\mV_{q}}{ \mZ^{\otimes m}}  & \mbox{by inner product def.} \nonumber \\
  & = \sum_{p,q} \mW_{pq} \inner{\mU_{p} \otimes \mV_{q} }{\mZ^{\otimes (l+m)}} & \mbox{by Proposition~\ref{prop:hadamard} } \nonumber \\
  & = \inner{\sum_{p,q} \mW_{pq} (\mU_{p} \otimes \mV_{q}) }{\mZ^{\otimes (l+m)}}  & \mbox{by inner product def.} \nonumber \\
  & = \inner { \pbi (\mU, \mV) } {\mZ^{\otimes (l+m)} } & \mbox {by def.\ of $\pbi$} \nonumber \\
  & \in \inner { \xpbi (\sU, \sV) } {\mZ^{\otimes (l+m)} } & \mbox {by def.\ of $\xpbi$} \nonumber \\
  & \subseteq \inner { \inner { \xpbi(\sU, \sV) } { \mZ^{\otimes \max \set { 0, l + m - k } } } } {\mZ^{\otimes \min \set {k, l+m }} } & \mbox{by Theorem~\ref{thm:degree_reduction}. }  \label{eq:rewrite_term}
\end{align}
Plugging $\mU = \mA_l$, $\sU = \sA_{[l]}$, $\mV = \mB_m$, and $\sV = \sB_{[m]}$ into \eqref{eq:rewrite_term} proves \eqref{eq:sts}, which proves the theorem.
}
\begin{theorem} \label{thm:bilinear_tip}
\thmbilineartip
\end{theorem}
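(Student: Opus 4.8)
The plan is to fix a single evaluation point and a single pair of realized tensors, reduce the set inclusion to a per-term inclusion, and then reassemble using the degree-reduction machinery. First I would fix an arbitrary $\mZ \in \sZ$ together with tensors $\mX \in \sA(\mZ)$ and $\mY \in \sB(\mZ)$; since the right-hand side does not depend on the choice of these tensors, it suffices to show $\bi(\mX, \mY) \in \sum_{i=0}^k \inner{\sQ(\sA,\sB)_{[i]}}{\mZ^{\otimes i}}$. The first genuine step is a realization step. Because the tensor interval inner product against the fixed tensor $\mZ^{\otimes j}$ is \emph{exact} (the second clause of Proposition~\ref{prop:inner_exact}), and because tensor interval addition has Minkowski-sum semantics (\S\ref{sec:semantics}), the tensor interval $\sA(\mZ) = \sum_{j=0}^k \inner{\sA_{[j]}}{\mZ^{\otimes j}}$ represents exactly the set $\set{\sum_{l=0}^k \inner{\mA_l}{\mZ^{\otimes l}} : \mA_l \in \sA_{[l]}}$. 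Hence there exist tensors $\mA_l \in \sA_{[l]}$ with $\mX = \sum_{l=0}^k \inner{\mA_l}{\mZ^{\otimes l}}$, and likewise tensors $\mB_m \in \sB_{[m]}$ with $\mY = \sum_{m=0}^k \inner{\mB_m}{\mZ^{\otimes m}}$.

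Next, bilinearity of $\bi$ expands the product into a double sum $\bi(\mX, \mY) = \sum_{l=0}^k \sum_{m=0}^k \bi(\inner{\mA_l}{\mZ^{\otimes l}}, \inner{\mB_m}{\mZ^{\otimes m}})$, and the technical heart is to bound each term. Writing $\bi(\vx, \vy) = \sum_{p,q} \mW_{pq} \vx_p \vy_q$, pulling the slice identity $\inner{\mA_l}{\mZ^{\otimes l}}_p = \inner{(\mA_l)_p}{\mZ^{\otimes l}}$ out of the inner products, and applying Proposition~\ref{prop:hadamard} to fuse the two outer powers of $\mZ$, I would obtain the identity $\bi(\inner{\mA_l}{\mZ^{\otimes l}}, \inner{\mB_m}{\mZ^{\otimes m}}) = \inner{\pbi(\mA_l, \mB_m)}{\mZ^{\otimes(l+m)}}$, recognizing the coefficient as precisely $\pbi$ applied to $\mA_l, \mB_m$. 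Since $\pbi(\mA_l, \mB_m) \in \xpbi(\sA_{[l]}, \sB_{[m]})$ by definition of a tensor interval extension, and since a single application of Theorem~\ref{thm:degree_reduction} peels off $\min\set{k,\,l+m}$ trailing copies of $\mZ$, each term lies in $\inner{\inner{\xpbi(\sA_{[l]}, \sB_{[m]})}{\mZ^{\otimes \max\set{0,\,l+m-k}}}}{\mZ^{\otimes \min\set{k,\,l+m}}}$.

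Finally I would sum these per-term inclusions over all $l, m$ and split the double sum according to whether $l + m < k$ or $l + m \ge k$. When $l + m < k$ the reduction is vacuous ($\max\set{0,\,l+m-k} = 0$) and the term contributes $\inner{\xpbi(\sA_{[l]}, \sB_{[m]})}{\mZ^{\otimes(l+m)}}$, which aggregates into the first case of $\sQ(\sA,\sB)_{[i]}$. When $l + m \ge k$ every surviving term carries a common trailing factor $\mZ^{\otimes k}$; collecting the inner factors $\inner{\xpbi(\sA_{[l]}, \sB_{[m]})}{\mZ^{\otimes(l+m-k)}}$ into one tensor interval polynomial and bounding its range over $\sZ$ with $\rangebound$ (legitimate since $\mZ \in \sZ$) reproduces exactly the $i = k$ case of $\sQ$. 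This yields the displayed inclusion for the fixed $\mZ$, and as $\mX, \mY$ were arbitrary it gives the stated set inclusion. The concluding sentence then follows by unwinding Definition~\ref{def:tip_extension}: for any $\mZ \in \sZ$ and any $\mX \in \sP_1(\mZ)$, $\mY \in \sP_2(\mZ)$ we have $\bi(\mX, \mY) \in \sQ(\sP_1, \sP_2)(\mZ)$, with the trust regions $\sY_1, \sY_2$ playing no role because every bound above depends only on the polynomial coefficients.

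The step I expect to be the main obstacle is the per-term rewriting $\bi(\inner{\mA_l}{\mZ^{\otimes l}}, \inner{\mB_m}{\mZ^{\otimes m}}) = \inner{\pbi(\mA_l, \mB_m)}{\mZ^{\otimes(l+m)}}$ together with the subsequent degree split: it requires carefully tracking the batch and contraction indices so that the generalized outer product $\otimes_s$ of Proposition~\ref{prop:hadamard} and the inner--outer factorization of Theorem~\ref{thm:degree_reduction} line up, and so that $\xpbi$ is threaded in at exactly the right moment (after identifying $\pbi$ but before the degree reduction, so that the extension acts on the coefficient tensor rather than on the value). The realization step, by contrast, is short once the exactness clause of Proposition~\ref{prop:inner_exact} is in hand, and the final assembly is routine bookkeeping.
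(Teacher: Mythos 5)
Your proposal is correct and takes essentially the same route as the paper's proof: fix $\mZ \in \sZ$, realize $\mX$ and $\mY$ through coefficient tensors $\mA_l \in \sA_{[l]}$, $\mB_m \in \sB_{[m]}$ (your inline realization step is exactly the paper's Lemma~\ref{lem:tip_choice}, justified from the same two ingredients, Proposition~\ref{prop:inner_exact} and the Minkowski semantics of tensor interval addition), expand by bilinearity, rewrite each term as $\inner{\pbi(\mA_l,\mB_m)}{\mZ^{\otimes(l+m)}}$, pass to $\xpbi$, split degrees via Theorem~\ref{thm:degree_reduction}, and assemble with $\rangebound$ according to the definition of $\sQ$. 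Even the ordering you flag as delicate (thread $\xpbi$ in after identifying $\pbi$ but before the degree reduction) matches the paper's chain of inclusions exactly, so there is nothing to add.
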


Note that for given bilinear operation $\bi$, Theorem~\ref{thm:bilinear_tip} can be used to define various interval polynomial extensions of $\bi$, depending on which tensor interval extension we use for the function $\pbi$ (as defined in the theorem statement).  In particular, the three tensor interval extensions presented in \S\ref{sec:tensor_interval_bilinear} provide three possible interval polynomial extensions of $\bi$, which make different tradeoffs between computation and tightness.

\begin{table*}[h]
        \caption{Degree $k$ tensor interval polynomial extensions of atomic functions over trust regions $\sY_1, \sY_2, \ldots, \sY_n$ and $\sZ$ (where $n$ is the number of arguments).}
        \label{tab:tip_extensions}
  \centering
        \begin{small}
        \begin{sc}
                \begin{tabular}{llll}
    \toprule
          Function & Tensor interval polynomial extension \\
    \midrule
      $f(\mX, \mY) = \mX + \mY$ & $F(\sA, \sB)_{[i]} = \sA_{[i]} + \sB_{[i]}$ \vspace{.3cm} \\ 
      $f(\mX, \mY) = \mX \odot \mY$
    & \makecell[l]{$F(\sA, \sB)_{[i]} = \begin{cases}
        \sum_{\substack{l, m \in \integers_{0:k}:l+m = i}} \sA_{[i]} \otimes_s \sB_{[j]} & i < k  \\
        \rangebound\paren{\sum_{\substack{l, m \in \integers_{0:k}:\\l+m \ge k}} \inner { \sA_{[l]} \otimes_s \sB_{[m]} }  { \mZ^{\otimes(l + m - k)}}, \sZ } & i = k
      \end{cases}$ \\
      where $s$ is the common rank of $\sA(\mZ)$ and $\sB(\mZ)$.
       } \vspace{.3cm} \\

      \makecell[l]{Any elementwise $f$\\($f(\mX)_{i_1, \ldots} = \sigma(\mX_{i_1, \ldots})$)} & \makecell[l]{$F(\sA) = \sS^{\sY_1} \circ_{\sZ} \sA$, 
      where $\sS^{\sY_1}$ is an elementwise tensor interval polynomial\\enclosure of $\sigma$ over $\sY_1$ (see \S\ref{sec:tip_elementwise}).
      } \vspace{.3cm} \\

    \makecell[l]{Any bilinear $f$\\($f(\mX, \mY) = \inner{\inner{\mW}{\mX}}{\mY}$)}
    & \makecell[l]{$F(\sA, \sB)_{[i]} = \begin{cases}
        \sum_{\substack{l, m \in \integers_{0:k}:l+m = i}} \xpbi\paren{\sA_{[l]}, \sB_{[m]}} & i < k  \\
        \rangebound\paren{\sum_{\substack{l, m \in \integers_{0:k}:\\l+m \ge k}} \inner { \xpbi(\sA_{[l]}, \sB_{[m]}) }  { \mZ^{\otimes(l + m - k)}}, \sZ } & i = k
      \end{cases}$ \\
      where $\xpbi$ is a tensor interval extension of the batched version \\of $f$ (see Theorem~\ref{thm:bilinear_tip}).
      } \\
\bottomrule
  \end{tabular}
  \end{sc}
  \end{small}
\end{table*}

\subsubsection{Summary}

Table~\ref{tab:tip_extensions} summarizes the tensor interval polynomial extensions we have just derived.

\subsection{Pseudocode and Analysis} \label{sec:autoboundprop_code}

Having defined rules for applying various functions to tensor interval polynomials, we are now in a position to define the \ref{alg:autoboundprop} algorithm, which computes a Taylor polynomial enclosure of an arbitrary vector-variate and vector-valued function $f$ composed of these functions.\footnote{The algorithm can be applied to tensor-variate and tensor-valued functions by appropriate reshaping.}

At a high level, the algorithm is simple: given a symbolic expression for a function $f: \reals^d \to \reals^s$, a center point $\vx_0 \in \reals^d$, and a target degree $k$, we compute the coefficients of a degree $k$ Taylor polynomial enclosure of $f$ by \emph{evaluating} f on the identity polynomial, using the tensor interval polynomial extensions of each atomic function.  These tensor interval polynomial extensions are defined over trust regions, which are computed by evaluating $f$ using the \emph{tensor interval} extensions of the atomic functions.  Pseudocode for the \ref{alg:autoboundprop} algorithm follows.

\begin{varalgorithm}{AutoBound}
  \begin{algorithmic}
  \caption{}
  \label{alg:autoboundprop}
  \STATE {\bf Hyperparameters}:
  \begin{enumerate}
     \item A table $\extable$, such that for any function $\sigma \in \primitives$:
  \begin{itemize}
      \item $\extable[\sigma, 0]$ is a tensor interval extension of $\sigma$, and
      \item for any integer $k$, $\extable[\sigma, k]$ is a function $F$ that defines a degree $k$ interval polynomial extension of $\sigma$, over trust regions provided as parameters to $F$.
  \end{itemize} 
  \item A function $\rangebound$, such that for any tensor interval polynomial $\sP$ and interval $\sZ$, $\mZ \in \sZ \implies \sP(\mZ) \subseteq \rangebound(\sP, \sZ)$.
  \end{enumerate}
  \STATE The default value of $\Sigma$ is given in Table~\ref{tab:tip_extensions}, and the default value of $\rangebound$ is given by \eqref{eq:tip_default_range_bound}.
  \algrule
  \STATE {\bf Input}: a symbolic expression $(\varset, \eqlist)$ for a function $f: \reals^d \to \reals^s$, a center point $\vx_0 \in \reals^d$, a vector interval $[\va, \vb]$, and a target degree $k \in \integers_{> 0}$.
  \STATE {\bf Output}: the coefficients of a tensor interval polynomial $\sP_n$, such that $f(\vx) \in \sP_n(\vx - \vx_0)$ for all $\vx \in [\va, \vb]$.
  \algrule
  \STATE Let $\varset = \set{v_0, v_1, \ldots, v_n}$, and let $\eqlist = \set{(\sigma_i, L_i)}_{i=1}^n$.
  \STATE Initialize $\sP_0 \leftarrow (\vx_0, \eye_{d \times d})$, $\sY_0 \leftarrow [\va, \vb]$, and $\sZ \leftarrow [\va - \vx_0, \vb - \vx_0]$.
  \FOR {$i$ from $1$ to $n$}
    \STATE Let $j_q$ be the index of the $q$th variable in $L_i$, and let $m$ be the length of $L_i$ (so  $L_i = (v_{j_1}, v_{j_2}, \ldots, v_{j_m})$).
    \STATE Set $\sP_i \leftarrow \extable[\sigma_i, k](\sP_{j_1}, \sP_{j_2}, \ldots, \sP_{j_m}; (\sY_{j_1}, \sY_{j_2}, \ldots, \sY_{j_m}), \sZ)$.
    \STATE Set $\sY_i^{(0)} \leftarrow \extable[\sigma_i, 0](\sY_{j_1}, \sY_{j_2}, \ldots, \sY_{j_m})$.
    \STATE Set $\sY_i^{(1)} \leftarrow \rangebound(\sP_i, \sZ)$.
    \STATE Set $\sY_i \leftarrow \sY_i^{(0)} \cap \sY_i^{(1)}$.
  \ENDFOR
  \STATE Return $\sP_n$.
\end{algorithmic}
\end{varalgorithm}

By adjusting the hyperparameter $\extable$, different tradeoffs can be made between the tightness of the tensor interval polynomial enclosures that \ref{alg:autoboundprop} returns and the computation required to compute their coefficients.  For example, for bilinear operations one can use any of the tensor interval extensions defined in Table~\ref{tab:tensor_interval_extensions}.

\newcommand{\thmautoboundprop}{
Assume that the \ref{alg:autoboundprop} hyperparameter $\extable$ has the properties stated in the pseudocode.  Then, given as input a symbolic expression $(\varset, \eqlist)$ defining a function $f: \reals^d \to \reals^s$, a vector $\vx_0 \in \reals^d$, and a target degree $k$, \ref{alg:autoboundprop} returns the coefficients of a tensor interval polynomial $\sP_n$ such that
\[
  f(\vx) \in \sP_n(\vx - \vx_0) \quad \forall \vx \in [\va, \vb].
\]
}
\newcommand{\thmautoboundpropproof}{
We will show inductively that, for $i = 0, 1, \ldots, n$, the algorithm maintains the following invariants:
\begin{enumerate}
  \item $v_i(\vx) \in \sY_i$ for all $\vx \in [\va, \vb]$.
  \item $v_i(\vx) \in \sP_i(\vx - \vx_0)$ for all $\vx \in [\va, \vb]$.
\end{enumerate}
For the base case $i = 0$, we have $\sY_0 = [\va, \vb]$, and invariant (1) holds trivially.  Similarly, because $\sP_0(\vx) = \vx_0 + \inner{\eye_{d \times d}}{\vx - \vx_0} = \vx$, invariant (2)  holds trivially.

Now consider some arbitrary $i > 0$, and assume the two invariants hold for all smaller $i$.  Consider iteration $i$ of the for loop, let $\Sigma_0$ and $\Sigma_k$ be defined as in the pseudocode, and let $L_i = (v_{j_1}, v_{j_2}, \ldots, v_{j_m})$ (as in the pseudocode).  By assumption, $\Sigma_0$ is a tensor interval extension of $\sigma_i$.  By the induction hypothesis, $v_{j_l}(\vx) \in \sY_{j_l}$ for $l \in \set{1, 2, \ldots, m}$ for all $\vx \in [\va, \vb]$.  Thus, for any $\vx \in [\va, \vb]$,
\begin{equation}
  v_i(\vx) = \sigma_i(v_{j_1}(\vx), v_{j_2}(\vx), \ldots, v_{j_l}(\vx)) \in \Sigma_0(\sY_{j_1}, \sY_{j_2}, \ldots, \sY_{j_l}) \subseteq \sY_j
\end{equation}
and invariant (1) is satisfied.

Similarly, using the assumption that $\Sigma_k$ is a degree $k$ tensor interval polynomial extension of $\sigma_i$ over $(\sY_{j_1}, \sY_{j_2}, \ldots, \sY_{j_m})$ and $\sZ$, for any $\vx \in \vx_0 + \sZ = [\va, \vb]$, we have:
\begin{equation}
  v_j(\vx) = \sigma_i(v_{j_1}(\vx), v_{j_2}(\vx), \ldots, v_{j_m}(\vx)) \in \Sigma_k( \sP_{j_1}, \sP_{j_2}, \ldots, \sP_{j_m})(\vx - \vx_0) = \sP_j(\vx - \vx_0) 
\end{equation}
and invariant (2) is satisfied.

Both invariants therefore hold for all $i$.  The theorem then follows from invariant (2), taking $i = n$.

}

The following theorem shows that \ref{alg:autoboundprop} always returns a valid tensor interval polynomial enclosure, provided that the hyperparameter $\extable$ satisfies the assumptions stated in the pseudocode.  The proof is a straightforward induction, and is given in Appendix A.

\begin{theorem} \label{thm:autoboundprop}
\thmautoboundprop
\end{theorem}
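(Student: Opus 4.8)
The plan is to prove Theorem~\ref{thm:autoboundprop} by induction on the index $i$ of the intermediate variables, exactly mirroring the structure of the proof of Theorem~\ref{thm:autobound_1d} in the one-dimensional case. Writing $v_i(\vx)$ for the value of the $i$th intermediate variable as a function of the input $\vx$, I would maintain two invariants for every $i \in \set{0, 1, \ldots, n}$: first, the range invariant $v_i(\vx) \in \sY_i$ for all $\vx \in [\va, \vb]$; and second, the enclosure invariant $v_i(\vx) \in \sP_i(\vx - \vx_0)$ for all $\vx \in [\va, \vb]$. The theorem is then the enclosure invariant specialized to $i = n$, since $v_n = f$ by the semantics of the symbolic expression.

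For the base case $i = 0$, both invariants hold essentially by construction. The algorithm initializes $\sY_0 \leftarrow [\va, \vb]$, and since $v_0(\vx) = \vx$ we trivially have $v_0(\vx) \in [\va, \vb] = \sY_0$. For the second invariant, $\sP_0$ is initialized to $(\vx_0, \eye_{d \times d})$, so $\sP_0(\vx - \vx_0) = \vx_0 + \inner{\eye_{d \times d}}{\vx - \vx_0} = \vx$, which contains (indeed equals) $v_0(\vx)$. For the inductive step, I would fix $i > 0$, assume both invariants for all smaller indices, and let $L_i = (v_{j_1}, \ldots, v_{j_m})$ be the argument list of the equation $(\sigma_i, L_i)$. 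The range invariant follows from the hypothesis that $\extable[\sigma_i, 0]$ is a tensor interval extension of $\sigma_i$: since $v_{j_l}(\vx) \in \sY_{j_l}$ for each $l$ by induction, the operational semantics of tensor interval extensions give $v_i(\vx) = \sigma_i(v_{j_1}(\vx), \ldots, v_{j_m}(\vx)) \in \extable[\sigma_i, 0](\sY_{j_1}, \ldots, \sY_{j_m}) = \sY_i^{(0)}$, and intersecting with $\sY_i^{(1)}$ only shrinks the set, so $v_i(\vx) \in \sY_i$.

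The enclosure invariant is the crux, and it is where Definition~\ref{def:tip_extension} does the work. By the hypothesis on $\extable$, the function $\extable[\sigma_i, k]$ is a degree $k$ tensor interval polynomial extension of $\sigma_i$ over the trust regions $(\sY_{j_1}, \ldots, \sY_{j_m})$ and $\sZ$. To invoke the definition I must check its preconditions for an arbitrary $\vx \in [\va, \vb]$: setting $\mZ \eqdef \vx - \vx_0 \in \sZ$ (since $\sZ = [\va - \vx_0, \vb - \vx_0]$), I need $v_{j_l}(\vx) \in \sY_{j_l}$ and $v_{j_l}(\vx) \in \sP_{j_l}(\mZ)$ for each $l$ — precisely the two induction hypotheses. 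The definition then yields $\sigma_i(v_{j_1}(\vx), \ldots, v_{j_m}(\vx)) \in \extable[\sigma_i, k](\sP_{j_1}, \ldots, \sP_{j_m})(\mZ) = \sP_i(\vx - \vx_0)$, which is the enclosure invariant at $i$. Since $v_i(\vx) = \sigma_i(v_{j_1}(\vx), \ldots, v_{j_m}(\vx))$, this closes the induction.

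I expect the argument to be almost entirely routine, as the paper has already front-loaded the conceptual difficulty into Definition~\ref{def:tip_extension} and into the theorems (Theorems~\ref{thm:degree_reduction} and~\ref{thm:bilinear_tip}, Proposition~\ref{prop:hadamard}) establishing that the table entries in Table~\ref{tab:tip_extensions} genuinely satisfy that definition. The only point demanding care is the precise bookkeeping of the two trust regions in Definition~\ref{def:tip_extension}: one must feed the \emph{same} point $\mZ = \vx - \vx_0$ into every argument polynomial simultaneously and verify that the same $\vx$ witnesses both the range and enclosure hypotheses, rather than treating the arguments independently. This is exactly the subtlety the paper's worked example following Definition~\ref{def:ip_extension} was designed to flag, so I would state the step explicitly but not belabor it.
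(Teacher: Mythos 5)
Your proposal follows the paper's own proof almost verbatim: the same induction on $i$, the same two invariants (range and enclosure), the same base case computation $\sP_0(\vx - \vx_0) = \vx_0 + \inner{\eye_{d \times d}}{\vx - \vx_0} = \vx$, and the same invocation of Definition~\ref{def:tip_extension} with $\mZ = \vx - \vx_0 \in \sZ$ for the inductive step. Structurally there is nothing to criticize.

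There is, however, one step where your justification is backwards. To finish the range invariant you argue that $v_i(\vx) \in \sY_i^{(0)}$ and that ``intersecting with $\sY_i^{(1)}$ only shrinks the set, so $v_i(\vx) \in \sY_i$.'' Shrinking works \emph{against} you here: from $v_i(\vx) \in \sY_i^{(0)}$ and $\sY_i = \sY_i^{(0)} \cap \sY_i^{(1)} \subseteq \sY_i^{(0)}$ one cannot conclude $v_i(\vx) \in \sY_i$; membership in $\sY_i^{(1)} = \rangebound(\sP_i, \sZ)$ must be established separately. It does hold, but only by combining the enclosure invariant at step $i$ (which, as you correctly note, depends only on the hypotheses at smaller indices) with the assumed property of the $\rangebound$ hyperparameter: since $\vx - \vx_0 \in \sZ$, we have $\sP_i(\vx - \vx_0) \subseteq \rangebound(\sP_i, \sZ) = \sY_i^{(1)}$, hence $v_i(\vx) \in \sY_i^{(1)}$, and intersecting with $\sY_i^{(0)}$ gives $v_i(\vx) \in \sY_i$. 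So the clean order within the inductive step is: prove invariant (2) first, then use it to complete invariant (1). To be fair, the paper's own proof is loose at exactly the same point (it writes the chain $v_i(\vx) \in \Sigma_0(\sY_{j_1}, \ldots, \sY_{j_m}) \subseteq \sY_i$, in which the final inclusion is reversed), so your proposal is no weaker than the published argument --- but the repair above is what actually closes the step.
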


\section{An Implementation in JAX}

We now briefly describe our implementation of the \ref{alg:autoboundprop} algorithm in JAX \cite{bradbury2018jax}.

Our implementation is built upon a few libraries which are not specific to JAX, but instead are designed to work with any Numpy-compatible API:
\begin{enumerate}
  \item a \emph{tensor interval arithmetic} library contains code for performing various operations on tensor intervals, according to the rules defined in \S\ref{sec:tensor_interval_extensions}, using any Numpy-like API as a back end.  The library contains support for elementary arithmetic operations (such as $+$ and $*$), as well as arbitrary bilinear operations (such as matrix multiplications or convolutions).
  \item a \emph{tensor interval polynomial arithmetic library} offers similar functionality for tensor interval polynomials, using the rules defined in \S\ref{sec:tip_extensions}.
  \item a \emph{sharp Taylor polynomial enclosure library} contains code for computing sharp Taylor polynomial enclosures for various elementwise functions (e.g., $\exp$, $\log$, $\mathrm{relu}$, $\mathrm{softplus}$), using the theory developed in \cite{streeter2023sharp}.
\end{enumerate}

Building on these libraries, we provide code that allows one to compute a Taylor polynomial enclosure for an arbitrary JAX-traceable python function.  To do so, the code first converts the python function to a JAX expression (Jaxpr), looks up the sharp Taylor polynomial enclosure of elementwise functions used in the Jaxpr, and then combines them according to the \ref{alg:autoboundprop} algorithm, making use of the libraries just described.

One important subtlety is that JAX does not have primitives for certain functions such as $\mathrm{softplus}$.  Instead, the $\mathrm{softplus}$ function is represented in terms of elementary functions such as $\exp$ and $\log$.  Thus, running our algorithm directly on the Jaxpr would not take advantage of the known sharp Taylor polynomial enclosure of $\mathrm{softplus}$, and would instead compute weaker bounds obtained via the sharp Taylor polynomial enclosures for $\exp$ and $\log$.  To work around this, our implementation has pattern-matching functionality that allows us to identify uses of the $\mathrm{softplus}$ function within a Jaxpr, and to treat them as a single atomic function with a known sharp Taylor polynomial enclosure.

Our code is available on GitHub at \url{http://github.com/google/autobound}.

\section{Future Work} \label{sec:ab_future}

In the future, we plan to extend the results presented in this chapter in several ways:
\begin{enumerate}
  \item \emph{Developing a reverse-mode algorithm.}  The \ref{alg:autoboundprop} algorithm operates in a manner analogous to forward-mode automatic differentiation, and thus is only efficient when the number of inputs to the function is small.  To enable additional applications, it would be very useful to develop a reverse-mode algorithm, which would let us efficiently compute Taylor remainder series bounds for functions with millions of inputs.  We provide a brief sketch of the ideas that make a reverse-mode algorithm possible below.
  \item \emph{Computing diagonal bounds.}  Our bounds on $R_{k-1}(\vx; f, \vx_0)$ are in terms of $(\vx - \vx_0)^{\otimes k}$.  Thus, the number of coefficients in the bound goes up exponentially with $k$.  In some applications, it would be useful to instead obtain a bound in terms of $(\vx - \vx_0)^{k}$, so that the number of coefficients in the bound is independent of $k$ (and linear in the dimension of $\vx$).
  \item \emph{Supporting other set arithmetics.}  Although we have stated our results in terms of interval arithmetic, the same approach could be used in conjunction with other set arithmetics, for example ellipsoid arithmetic.  This may yield tighter (and therefore more useful) bounds.
\end{enumerate}

To outline how a reverse-mode variant of \ref{alg:autoboundprop} would work, consider the function $f: \reals \to \reals$ defined by
  \[
   f(x) = \sigma_1(\sigma_2(\sigma_3(x))).
  \]
 \ref{alg:autoboundprop} would compute a Taylor polynomial enclosure of $f$ by first computing a Taylor polynomial enclosure (say, $P_2$) for $\sigma_2 \circ \sigma_3$, then composing a Taylor polynomial enclosure of $\sigma_1$ with $P_2$ to obtain a Taylor polynomial enclosure of $f$.  But it is also possible to work in the other direction, first computing a Taylor polynomial enclosure of $\sigma_1 \circ \sigma_2$ (as a function of $\sigma_3(x)$) and then composing it with a Taylor polynomial enclosure of $\sigma_3$ to obtain a Taylor polynomial enclosure of $f$.
 
This idea can be generalized to more complex functions, analogous to the work of Wang \cite{wang2017high}, who presented a generalization of reverse-mode automatic differentiation that computes higher-order derivatives.  Furthermore, by combining this approach with the diagonal bounds idea described in item (2), we expect to be able to compute quadratic enclosures of the form $f(\vx) \in \nabla f(\vx)^\tee (\vx - \vx_0) + I (\vx - \vx_0)^2$ using time and memory comparable to that required to compute the gradient.  Such enclosures could be very useful in the context of majorization-minimization optimization algorithms, as we discuss further in \cite{streeter2023universal}.

\chapter{Applications} \label{chap:applications}

In this chapter, we discuss four applications of the \ref{alg:autoboundprop} algorithm: automatically deriving majorization-minimization (MM) optimizers, verified global optimization, verified numerical integration, and automatically deriving sharper versions of Jensen's inequality.  The application to MM optimization is the subject of a companion paper \cite{streeter2023universal} which develops multiple MM optimizers, and uses them to perform hyperparameter-free training of deep neural networks.

\section{Automatically Deriving MM Optimization Algorithms}

The motivation for this work was to derive tighter majorizers for use in majorization-minimization (MM) optimization \cite{de2016block,lange2016mm}.  MM is a class of optimization methods that iteratively reduce a loss by minimizing a locally-tight upper bound on the loss, called a \emph{majorizer}.  On iteration $t$, an MM optimizer minimizes a majorizer that is tight at some point $x_t$ in order to obtain a point $x_{t+1}$ with lower (or in trivial cases, equal) loss, as illustrated in Figure~\ref{fig:mm}.  This process can be repeated until a local minimum is reached.

\begin{figure}[h]
\begin{center}
\includegraphics[width=0.5\linewidth]{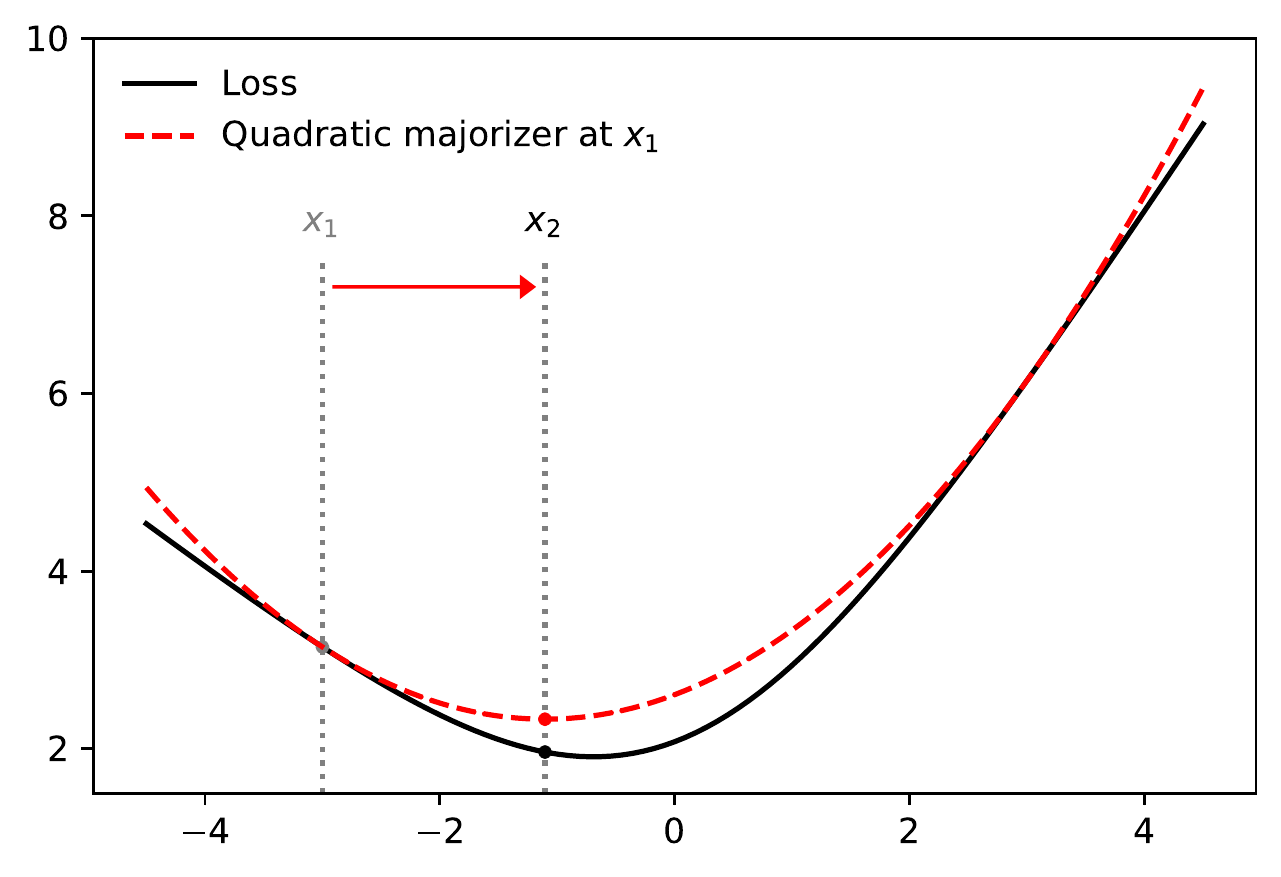}
\caption{Majorization-minimization (MM) optimization.  Starting at a point $x_1$, an MM optimizer computes an upper bound (majorizer) that is tight at $x_1$, then minimizes the upper bound to obtain a point $x_2$ with lower loss.  The process can be repeated until a local minimum is reached.}
\label{fig:mm}
\end{center}
\end{figure}

By design, the upper bound provided by a Taylor polynomial enclosure is a majorizer.  \ref{alg:autoboundprop} can therefore immediately be used to automatically derive MM optimization algorithms for arbitrary losses.  Due to the time and memory requirements of \ref{alg:autoboundprop}, a direct application of this idea is only practical for losses that depend on a small number of scalar variables.  However, the approach can still be applied to high-dimensional problems by using \ref{alg:autoboundprop} to solve a lower-dimensional subproblem, such as computing the learning rate for each step of gradient descent.
In a companion paper \cite{streeter2023universal}, we explore this idea in detail and show that it can lead to globally convergent, hyperparameter-free optimizers for training deep neural networks.

\section{Verified Global Optimization}

Global optimization is the subject of a vast literature.  A large subset of this literature assumes the loss function being minimized is deterministic and available in symbolic form (as opposed to being accessible only via queries, for example if the loss is the result of a physical experiment).

Verified global optimization (e.g., \cite{berner1996new,caprani1993use,hansen1979global,hansen2003global,kearfott1996review,ichida1979interval,makino2005verified,ratschek1988new}) seeks to find a \emph{provable} global minimum of an arbitrary (possibly non-convex) loss, provided in symbolic form.  This goal differs from  that of MM optimization, which only seeks to converge to a local minimum.

Verified global optimization algorithms are typically instances of \emph{branch and bound}, a procedure that finds a global minimum by iteratively partitioning the parameter space into disjoint subsets, and proving lower bounds on the minimum value of the loss within each subset.  A global lower bound can be obtained by taking the minimum lower bound across all subsets in the partition.  As the partition is refined to contain smaller and smaller subsets, the difference between the best known upper bound on the minimum loss (obtained by evaluating the loss at some point) and the global lower bound can be made arbitrarily small, allowing the algorithm to guarantee that it has found a point whose loss is within $\epsilon$ of the global minimum, for arbitrarily small $\epsilon > 0$.

Bounds on the Taylor remainder series of the form given by \emph{Taylor models} have previously been used for verified global optimization \cite{makino2005verified}.  The Taylor remainder series bounds provided in this work have the potential to improve these results by providing both tighter upper bounds and tighter lower bounds.

To explain how, suppose we wish to globally minimize a function $f: \reals^n \to \reals$. For any point $\vx_t$ in parameter space, and any trust region (specified by a tensor interval), we can compute an upper bound $\rep{f}(\vx, \vx_t)$ that is tight at $\vx_t$ and valid over the trust region.  We can similarly derive a function $\lep{f}(\vx_t, \vx)$ that lower bounds $f$, where the bound is tight at $\vx_t$.  The minimum values of $\rep{f}$ and $\lep{f}$ over the trust region, which can be computed in closed form in the case where they are quadratic, can then be used as the basis of a branch-and-bound algorithm.  These bounds can be tighter than the ones that would be obtained using Taylor models, and hence can let us find an approximate global minimum more efficiently.

\subsection{Experiment}

Figure~\ref{fig:branch_and_bound} depicts the results of globally minimizing the non-convex function $f(x) = 2 (x-1)^2 + (x-1)^3$ over the interval $[-2, 2]$, using a branch and bound algorithm based on quadratic upper and lower bounds returned by \ref{alg:autoboundprop}.  The plot on the left shows the loss function, as well as the points at which $f$ was evaluated by the branch and bound algorithm during the search.  The plot on the right shows the upper and lower bounds on the global minimum as a function of the number of steps (each step involves computing upper and lower bounds that are valid over some trust region that is a subset of $[-2, 2]$).  After 17 steps, the algorithm is able to find the global minimum (which in this case is simply the left end point of the interval, -2) up to machine precision.

\begin{figure}[h]
\begin{center}
\includegraphics[width=0.4\linewidth]{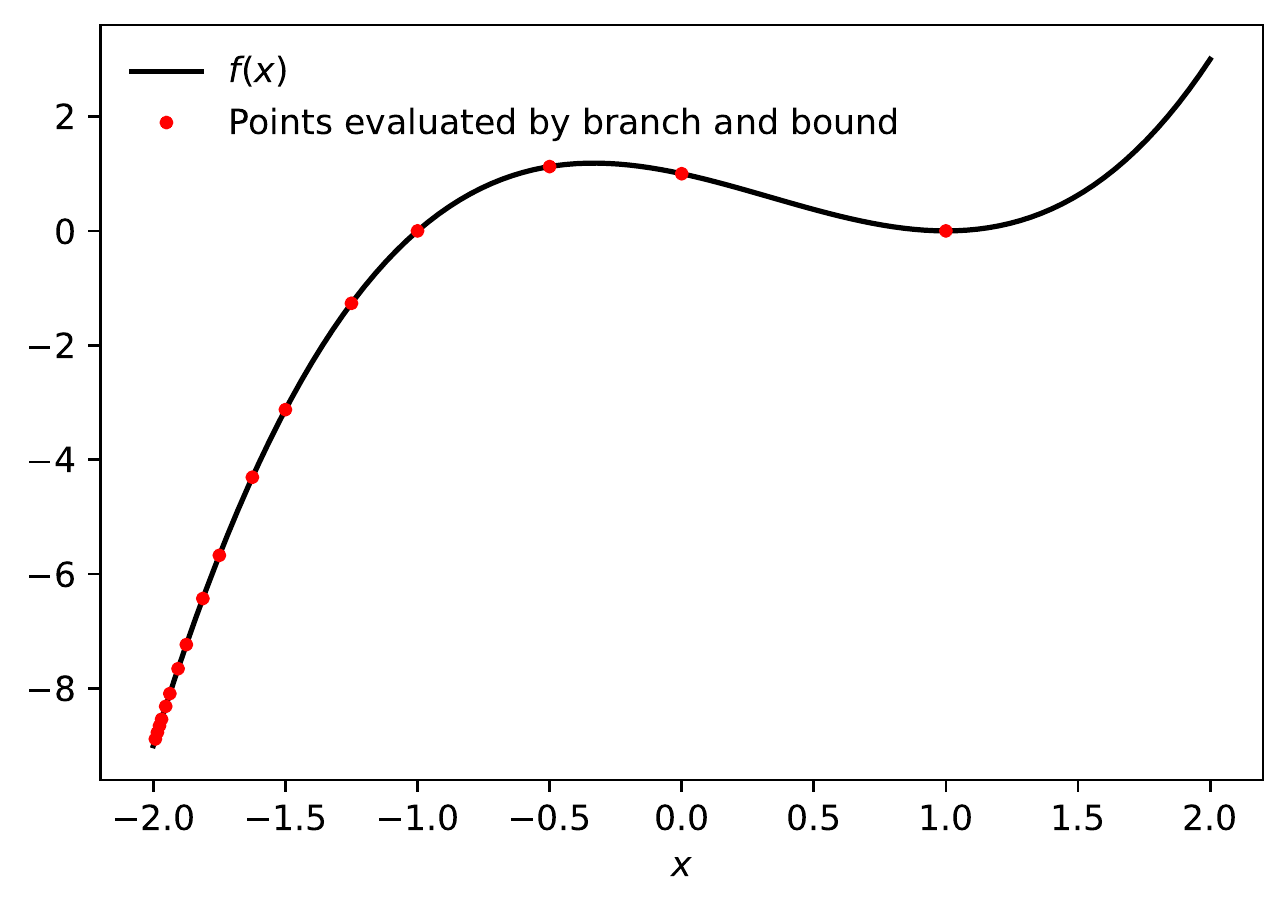}
\includegraphics[width=0.4\linewidth]{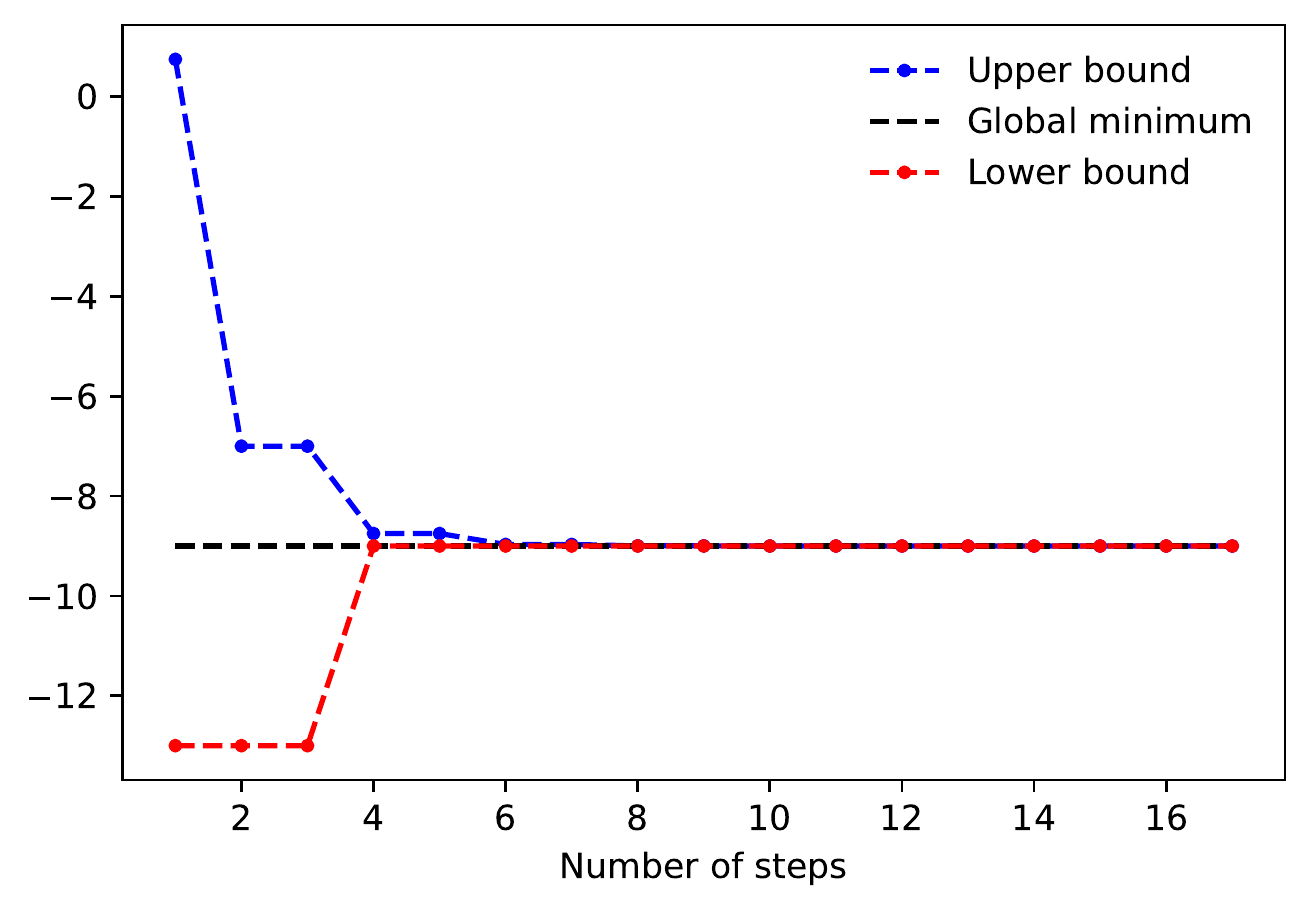}
\caption{Globally minimizing the non-convex function $f(x) = 2 (x-1)^2 + (x-1)^3$, using a branch and bound algorithm based on quadratic upper and lower bounds computed by \ref{alg:autoboundprop}.  The left plot shows the points evaluated on each step, and the right plot shows the upper and lower bounds on the minimum loss as a function of the number of steps.}
\label{fig:branch_and_bound}
\end{center}
\end{figure}

\section{Verified Numerical Integration}

We now consider the problem of verified numerical integration (e.g., \cite{berz1999new,corliss1987adaptive,eiermann1989automatic,holzmann1996newton,kelch1993numerical,lang2001derivative,petras2007principles}).  In this problem, we wish to compute upper and lower bounds on an integral,
\begin{equation}
  \int_{\sX} f(x) \dd x
\end{equation}
where $\sX$ is typically some set over which polynomials can be integrated in closed form, such as a hyperrectangle.  As was the case with verified global optimization, verified numerical integration is a problem to which Taylor models have been successfully applied \cite{berz1999new}, and which can potentially be solved more effectively using the bounds computed by \ref{alg:autoboundprop}.

Using the \ref{alg:autoboundprop} algorithm, we can obtain polynomial upper and lower bounds on $f$, which we can then integrate over $\sX$ in order to obtain upper and lower bounds on the integral.  These upper and lower bounds may be loose, but can be made tighter by partitioning $\sX$ into disjoint subsets, applying the same method to each subset, and summing the upper and lower bounds over all subsets.  The partitioning of $\sX$ into subsets can be done adaptively, by greedily subdividing the subsets where the difference between the upper and lower bounds is largest.

\subsection{Algorithm Description}

To illustrate this idea, suppose we wish to compute $\int_{x=a}^b f(x) \dd x$, for some function $f: \reals \to \reals$ provided in symbolic form.  Given a point $x_0 \in [a, b]$, we can obtain upper and lower bounds on the integral as follows.  First, we use \ref{alg:autoboundprop} to compute an interval $I$ such that
\begin{equation}
  f(x) \in T_{k-1}(x; f, x_0) + I (x - x_0)^{k}
\end{equation}
where $T_{k-1}(\cdot; f, x_0)$ is the degree $k-1$ Taylor polynomial of $f$ at $x_0$.  We then have
\begin{equation}
  \int_{x=a}^b f(x) \dd x \in \int_{x=a}^b T_{k-1}(x; f, x_0) + I (x - x_0)^{k} \dd x.
\end{equation}
The integral on the right hand side can be computed in closed form, yielding an interval whose end points provide lower and upper bounds on $\int_{x=a}^b f(x) \dd x$.

To obtain tighter upper and lower bounds, we may partition the interval $[a, b]$ into intervals $[x_i, x_{i+1}]$ for $i = 1, 2, \ldots, n$, and for each interval $[x_i, x_{i+1}]$, use \ref{alg:autoboundprop} to compute an interval $I_i$ such that $f(x) \in T_{k-1}(x; f, \bar x_i) + I_i (x - x_0)^{k}$ for $x \in [x_i, x_{i+1}]$, where $\bar x_i \eqdef \frac{x_i + x_{i+1}} {2}$ denotes the midpoint of the $i$th interval.  We then have
\begin{equation} \label{eq:integral_bounds}
  \int_{x=a}^b f(x) \dd x = \sum_{i=1}^n \int_{x=x_i}^{x_{i+1}} f(x) \dd x \in \sum_{i=1}^n \int_{x=x_i}^{x_{i+1}} T_{k-1}(x; f, \bar x_i) + I_i (x - \bar x_i)^{k} \dd x.
\end{equation}

This approach generalizes the approach of bounding the integral in each cell using interval arithmetic (e.g., see Chapter 9 of \cite{moore1966interval}), which can be recovered by computing Taylor polynomial enclosures of degree 0.
The approach can be easily generalized to integration over higher-dimensional sets.

There are many ways we might partition the interval into subintervals.  A natural approach is to adaptively subdivide the intervals where the gap between the upper and lower bounds on the integral is largest.  We leave investigation of such strategies to future work, adopting a simple non-adaptive strategy in the experiments that follow.

\subsection{Experiment}

Figure~\ref{fig:integral_enclosure} summarizes the performance of the algorithm just described, when used to bound the value of $\int_{x=0}^1 \exp(x) \dd x$, by partitioning the interval $[0, 1]$ into a uniform grid of $n$ cells.  In this simple case, the exact value of the integral can be computed in closed form, and is $e - 1$.  The figure plots the width of the computed interval (whose end points are the upper and lower bounds on the integral) as a function of $n$, as well as the distance from the midpoint of the interval to the exact value (which is $e - 1$).  Note that the latter quantity is orders of magnitude smaller than the former -- in this case, the midpoint of the lower and upper bounds provides an estimate of the integral that is much more accurate than the worst-case approximation error bound given by the interval width.

\begin{figure}[h]
\begin{center}
\includegraphics[width=0.4\linewidth]{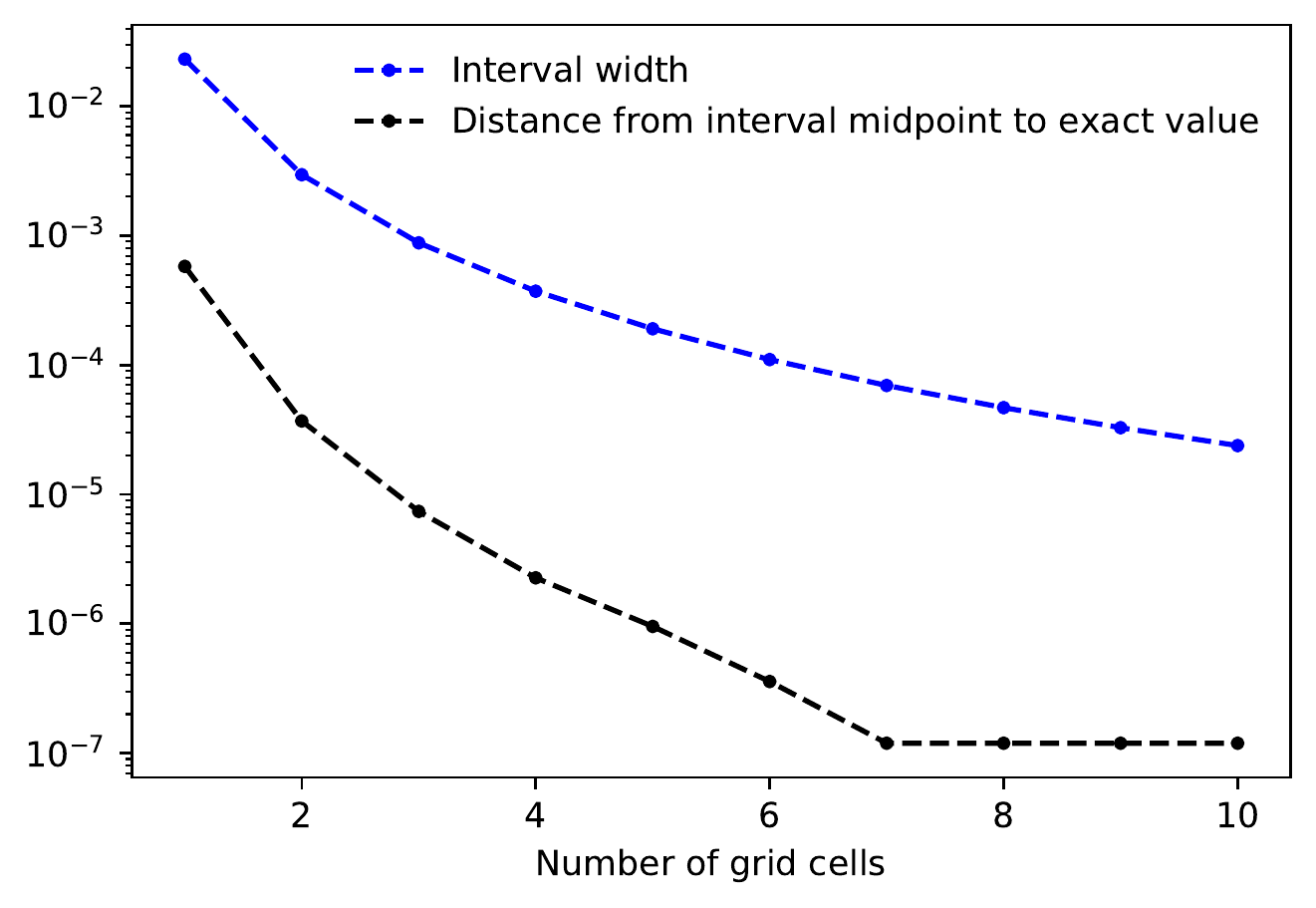}
\caption{Width of an interval that encloses the integral $\int_{x=0}^1 \exp(x) \dd x$, computed via equation \eqref{eq:integral_bounds} using \ref{alg:autoboundprop}, as a function of the number of grid cells $n$ into which the interval $[0, 1]$ is divided.}
\label{fig:integral_enclosure}
\end{center}
\end{figure}

As mentioned previously, these results can likely be improved by using an adaptively-refined grid instead of a uniform grid.

\section{Automatically Sharpening Jensen's Inequality}

\newcommand{\E}{\mathbb{E}}
\newcommand{\f}{\varphi}
\newcommand{\Var}{\mathrm{Var}}

Jensen's inequality is among the most widely used inequalities in applied mathematics.  Among other things, it is used to show non-negativity of the Kullback-Liebler divergence in information theory \cite{cover1999elements}, and in the derivation of MM optimization algorithms for various functions \cite{lange2016mm}.

Applied to a one-dimensional convex function $\f: \reals \to \reals$, Jensen's inequality states that for any scalar random variable $X$, 
\begin{equation} \label{eq:jensen}
  \E[\f(X)] - \f(\E[X]) \ge 0.
\end{equation}
The left hand side of \eqref{eq:jensen} is called the \emph{Jensen gap}.  Jensen's inequality says that if $\f$ is convex, the Jensen gap belongs to the interval $[0, \infty]$.  If all we know is that $\f$ is convex, it can be shown that this interval cannot be tightened.  However, a series of recent works \cite{gao2017bounds,lee2021further,liao2018sharpening,simic2009best,walker2014lower} have shown that by exploiting additional knowledge about $\f$, one can compute tighter bounds on the Jensen gap.

Most relevant to our work, \cite{lee2021further,liao2018sharpening} showed that tighter bounds on the Jensen gap can be obtained by computing a Taylor polynomial enclosure of $\f$ (though neither work uses this terminology).  Liao and Berg \cite{liao2018sharpening} showed how to take advantage of quadratic Taylor polynomial enclosures, and Lee \emph{et al.} \cite{lee2021further} extended their result to Taylor polynomial enclosures of arbitrary even degree.  We further extend this result to Taylor polynomial enclosures of arbitrary degree (odd or even), to obtain the following theorem.

\newcommand{\thmjensen}{
Let $\f: \reals \to \reals$ be a $(k-1)$-times differentiable function, and let $X$ be a scalar random variable that lies in the interval $[a, b]$ with probability 1.  Let the interval $I \in \intervals$ define a degree $k$ Taylor polynomial enclosure of $\f$, centered at $\mu \eqdef \E[X]$, and valid over $[a, b]$, so that we have
\[
  \f(x) \in \paren { \sum_{i=0}^{k-1} \frac {1} {i!} \f^{(k)}(\mu) (x - \mu)^i } + I \cdot (x - \mu)^{k} \quad \forall x \in [a, b].
\]
Then,
\[
  \E[\f(X)] \in \f(\mu) + \paren { \sum_{i=2}^{k-1} \frac{1}{i!} \f^{(i)}(\mu) \E [ (X - \mu)^i ] } + I \cdot \E[\min \set {0, (X - \mu)^{k} }] + I \cdot \E[\max \set {0, (X - \mu)^{k} }].
\]
If $k$ is even,\footnote{Note that the same simplification is not possible if $k$ is odd.  For example, if $k = 1$, $X$ is uniform over $[-2, 2]$, and $I = [-1, 1]$, then $I \cdot \E[\min \set {0, (X - \mu)^{k} }] + I \cdot \E[\max \set {0, (X - \mu)^{k} }] = [-1, 1] \cdot - \frac 1 2 + [-1, 1] \cdot \frac 1 2 = [-1, 1]$, whereas $I \cdot \E[ (X - \mu)^{k}] = [-1, 1] \cdot 0 = 0.$}  this simplifies to:
\[
  \E[\f(X)] \in \f(\mu) + \paren { \sum_{i=2}^{k-1} \frac {1} {i!} \f^{(i)}(\mu) \E [ (X - \mu)^i ] } + I \cdot \E[ (X - \mu)^{k} ].
\]
}
\newcommand{\thmjensennofootnote}{
Let $\f: \reals \to \reals$ be a $k$ times differentiable function, and let $X$ be a scalar random variable that lies in the interval $[a, b]$ with probability 1.  Let the interval $I \in \intervals$ define a degree $k + 1$ Taylor polynomial enclosure of $\f$, centered at $\mu \eqdef \E[X]$, and valid over $[a, b]$, so that we have
\[
  \f(x) \in \paren { \sum_{i=0}^k \frac{\f^{(k)}(\mu)}{i!} (x - \mu)^i } + I \cdot (x - \mu)^{k+1} \quad \forall x \in [a, b].
\]
Then,
\[
  \E[\f(X)] \in \f(\mu) + \paren { \sum_{i=2}^k \frac {\f^{(i)}(\mu)} {i!} \E [ (X - \mu)^i ] } + I \cdot \E[\min \set {0, (X - \mu)^{k+1} }] + I \cdot \E[\max \set {0, (X - \mu)^{k+1} }].
\]
If $k + 1$ is even, this simplifies to:
\[
  \E[\f(X)] \in \f(\mu) + \paren { \sum_{i=2}^k \frac {\f^{(i)}(\mu)} {i!} \E [ (X - \mu)^i ] } + I \cdot \E[ (X - \mu)^{k+1} ].
\]
}
\newcommand{\thmjensenproof}{
Letting $F$ be the cumulative distribution function of $X$,
\begin{align}
  \E[\f(X)]
  & = \int_a^b \f(x) \dd F(x) \nonumber \\
  & \in \int_a^b \paren { \sum_{i=0}^{k-1} \frac{\f^{(i)}(\mu)}{i!} (x - \mu)^i } + I (x - \mu)^{k} \dd F(x) \nonumber \\
  & = \paren { \sum_{i=0}^{k-1} \frac {\f^{(i)}(\mu)} {i!} \E [ (X - \mu)^i ] } + \int_a^b I (x - \mu)^{k} \dd F(x) \nonumber \\
  & = \f(\mu) + \paren { \sum_{i=2}^{k-1} \frac {\f^{(i)}(\mu)} {i!} \E [ (X - \mu)^i ] } + \int_a^b I (x - \mu)^{k} \dd F(x). \label{eq:ex_bound}
\end{align}
To complete the proof, we use the identity $z = \min \set{0, z} + \max \set{0, z}$ to write:
\begin{align}
  \int_a^b I (x - \mu)^{k} \dd F(x)
  & = \int_a^b I \cdot \paren { \min \set {0, (x - \mu)^{k} } + \max \set {0, (x - \mu)^{k} } } \dd F(x) \nonumber \\
  & = \int_a^b I \cdot \min \set {0, (x - \mu)^{k} }  \dd F(x) + \int_a^b I \cdot \max \set {0, (x - \mu)^{k} } \dd F(x) \nonumber \\
  & = I \cdot \int_a^b \min \set {0, (x - \mu)^{k} }  \dd F(x) + I \cdot \int_a^b \max \set {0, (x - \mu)^{k} } \dd F(x) \nonumber \\
  & = I \cdot \E[\min \set {0, (X - \mu)^{k} }] + I \cdot \E[\max \set {0, (X - \mu)^{k} }] \label{eq:integral_as_expectation}
\end{align}
where on the third line, the fact that the integrand has the same sign for all $x \in [a, b]$ allowed us to bring the interval $I$ outside the integral.
Plugging \eqref{eq:integral_as_expectation} into \eqref{eq:ex_bound} completes the proof.
}
\begin{theorem} [Extending Theorem 2.1 of \cite{lee2021further}] \label{thm:jensen}
\thmjensen
\end{theorem}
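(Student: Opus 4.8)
The plan is to integrate the pointwise Taylor polynomial enclosure against the law of $X$ and then handle the interval-valued remainder with care. Let $F$ denote the cumulative distribution function of $X$. Since the enclosure holds for every $x \in [a,b]$ and $X \in [a,b]$ with probability $1$, I would write $\E[\f(X)] = \int_a^b \f(x)\,dF(x)$ and substitute the enclosure inside the integral, so that $\E[\f(X)]$ lies in $\int_a^b \paren{\sum_{i=0}^{k-1} \frac{1}{i!}\f^{(i)}(\mu)(x-\mu)^i} + I(x-\mu)^k \, dF(x)$. By linearity of expectation the polynomial part contributes $\sum_{i=0}^{k-1} \frac{1}{i!}\f^{(i)}(\mu)\E[(X-\mu)^i]$; the $i=0$ term is $\f(\mu)$, and the $i=1$ term vanishes because $\E[X-\mu] = 0$, leaving $\f(\mu) + \sum_{i=2}^{k-1}\frac{1}{i!}\f^{(i)}(\mu)\E[(X-\mu)^i]$.

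Next I would treat the remainder contribution $\int_a^b I(x-\mu)^k\,dF(x)$, which is where essentially all the care is needed. The main obstacle is that one cannot factor $I$ out of this integral directly: interval multiplication is only subdistributive, so $\int_a^b I\,g(x)\,dF(x)$ need not equal $I\int_a^b g(x)\,dF(x)$ whenever $g$ changes sign. The resolution, and the crux of the argument, is to split the integrand using the identity $z = \min\set{0,z} + \max\set{0,z}$, writing $(x-\mu)^k = \min\set{0,(x-\mu)^k} + \max\set{0,(x-\mu)^k}$. On each of the two resulting integrals the relevant factor has constant sign (nonpositive and nonnegative respectively), so $I$ may legitimately be pulled outside, yielding $I\cdot\E[\min\set{0,(X-\mu)^k}] + I\cdot\E[\max\set{0,(X-\mu)^k}]$. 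Combining this with the simplified polynomial term gives the first displayed conclusion.

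Finally, for the even-$k$ simplification I would observe that when $k$ is even, $(x-\mu)^k \ge 0$ for all $x$, so $\min\set{0,(x-\mu)^k} = 0$ and $\max\set{0,(x-\mu)^k} = (x-\mu)^k$ identically. The $\min$ term then contributes $I\cdot 0 = \set{0}$ while the $\max$ term collapses to $I\cdot\E[(X-\mu)^k]$, giving the second conclusion. The sign-splitting step is the only nonroutine part; the remainder is linearity of expectation together with the basic interval-arithmetic facts \eqref{eq:scalar_interval_product_distributive} and \eqref{eq:interval_product_subdistributive} stated earlier. The footnote's counterexample reflects exactly why this step cannot be shortcut for odd $k$: there the two sign pieces fail to recombine into $I\cdot\E[(X-\mu)^k]$ under interval multiplication.
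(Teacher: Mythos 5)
Your proof is correct and takes essentially the same route as the paper's: integrate the pointwise enclosure against the law of $X$, reduce the polynomial part by linearity (with the $i=1$ term vanishing since $\E[X-\mu]=0$), and split the remainder integrand via $z = \min\set{0,z} + \max\set{0,z}$ so that $I$ can be pulled outside each constant-sign integral. Your explicit verification of the even-$k$ simplification is a detail the paper leaves implicit, but the argument is identical in substance.
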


In the special case $k = 2$, Theorem~\ref{thm:jensen} says that if $I$ defines a quadratic Taylor polynomial enclosure of $\f$ at $\E[X]$ over $[a, b]$, then
\begin{equation}
  \E[\f(X)] - \f(\E[X]) \in I \cdot \Var[X].
\end{equation}
If $\f$ is convex, we can set $\lep{I} = 0$ to recover Jensen's inequality.  If $\lep{I} > 0$, we obtain an inequality that is tighter than Jensen's.
Furthermore, if $\f$ is an analytic function over $[a, b]$, and if $\E[ (X - \mu)^{k} ]$ is finite for all $k$, then Theorem~\ref{thm:jensen} gives upper and lower bounds on the Jensen gap that become arbitrarily tight as $k \rightarrow \infty$ \cite{lee2021further}.

One limitation of previous work is that it did not have a general-purpose mechanism for computing an interval $I$ that defines a Taylor polynomial enclosure suitable for use in Theorem~\ref{thm:jensen}, limiting the applicability of the theorem to cases where $\f$ can be analyzed by hand.  Our work provides such a mechanism, thus greatly extending the reach of Theorem~\ref{thm:jensen}.

\subsection{Experiment}

As an example, we now compute upper and lower bounds on the Jensen gap for the function $\f(x) = \exp(x)$, for $X$ drawn from a uniform distribution over $[-1, 1]$.  In this simple example, the Jensen gap can be computed exactly in closed form, and is
\begin{equation} \label{eq:jensen_gap_example}
  \E[\f(X)] - \f(\E[X]) = \frac{e - e^{-1}}{2} - 1 \approx 0.175201.
\end{equation}

Figure~\ref{fig:jensen_gap} plots the upper and lower bounds on the Jensen gap that we obtain using Theorem~\ref{thm:jensen}, computing the interval $I$ using \ref{alg:autoboundprop}, as a function of the degree of the Taylor polynomial enclosure (i.e., the value of $k+1$ in the theorem statement).  As can be seen in the figure, the upper and lower bounds rapidly converge to the true value, with the gap between the upper and lower bounds decreasing exponentially as a function of degree.  Note that even the degree 2 Taylor polynomial enclosure gives a much sharper bound than Jensen's inequality, which shows only that the Jensen gap is in $[0, \infty]$.

\begin{figure}[h]
\begin{center}
\includegraphics[width=0.4\linewidth]{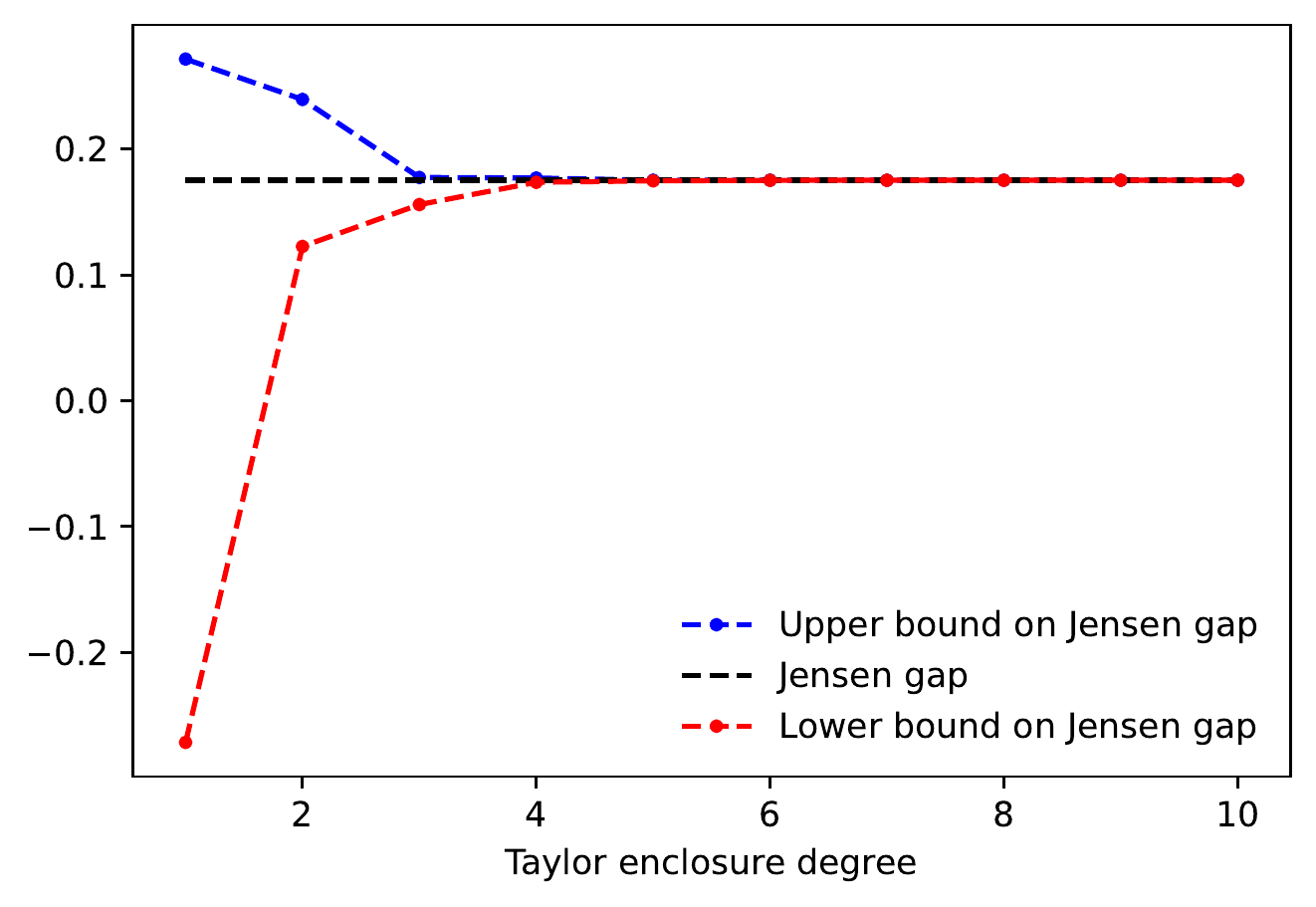}
\includegraphics[width=0.4\linewidth]{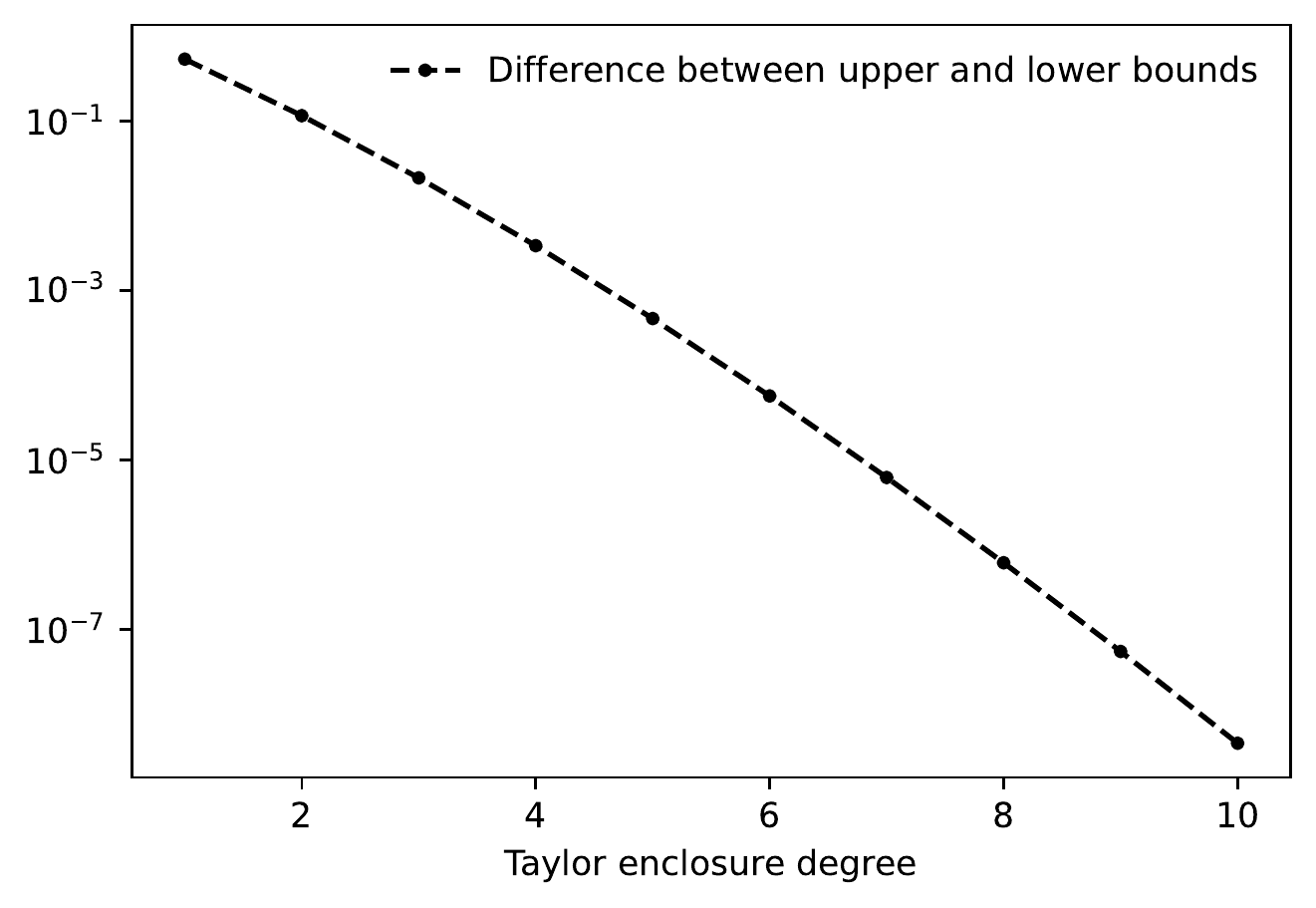}
\caption{Upper and lower bounds on the Jensen gap, $\E[\f(X)] - \f(\E[X])$, for $\f(x) = \exp(x)$ and $X \sim \mathrm{Uniform}(-1, 1)$, as a function of the degree of the Taylor polynomial enclosure used to compute the bound.  The left plot shows the upper and lower bounds, and the right plot shows the difference between them.  For degree 2 or larger, each bound provides a sharper version of Jensen's inequality, which only guarantees that the Jensen gap lies in the interval $[0, \infty]$.}
\label{fig:jensen_gap}
\end{center}
\end{figure}

\chapter{Related Work} \label{chap:related_work}

At a high level, our \ref{alg:autobound} algorithm can be seen as a variant of Taylor mode automatic differentiation that uses interval arithmetic.  A different variant of Taylor mode automatic differentiation that uses interval arithmetic has been presented in the literature on \emph{Taylor models} \cite{makino1996remainder,makino1999efficient,makino2001higher,makino2003taylor}.  A Taylor model is equivalent to a Taylor remainder series bound of the form $R_{k-1}(x; f, x_0) \in I$, where $I$ is an interval.  The algorithm for computing a Taylor model for a function $f$ can be thought of as evaluating $f$ using a set of extended operators that input and output $(P, I)$ pairs, where $P$ is a Taylor polynomial and $I$ is an interval that bounds the remainder term.  As discussed in \S\ref{sec:ab_algorithm}, \ref{alg:autobound} differs from previous work on Taylor models in three ways: (\emph{i}) by producing bounds of the form $R_{k-1}(x; f, x_0) \in I (x - x_0)^{k}$, which allows the bounds to be used as the basis of an MM optimizer (see \cite{streeter2023universal}), (\emph{ii}) by using sharp Taylor polynomial enclosures to obtain tighter bounds, and (\emph{iii}) by composing interval polynomials in a tighter way, further tightening the bounds.

A variant of Taylor models called \emph{Lagrange models} \cite{van2016certifying} produces bounds of a form similar to ours.  Applied to one-dimensional functions, Lagrange models give a bound of the form $R_{k-1}(x; f, x_0) \in [-\epsilon, \epsilon] (x - x_0)^{k}$, which is a Taylor polynomial enclosure defined by a symmetric interval.  In the one-dimensional case, the novelty of our work relative to \cite{van2016certifying} lies in the use of sharp Taylor polynomial enclosures and in the tighter composition of interval polynomials (discussed in the previous paragraph).  For multivariate functions, Taylor polynomial enclosures have many more free parameters than Lagrange models, which allows for tighter bounds.

Other noteworthy approaches that use interval polynomials to bound functions include Boundary Arithmetic \cite{lanford1982computer,eckmann1984computer,eckmann1987complete} and Ultra Arithmetic \cite{kaucher1983solving,kaucher1984residual,kaucher1988validating,kaucher1984self}.  See \cite{neumaier2003taylor} for a discussion of the history of these methods and their limitations, and see \cite{makino2003taylor} for additional discussion and a comparison to Taylor Models.

More recently, \cite{duracz2008polynomial} presented a prototype of a constraint solver that performs arithmetic operations on Chebyshev polynomials.  Like our approach, this work involves evaluating a function using polynomial extensions of various operations.  However, the use of Chebyshev polynomials makes the details of these operations different from ours, and the presentation in \cite{duracz2008polynomial} considers only a limited set of operations (addition, multiplication, exponentiation, and absolute value).

An alternative approach is to compute an interval that defines a degree $k$ Taylor polynomial enclosure by evaluating an expression for $f^{(k)}(x_0)$ using interval arithmetic \cite{jaulin2001interval,hansen1979global,hansen2003global,moore1966interval}.  Though elegant, this approach has some inherent limitations which can cause it to quickly produce enormous  intervals \cite{makino2003taylor}.  In the experiments of \cite{makino2003taylor}, these intervals were observed to be up to a factor of $10^{42}$ wider than the ones obtained using Taylor models.

Another approach to bounding the Taylor remainder series uses a variant of Cauchy's estimate to bound the Taylor coefficients, and thereby to bound the remainder term \cite{neher2003improved}.  However, this approach requires maximizing functions over the complex plane via global optimization methods (e.g., branch and bound), and is not a practical alternative to the approach we present here.

\chapter{Summary and Conclusions} \label{chap:conclusions}

In this work, we have presented a generalization of Taylor-mode automatic differentiation that, in addition to computing a Taylor polynomial approximation of a function, computes polynomial upper and lower bounds that are guaranteed to hold over a user-specified \emph{trust region}.

In Chapter~\ref{chap:arithmetic}, we considered the problem of deriving a Taylor polynomial enclosure of an arbitrary function composed of known atomic functions.  To this end, we developed an interval arithmetic variant of Taylor mode automatic differentiation which we call \ref{alg:autoboundprop}.  \ref{alg:autoboundprop} is similar in spirit to the algorithm used in previous work to compute \emph{Taylor models} \cite{makino1996remainder,makino1999efficient,makino2001higher,makino2003taylor}, but produces bounds of a different functional form that allows for additional applications, and contains several technical innovations that make the bounds tighter.

In Chapter~\ref{chap:bprop}, we extended the results of Chapter~\ref{chap:arithmetic} to tensor-valued and tensor-variate functions.  This required us to introduce additional notation, and to pay special attention to handling bilinear operations in a way that allows our algorithm to be implemented efficiently using modern automatic differentiation frameworks (such as TensorFlow, PyTorch, or JAX).  In the process of doing so, we generalized previous results for Interval Bound Propagation \cite{gowal2018effectiveness}, and developed an alternative that produces tighter bounds at the cost of additional computation.

We then turned our attention to applications of \ref{alg:autoboundprop}.  Most notably, we showed that the bounds computed by \ref{alg:autoboundprop} can be used to automatically derive majorization-minimization (MM) optimization algorithms, a possibility we explore further in a companion paper \cite{streeter2023universal}.
We also showed in Chapter~\ref{chap:applications}, via simple demonstrations on toy problems, that \ref{alg:autoboundprop} can be used for verified global optimization, verified numerical integration, and deriving sharper versions of Jensen's inequality. 

Our work suggests many possible extensions, some of which we have discussed in \S\ref{sec:ab_future}.  Perhaps the most important of these is the development of a reverse-mode algorithm, which would allow us to efficiently compute Taylor polynomial enclosures for functions of a large number of scalar variables.  In addition, we believe there are many applications of Taylor polynomial enclosures which remain to be explored.

\bibliographystyle{plainnat}
\bibliography{autobound}

\begin{appendices}

\chapter{Proofs} \label{sec:proofs}

\section{Taylor Polynomial Enclosures for One-Dimensional Functions}

We now prove Theorem~\ref{thm:autobound_1d}.  We will need the following lemma, which shows that if $F$ is an interval polynomial enclosure in which only the final coefficient is an interval (rather than a scalar), then $F$ is a Taylor polynomial enclosure.

\begin{lemma} \label{lem:when_ip_is_tif}
Let $F(z) = \sum_{i=0}^k F_{[i]} z^i$ be a degree $k$ interval polynomial enclosure of a $k$ times differentiable function $f: \reals \to \reals$ over $Z \in \intervals$.  Suppose that, for $j < k$, $F_{[j]}$ is a scalar (rather than an interval).  Then $F_{[j]} = \frac {1} {j!} f^{(j)}(0)$, and $F$ is therefore a Taylor polynomial enclosure of $f$ at $0$ over $Z$.  (Accordingly, if $g(x) = f(x - x_0)$, then $F$ is a Taylor polynomial enclosure of $g$ at $x_0$ over $x_0 + Z$.)
\end{lemma}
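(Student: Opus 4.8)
The plan is to exploit the fact that, since $F_{[0]}, \ldots, F_{[k-1]}$ are scalars, the only genuine interval among the coefficients is $F_{[k]}$, so the enclosure collapses to a two-sided squeeze on $f$. Writing $p(z) \eqdef \sum_{i=0}^{k-1} F_{[i]} z^i$ for the scalar part, the hypothesis $f(z) \in F(z) = p(z) + F_{[k]} z^k$ for $z \in Z$ gives
\[
  |f(z) - p(z)| \le C |z|^k, \qquad C \eqdef \max \set{ |\lep{F_{[k]}}|, |\rep{F_{[k]}}| },
\]
since every point of the interval $F_{[k]} z^k$ has absolute value at most $C|z|^k$. I will extract this clean bound up front precisely so that I never have to track which endpoint of $F_{[k]} z^k$ is larger; that sign bookkeeping (for odd $k$ and negative $z$, where $z^k < 0$ flips the endpoints) is the only genuinely delicate point, and this reformulation sidesteps it entirely, reducing the rest to a routine comparison of a polynomial against its Peano--Taylor expansion.

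First I would record that $0 \in Z$ (which holds in the intended application, where $Z = [a - x_0, b - x_0]$ and $x_0 \in [a,b]$) and that $Z$ is nondegenerate, so $0$ is approachable from within $Z$ along some sequence $z_n \to 0$ with $z_n \in Z \setminus \set{0}$. Next I would invoke Taylor's theorem in Peano form: since $f$ is $k$ times differentiable, $f(z) = \sum_{i=0}^{k} \frac{1}{i!} f^{(i)}(0) z^i + o(z^k)$ as $z \to 0$. Abbreviating $c_i \eqdef \frac{1}{i!} f^{(i)}(0)$ and subtracting $p(z)$, the two facts combine to
\[
  \sum_{i=0}^{k-1} (c_i - F_{[i]}) z^i + c_k z^k + o(z^k) = f(z) - p(z) = O(|z|^k).
\]
I would then argue that every difference $c_i - F_{[i]}$ with $i < k$ vanishes. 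Suppose not, and let $m < k$ be the smallest index with $c_m \ne F_{[m]}$; dividing the displayed identity by $z^m$ and letting $z \to 0$ along $z_n$ sends every term other than $(c_m - F_{[m]})$ to $0$ (the lower-order differences are absent by minimality of $m$, while the remaining terms all carry a strictly positive power of $z$ because $m < k$, and the right-hand side is $O(|z|^{k-m})$). The limit then forces $c_m - F_{[m]} = 0$, a contradiction. Hence $F_{[j]} = c_j = \frac{1}{j!} f^{(j)}(0)$ for every $j < k$. (Equivalently this can be phrased as an induction on $j$: the base case $j=0$ is immediate from $F(0) = \set{F_{[0]}} \ni f(0)$, and the inductive step divides the squeeze by $z^j$ and takes the limit.)

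With the coefficients identified, $F$ has exactly the form required by Definition~\ref{def:tpe} with center $x_0 = 0$ and remainder interval $I = F_{[k]}$, and $f(z) \in F(z)$ on $Z$ by hypothesis, so $F$ is a Taylor polynomial enclosure of $f$ at $0$ over $Z$. The parenthetical claim then follows by the change of variables $z = x - x_0$: since $g(x) = f(x - x_0)$ satisfies $g^{(j)}(x_0) = f^{(j)}(0)$, the identified coefficients of $F$ are precisely $\frac{1}{j!} g^{(j)}(x_0)$, and $g(x) = f(x - x_0) \in F(x - x_0)$ exactly when $x - x_0 \in Z$, i.e.\ when $x \in x_0 + Z$, which is the asserted Taylor polynomial enclosure of $g$ at $x_0$ over $x_0 + Z$.
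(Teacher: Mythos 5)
Your proof is correct, and at its core it runs on the same engine as the paper's: locate the first index $m$ at which $F_{[m]}$ could differ from $\frac{1}{m!}f^{(m)}(0)$, divide by $z^m$, and let $z \to 0$. The packaging is genuinely different, though, and in two respects tighter. First, the paper works with the endpoint functions $\lep{F}(z)$, $\rep{F}(z)$ and the limits of their Taylor remainder quotients, which silently uses the fact that these endpoints agree with the scalar part $p(z)=\sum_{i<k}F_{[i]}z^i$ up to an $O(|z|^k)$ perturbation, and it sidesteps the endpoint-flipping issue (when $z^k<0$) by restricting attention to $z>0$; you instead collapse the enclosure hypothesis once and for all into the scalar inequality $|f(z)-p(z)| \le C|z|^k$ with $C = \max\set{|\lep{F_{[k]}}|,\,|\rep{F_{[k]}}|}$, after which everything is elementary real analysis on $f-p$ via Peano's form of Taylor's theorem, with no endpoint or sign bookkeeping at all. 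Second, and relatedly, the paper's ``for some sufficiently small $z>0$'' tacitly assumes such $z$ lie in $Z$, which fails if $Z$ sits to the left of the origin; your limit along a sequence $z_n \in Z\setminus\set{0}$ repairs this, and you also make explicit the hypotheses that $0 \in Z$ and that $Z$ is nondegenerate, which the lemma statement omits but genuinely needs (without $0\in Z$ the claim is false: $f\equiv 0$, $k=1$, $F(z)=1+[-1,-\tfrac12]z$ is an interval polynomial enclosure over $Z=[1,2]$, yet $F_{[0]}=1\neq f(0)$). What the paper's route buys is continuity with the Taylor-polynomial/remainder notation $T_j$, $R_j$ it uses elsewhere; what yours buys is a shorter argument that quietly closes these small gaps in the paper's write-up.
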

\begin{proof}
Suppose for contradiction that the lemma does not hold, and let $j$ be the smallest index such that $F_j \neq \frac {1} {j!} f^{(j)}(0)$.  Let $T_j$ and $R_j$ denote the Taylor polynomial and the corresponding remainder term, as defined in \S\ref{sec:spoly_definitions}.  Then,
\begin{equation}
  \lim_{z \rightarrow 0} \frac { R_{j-1}(z; f, 0) } { z^j } = \frac {1} {j!} f^{(j)}(0) \neq F_{[j]}.
\end{equation}
At the same time, letting $\lep{F}(z) \eqdef \lep{F(z)}$ and $\rep{F}(z) \eqdef \rep{F(z)}$, we have
\begin{equation}
  F_{[j]} = \lim_{z \rightarrow 0} \frac { R_{j-1}(z; \lep{F}, 0) } { z^j } = \lim_{z \rightarrow 0} \frac { R_{j-1}(z; \rep{F}, 0) } { z^j }.
\end{equation}
Thus, for some sufficiently small $z > 0$, we must have
\begin{equation} \label{eq:some_small_z}
  \frac { R_{j-1}(z; f, 0) } { z^j } \notin \left [\frac { R_{j-1}(z; \lep{F}, 0) } { z^j }, \frac { R_{j-1}(z; \rep{F}, 0) } { z^j } \right ].
\end{equation}
At the same time, because $F_{[i]} = \frac {1} {i!} f^{(i)}(0)$ for $i < j$ (by definition of $j$),
\begin{equation} \label{eq:taylor_match}
  T_{j-1}(z; f, 0) = T_{j-1}(z; \lep{F}, 0) = T_{j-1}(z; \rep{F}, 0).
\end{equation}
Thus, multiplying both sides by of \eqref{eq:some_small_z} by $z^j$, adding $T_{j-1}(z; f, 0)$ to both sides, and using \eqref{eq:taylor_match},
\begin{equation}
  f(z) \notin [ \lep{F}(z), \rep{F}(z) ] = F(z)
\end{equation}
contradicting the assumption that $F$ is an interval polynomial enclosure of $f$.
\end{proof}

\begin{customthm}{4}
\thmautoboundoned
\end{customthm}
\begin{proof}
\thmautoboundonedproof
\end{proof}

To prove Theorem~\ref{thm:ab_advantage}, we will use the following lemma.

\begin{lemma} \label{lem:power_intervals}
Let $k$ and $\ell$ be positive integers, and let $f(x) = x^{k + \ell}$.  For any interval $[a, b]$ with $0 \in [a, b]$,
\[
  \iia(f, k, [a, b]) = {k+\ell \choose \ell} [a, b]^\ell
\]
and
\[
  \iab(f, k, [a, b], 0)) = [a, b]^\ell
\]
where $\iia$ and $\iab$ are defined as in Theorem~\ref{thm:ab_advantage}.
\end{lemma}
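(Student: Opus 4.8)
The plan is to prove the two claimed identities separately, since $\iia$ and $\iab$ are defined by two genuinely different computations; I will then observe that both ultimately reduce to a single application of the interval power rule producing $[a,b]^\ell$, differing only by the scalar factor $\binom{k+\ell}{\ell}$. The hypothesis $0 \in [a,b]$ is used only to guarantee that the reference point $x_0 = 0$ lies in the trust region, so that \ref{alg:autobound} is well-defined.

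For the interval-arithmetic baseline $\iia(f, k, [a,b])$, I would first compute the $k$th derivative in closed form. Since $f(x) = x^{k+\ell}$, repeated differentiation gives $f^{(k)}(x) = \frac{(k+\ell)!}{\ell!} x^\ell$, so that $\frac{1}{k!} f^{(k)}(x) = \binom{k+\ell}{\ell} x^\ell$. By definition, $\iia(f, k, [a,b])$ is obtained by evaluating this symbolic expression over $[a,b]$ using interval arithmetic: the monomial $x^\ell$ evaluates to $[a,b]^\ell$ under the interval power rule, and multiplying by the positive scalar $\binom{k+\ell}{\ell}$ yields $\binom{k+\ell}{\ell}[a,b]^\ell$, which is the first claim.

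For $\iab(f, k, [a,b], 0)$, I would trace \ref{alg:autobound} on the symbolic expression that treats $x \mapsto x^{k+\ell}$ as a single atomic exponentiation applied to $v_0 = x$. With $x_0 = 0$ the algorithm initializes $P_0 = (0,1)$ and $Z = [a - x_0, b - x_0] = [a,b]$, so $P_0$ represents the polynomial $z \mapsto z$. Applying the non-negative integer exponentiation rule of \S\ref{sec:ab_exponentiation} to form $P_0^{k+\ell}$, the binomial expansion of $(A_{[0]} + A_{[1]} z)^{k+\ell}$ with $A_{[0]} = 0$ and $A_{[1]} = 1$ collapses to the single exact monomial $z^{k+\ell}$, with no interval widening. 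The degree-reduction step then collects the terms of degree $\ge k$ — here just $z^{k+\ell} = z^k \cdot z^\ell$ — and replaces $z^\ell$ by $\rangebound(z^\ell, Z)$; using the default $\rangebound$ of \eqref{eq:default_range_bound} this equals $Z^\ell = [a,b]^\ell$, which is precisely the degree-$k$ coefficient of the returned polynomial, giving the second claim.

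The only real subtlety, and the step I would be most careful to justify, is that the exponentiation rule applied to $P_0 = (0,1)$ introduces no spurious interval terms before the final $\rangebound$ call: because $(P_0)_{[0]} = 0$, every term of the expansion carrying a factor $A_{[0]}^j$ with $j \ge 1$ vanishes, leaving the exact monomial $z^{k+\ell}$. This guarantees that the lone source of interval width in $\iab$ is the single $\rangebound$ evaluation of $z^\ell$ over $Z$, which — like the $\iia$ computation — amounts to exactly $[a,b]^\ell$. Together these two facts yield the stated pair of intervals and feed directly into the width ratio used in the proof of Theorem~\ref{thm:ab_advantage}.
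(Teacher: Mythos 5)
Your proposal is correct and follows essentially the same route as the paper's proof: the first identity via the closed-form $k$th derivative $f^{(k)}(x) = \frac{(k+\ell)!}{\ell!}x^\ell$ evaluated over $[a,b]$ by interval arithmetic, and the second by tracing \ref{alg:autobound} through the integer-exponentiation rule of \S\ref{sec:ab_exponentiation} applied to $P_0(z) = x_0 + z$, where the single surviving monomial $z^{k+\ell}$ is reduced to $z^k \cdot \rangebound(z \mapsto z^\ell, Z) = z^k \cdot [a,b]^\ell$. The only cosmetic difference is that the paper carries the general binomial expansion in $x_0$ and substitutes $x_0 = 0$ at the end (with the convention $0^0 = 1$), whereas you set $A_{[0]} = 0$ from the outset; the substance is identical.
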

\begin{proof}
The $k$th derivative is $f^{(k)}(x) = \frac {(k+\ell)!} {\ell!} x^\ell$.  Plugging in $x = [a, b]$, and evaluating the resulting expression according to the rules of interval arithmetic, gives
\begin{equation}
  f^{(k)}([a, b]) = \frac {(k+\ell)!} {\ell!} [a, b]^\ell.
\end{equation}
Therefore,
\begin{equation} \label{eq:ia_interval}
  \iia(f, k, [a, b]) = \frac {1} {k!} f^{(k)}([a, b]) = {k+\ell \choose \ell} [a, b]^\ell.
\end{equation}
Applied to $f$, \ref{alg:autobound} returns a polynomial $P_1$, obtained by applying the interval polynomial extension of non-negative integer exponentiation (defined in \S\ref{sec:ab_exponentiation}) to the  polynomial $P_0(z) = x_0 + z$.  The interval polynomial extension expands $P_0(z)^{k+\ell} = (x_0 + z)^{k+\ell}$ using the binomial theorem, factors terms of degree $> k$ into the product of $z^k$ and a single polynomial, and invokes the $\rangebound$ function to obtain
\begin{align}
  P_1(z)
  & = \paren { \sum_{i=0}^{k-1} {k + \ell \choose i} x_0^{k+\ell-i} z^i } + z^k \cdot \rangebound\paren{z \mapsto \sum_{i=k}^{k+\ell} {k+\ell \choose i} x_0^{k+\ell-i} z^{i-k}, Z} \nonumber \\
  & = z^k \cdot \rangebound\paren{z \mapsto z^\ell, Z}
\end{align}
where on the second line we have used the fact that $x_0 = 0$ (defining $0^0 \eqdef 1$).
Assuming the $\rangebound$ function uses interval arithmetic evaluation, we have $\rangebound(z \mapsto z^\ell, Z) = Z^\ell = [a, b]^\ell$ (for $x_0 = 0$).  Therefore,
\begin{equation} \label{eq:ab_interval}
  \iab(f, k, [a, b], 0)) = [a, b]^\ell.
\end{equation}
\end{proof}

\begin{customthm}{5}
\thmabadvantage
\end{customthm}
\begin{proof}
\thmabadvantageproof
\end{proof}

\section{Generalization to Multivariate Functions}

We first prove Proposition~\ref{prop:inner_exact}.  To do so, we will use the following proposition.

\begin{proposition} \label {prop:dot_product_exact}
For any interval vector $\sx \in \intervals^n$ and vector $\vy \in \reals^n$,
\[
  \sx \cdot \vy = \set {\vx \cdot \vy: \vx \in \sx }.
\]
\end{proposition}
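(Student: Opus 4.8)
The plan is to prove both inclusions at once by observing that the interval dot product $\sx \cdot \vy = \sum_{i=1}^n \sx_i y_i$ is assembled entirely from operations that are individually \emph{exact} on intervals, and that each interval component $\sx_i$ enters the expression exactly once. This single-occurrence property is what rules out the usual over-estimation (``dependency'') effect of interval arithmetic, and it is the crux of the argument.

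First I would record two elementary facts. Fact (i): multiplication of a scalar interval by a real scalar is exact \emph{by definition}, so that for each $i$ the paper's convention gives $\sx_i y_i = \set{x_i y_i : x_i \in \sx_i}$. Fact (ii): interval addition realizes the Minkowski sum, i.e.\ for scalar intervals $I_1, \ldots, I_n$,
\[
  I_1 + I_2 + \cdots + I_n = \set{s_1 + s_2 + \cdots + s_n : s_j \in I_j \text{ for all } j}.
\]
Fact (ii) for $n = 2$ is immediate from $[\lep{a}, \rep{a}] + [\lep{b}, \rep{b}] = [\lep{a} + \lep{b}, \rep{a} + \rep{b}]$, and the general case follows by a one-line induction on $n$.

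I would then simply compose the two facts. Writing $S_i \eqdef \sx_i y_i$, the definition of the interval dot product together with fact (ii) gives
\[
  \sx \cdot \vy = \sum_{i=1}^n S_i = \set{\sum_{i=1}^n s_i : s_i \in S_i \text{ for all } i}.
\]
By fact (i), an element $s_i \in S_i$ is exactly a product $x_i y_i$ with $x_i \in \sx_i$, and these choices are independent across $i$ because each $S_i$ depends on $\sx_i$ alone. Substituting therefore yields
\[
  \sx \cdot \vy = \set{\sum_{i=1}^n x_i y_i : x_i \in \sx_i \text{ for all } i} = \set{\vx \cdot \vy : \vx \in \sx},
\]
which is the claim.

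The step I expect to require the most care — the only real subtlety — is justifying that the selections $s_i \in S_i$ may be made fully independently, so that no spurious combinations are introduced and none are omitted. This is precisely where the single-occurrence property is used: since $S_i$ is a function of $\sx_i$ alone, the Minkowski-sum identity of fact (ii) lets each $x_i$ range over $\sx_i$ with no shared constraint, producing the exact equality rather than a mere inclusion. (By contrast, had some $\sx_i$ appeared in two terms, the two appearances would be treated as independent by interval arithmetic, and only an inclusion would survive.)
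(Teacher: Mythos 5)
Your proof is correct. The interesting point of comparison is that the paper does not actually prove this proposition at all: it disposes of it with a one-line citation, noting that it is ``a special case of a more general result for interval expressions in which each interval is used exactly once'' and pointing to Chapter~5 of Moore's book on interval analysis. Your argument is, in effect, a self-contained proof of exactly the special case of that general single-use-expression theorem that the proposition needs: you isolate the two exactness ingredients (scalar-by-interval multiplication is exact by the paper's own definition $I\alpha \eqdef \set{z\alpha : z \in I}$, and interval addition is the Minkowski sum), and then make explicit the independence-of-selections argument that the single-occurrence hypothesis licenses --- which is precisely the content hidden inside the citation. What your route buys is that the reader need not consult Moore, and the mechanism behind exactness (no interval appears twice, so no dependency blow-up) is displayed rather than invoked; what the paper's route buys is brevity and the observation that the fact is an instance of a strictly more general principle, which would also cover, say, expressions mixing several distinct intervals each used once. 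Your closing remark about what fails when an interval occurs twice is exactly the right caveat and matches the paper's framing of the general result.
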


Proposition~\ref{prop:dot_product_exact} is a special case of a more general result for interval expressions in which each interval is used exactly once; see \cite[Chapter~5]{moore1966interval}.

\begin{customproposition}{5}
\propinnerexact
\end{customproposition}
\begin{proof}
\propinnerexactproof
\end{proof}

\subsection{Bilinear Operations on Tensor Intervals}

We now provide proofs of the rules given in \S\ref{sec:tensor_interval_bilinear} for applying bilinear operations to tensor intervals.  We first prove Proposition~\ref{prop:ibp}, which provides a simple rule that generalizes previous work on Interval Bound Propagation \cite{gowal2018effectiveness}.

\begin{customproposition}{6}
\propibp
\end{customproposition} 
\begin{proof}
\propibpproof
\end{proof}	

We next prove Theorem~\ref{thm:apply_bilinear_tensor_interval}, which provides an alternative rule that yields a potentially narrower interval at the cost of additional computation.  To do so, we first prove Lemma~\ref{lem:linear_mult_rule}.

\begin{customlemma}{3}
\lemlinearmultrule
\end{customlemma}
\begin{proof}
\lemlinearmultruleproof
\end{proof}

We are now ready to prove Theorem~\ref{thm:apply_bilinear_tensor_interval}.
\begin{customthm}{6}
\thmapplybilineartensorinterval
\end{customthm}
\begin{proof}
\thmapplybilineartensorintervalproof
\end{proof}

\subsection{Proof of Theorem~\ref{thm:degree_reduction}}

In order to prove Theorem~\ref{thm:degree_reduction}, we first prove Proposition~\ref{prop:hadamard}.

\begin{customproposition}{7}
\prophadamard
\end{customproposition}
\begin{proof}
\prophadamardproof
\end{proof}

To prove Theorem~\ref{thm:degree_reduction}, we will also need the following proposition.

\newcommand{\propinnerouter}{
For any tensors $\mA$, $\mB$ and $\mC$, of shapes such that $\inner{\mA} {\mB \otimes \mC}$ is well-defined,
\[
  \inner{\mA} {\mB \otimes \mC} = \inner {  \inner{\mA} {\mC}  } {\mB}.
\]
}
\newcommand{\propinnerouterproof}{
To simplify indexing, assume without loss of generality that $\mA$ is a matrix, and that both $\mB$ and $\mC$ are vectors (which implies $\inner{\mA} {\mB \otimes \mC}$ is a scalar).  We have
\begin{align}
  \inner{\mA}{\mB \otimes \mC}
  & = \sum_{ij} \mA_{ij} (\mB \otimes \mC)_{ij} \nonumber \\
  & = \sum_{ij} \mA_{ij} \mB_i \mC_j \nonumber \\
  & = \sum_i \paren{\sum_j \mA_{ij} \mC_j } \mB_i \nonumber \\
  & = \inner{\inner{\mA}{\mC}}{\mB}.
\end{align}
}
\begin{proposition} \label{prop:inner_outer}
\propinnerouter
\end{proposition}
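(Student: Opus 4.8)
The plan is to prove the identity purely by expanding both sides in index notation and observing that the two resulting scalar expressions are the same finite sum, merely reordered. There is no genuine mathematical content beyond bookkeeping: both occurrences of the inner product, on the left and on the right, contract the \emph{trailing} indices (per definition \eqref{eq:inner}), so the entire task is to track which indices of $\mA$ are contracted against $\mB$ and which against $\mC$ on each side, and to confirm these match.

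First I would fix the ranks: say $\mA$ has rank $r$, $\mB$ has rank $p$, and $\mC$ has rank $q$, with $p + q \le r$ so that $\inner{\mA}{\mB \otimes \mC}$ is well-defined; both sides then have rank $r - p - q$. I would name the free indices $i_1, \ldots, i_{r-p-q}$, the summation indices contracted against $\mB$ as $l_1, \ldots, l_p$, and those contracted against $\mC$ as $m_1, \ldots, m_q$.

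Next I would expand the left-hand side using \eqref{eq:inner} followed by the outer-product definition \eqref{eq:outer}, writing the factor $(\mB \otimes \mC)_{l_1, \ldots, l_p, m_1, \ldots, m_q} = \mB_{l_1, \ldots, l_p}\, \mC_{m_1, \ldots, m_q}$, to obtain a double sum over the $l$'s and $m$'s of $\mA_{i_1, \ldots, i_{r-p-q}, l_1, \ldots, l_p, m_1, \ldots, m_q}\, \mB_{l_1, \ldots, l_p}\, \mC_{m_1, \ldots, m_q}$. For the right-hand side I would apply \eqref{eq:inner} twice: the inner contraction $\inner{\mA}{\mC}$ sums over the $m$'s (the last $q$ indices of $\mA$), leaving free indices $i_1, \ldots, i_{r-p-q}, l_1, \ldots, l_p$, and the outer contraction $\inner{\cdot}{\mB}$ then sums over the $l$'s. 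This yields exactly the same double sum, up to the irrelevant order of the scalar factors $\mB$ and $\mC$ and the order in which the two finite sums are taken, so the two sides agree entrywise.

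The main — and essentially only — obstacle is getting the index positions right: one must verify that the trailing $q$ indices of $\mA$ are precisely the ones fed to $\mC$ on \emph{both} sides, and the preceding $p$ indices the ones fed to $\mB$. This is exactly where the structure of the outer product matters, namely that the first $p$ slots of $\mB \otimes \mC$ come from $\mB$ and the last $q$ from $\mC$, matching the order in which the two contractions are peeled off on the right. To keep the argument transparent rather than drowning in subscript tuples, I would, as is standard, reduce without loss of generality to the representative case where $\mA$ is a matrix and $\mB, \mC$ are vectors (so both sides are scalars); the general case is identical with longer index tuples.
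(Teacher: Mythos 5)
Your proposal is correct and matches the paper's proof essentially verbatim: the paper likewise reduces without loss of generality to the case where $\mA$ is a matrix and $\mB$, $\mC$ are vectors, expands both sides via \eqref{eq:inner} and \eqref{eq:outer}, and observes that both equal $\sum_{i,j} \mA_{ij}\mB_i\mC_j$. Your additional remark about which indices of $\mA$ are contracted against $\mB$ versus $\mC$ is exactly the right bookkeeping point, handled implicitly in the paper's index computation.
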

\begin{proof}
\propinnerouterproof
\end{proof}

We are now ready to prove Theorem~\ref{thm:degree_reduction}.
\begin{customthm}{7}
\thmdegreereduction
\end{customthm}
\begin{proof}
\thmdegreereductionproof
\end{proof}

\subsection{Proof of Theorem~\ref{thm:bilinear_tip}}

To prove Theorem~\ref{thm:bilinear_tip}, we will need the following lemma.

\begin{lemma} \label{lem:tip_choice}
Let $\sA(\mZ) = \sum_{i=0}^k \inner{\sA_{[i]}}{\mZ^{\otimes i}}$  be a degree $k$ interval polynomial.  Then, for any tensor $\mZ$,
\[
  \sum_{i=0}^k \inner{\sA_{[i]}}{\mZ^{\otimes i}} = \set { \sum_{i=0}^k \inner{\mA_i}{\mZ^{\otimes i}}: \mA_i \in \sA_{[i]}\ \forall i \in \set{0, 1, \ldots, k}}.
\]  
\end{lemma}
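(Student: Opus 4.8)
The plan is to evaluate the interval polynomial at a fixed tensor $\mZ$ term by term, show that each term is an \emph{exact} set of values via Proposition~\ref{prop:inner_exact}, and then combine these term-wise sets into a single set of sums using the semantics of tensor-interval addition. The key observation that makes everything go through is that, for a fixed $\mZ$, each power $\mZ^{\otimes i}$ is an ordinary tensor rather than a tensor interval.

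First I would fix $\mZ$ and note that each summand $\inner{\sA_{[i]}}{\mZ^{\otimes i}}$ is therefore an inner product of a tensor interval with a \emph{tensor}. This is exactly the situation covered by the exactness half of Proposition~\ref{prop:inner_exact}, so
\[
  \inner{\sA_{[i]}}{\mZ^{\otimes i}} = \set{\inner{\mA_i}{\mZ^{\otimes i}}: \mA_i \in \sA_{[i]}}.
\]
Thus each term on the left-hand side is precisely the set of values obtained by ranging $\mA_i$ over $\sA_{[i]}$. Next I would combine these sets using the operational semantics of tensor-interval addition from \S\ref{sec:semantics}, under which $\sU + \sV = \set{\mU + \mV: \mU \in \sU, \mV \in \sV}$, and argue by induction on the number of terms that
\[
  \sum_{i=0}^k \set{\inner{\mA_i}{\mZ^{\otimes i}}: \mA_i \in \sA_{[i]}} = \set{\sum_{i=0}^k \inner{\mA_i}{\mZ^{\otimes i}}: \mA_i \in \sA_{[i]}\ \forall i}.
\]
Chaining the two displays yields the claim.

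The step that carries the real content is the second one: passing from a sum of sets to a set of sums requires that the coefficient $\mA_i$ in each term may be chosen \emph{independently} of the others. This independence is exactly what the set-of-sums semantics of interval addition encodes, and it is legitimate here precisely because each coefficient interval $\sA_{[i]}$ appears in exactly one summand of the polynomial, so there is no shared interval variable to couple the choices (in contrast to the single-use subtlety underlying Proposition~\ref{prop:dot_product_exact}). I would make this explicit in the inductive step: after grouping the first $k$ terms into a single set via the induction hypothesis, the final coefficient $\mA_k$ is still unconstrained, so adding $\sA_{[k]}$ under its set-of-sums semantics reintroduces exactly the free, independent choice of $\mA_k$. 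No genuine obstacle arises beyond spelling out this independence carefully; both Proposition~\ref{prop:inner_exact} and the addition semantics are available from earlier in the paper.
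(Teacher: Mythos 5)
Your proof is correct and takes essentially the same route as the paper, which derives Lemma~\ref{lem:tip_choice} in one line from Proposition~\ref{prop:inner_exact} together with the exact set-of-sums semantics of tensor interval addition, $\sX + \sY = \set{\mX + \mY: \mX \in \sX, \mY \in \sY}$. Your explicit induction over the summands and your remark on the independence of the coefficient choices (each interval coefficient appearing in exactly one term) simply spell out what the paper leaves implicit.
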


Lemma~\ref{lem:tip_choice} follows from Proposition~\ref{prop:inner_exact}, and the fact that $\sX + \sY = \set {\mX + \mY: \mX \in \sX, \mY \in \sY}$.

\begin{customthm}{8}
\thmbilineartip
\end{customthm}
\begin{proof}
\thmbilineartipproof
\end{proof}

\subsection{Proof of Theorem~\ref{thm:autoboundprop}}

\begin{customthm}{9}
\thmautoboundprop
\end{customthm}
\begin{proof}
\thmautoboundpropproof
\end{proof}

\section{Proof of Theorem~\ref{thm:jensen}}

\begin{customthm}{12}
\thmjensennofootnote
\end{customthm}
\begin{proof}
\thmjensenproof
\end{proof}

\end{appendices}

\end{document}